\title{Balanced Self-Paced Learning for AUC Maximization}
\author{
    Bin Gu\textsuperscript{\rm 1,\rm 2}\equalcontrib,
    Chenkang Zhang\textsuperscript{\rm 2}\equalcontrib,
    Huan Xiong\textsuperscript{\rm 1,\rm 3}\equalcontrib,
    Heng Huang\textsuperscript{\rm 4}
}
\newsavebox\CBox
\newtheorem{theorem}{Theorem}
\newtheorem{lemma}{Lemma} 
\newtheorem{remark}{Remark}
\newtheorem{definition}{Definition} 
\DeclareMathOperator*{\argmin}{arg\,min}
\begin{document}

\maketitle

\begin{abstract} 
	Learning to improve AUC performance is an important topic in machine learning. However,  AUC maximization algorithms may decrease generalization performance due to the noisy data. Self-paced learning is an effective method for handling noisy data. However, existing self-paced learning methods are limited to pointwise learning,  while AUC maximization is a pairwise learning problem. To solve this challenging problem, we innovatively propose a balanced self-paced AUC maximization algorithm (BSPAUC). Specifically, we first provide a statistical objective for self-paced AUC.
	Based on this, we propose our self-paced  AUC maximization formulation, where a novel balanced self-paced regularization term is embedded to ensure that the selected positive and negative samples have  proper  proportions. Specially, the sub-problem with respect to all weight variables may be non-convex in our formulation, while the  one is normally convex in existing self-paced problems. To address this, we propose a doubly cyclic block coordinate descent method.
	More importantly, we prove that the sub-problem with respect to all weight variables converges to a stationary point on the basis of closed-form solutions, and  our BSPAUC converges to a stationary point of our fixed optimization   objective under a mild assumption. Considering both the deep learning and kernel-based implementations, experimental results on several large-scale datasets demonstrate that our BSPAUC has a  better generalization performance than existing state-of-the-art  AUC maximization methods.
\end{abstract}
\section{Introduction}  \label{sec_Introduction}
Learning to improve AUC performance is an important topic in machine learning, especially for imbalanced datasets  \cite{huang2005using,ling2003auc,cortes2004auc}. Specifically, for severely imbalanced binary classification datasets, a classifier may achieve a high prediction accuracy if it predicts all samples to be the dominant class. However, the classifier actually has a poor generalization performance because it cannot properly classify samples from non-dominant class. AUC (area under the ROC curve) \cite{hanley1982meaning}, which measures the probability of a randomly drawn positive sample having a higher decision value than a randomly drawn negative sample \cite{mcknight2010mann}, would be a better evaluation criterion for imbalanced datasets.

Real-world data tend to be massive in quantity, but with quite a few unreliable noisy data that can lead to decreased generalization performance. Many studies have tried to address this, with some degree of success \cite{wu2007robust,safe,zhang2020self}. However, most of these studies only consider the impact of noisy data on accuracy, rather than on AUC. In many cases,  AUC maximization algorithms may decrease generalization performance due to the noisy data. Thus,  how to deal with noisy data in AUC maximization problems is still an open topic.

Since its birth  \cite{kumar2010self}, self-paced learning (SPL)  has attracted increasing attention  \cite{wan2020self,klink2020self,ghasedi2019balanced} because it can simulate the learning principle of humans, \emph{i.e.}, starting with  easy samples and then gradually introducing more complex samples into training. Complex samples  are considered to own larger loss than easy samples, and noise  samples normally have a  relatively  large loss. Thus,  SPL could  reduce the importance  weight of noise  samples because they are treated as complex samples. Under the  learning paradigm of SPL, the model is constantly corrected and  its robustness 
is improved.  Thus, SPL is  an effective method for handling noisy data. Many experimental and theoretical analyses have proved its robustness \cite{meng2017theoretical,liu2018understanding,zhang2020self}.  However, existing SPL methods are limited to pointwise learning,  while AUC maximization is a pairwise learning problem.  

To solve this challenging problem, we innovatively propose a balanced self-paced AUC maximization algorithm (BSPAUC).
Specifically, we first provide an upper bound of expected AUC risk  by the  empirical AUC  risk on training samples obeying pace distribution plus two  more terms  related to SPL. Inspired by this, we propose our balanced self-paced AUC maximization formulation.  In particular, the sub-problem with respect to all weight variables may be non-convex in our formulation, while the  one is normally convex in existing self-paced problems. To solve this challenging difficulty, we propose a doubly cyclic block coordinate descent method to optimize our formulation.

The main contributions of this paper are summarized as follows.
\begin{enumerate}[leftmargin=0.2in]
	\setlength{\parsep}{0ex} 
	\setlength{\topsep}{0ex} 
	\setlength{\itemsep}{0ex} 
	\item Inspired by our statistical explanation for self-paced AUC,  we  propose a balanced self-paced AUC maximization formulation with a  novel balanced self-paced regularization term. To the best of our knowledge, this is the first objective formulation introducing  SPL into the AUC maximization problem.
	\item We propose a doubly cyclic block coordinate descent method to optimize our formulation. Importantly, we give closed-form solutions of the two  weight variable blocks and provide two instantiations of optimizing the  model parameter block  on  kernel learning and deep learning, respectively.
	\item We prove that the sub-problem with respect to all weight variables converges to a stationary point on the basis of closed-form solutions, and our BSPAUC converges to a stationary point of our fixed optimization   objective under a mild assumption.
\end{enumerate}

\section{Self-Paced AUC}\label{sec_stat}

In this section, we first provide a statistical objective for self-paced AUC. Inspired by this, we provide our objective.

\subsection{Statistical Objective}

\textbf{Empirical and Expected AUC Objective for IID Data:}
Let $X$ be a compact subset of $\mathbb{R}^d$, $Y = \{-1, +1\}$ be the label set and $Z = X \times Y$. Given a distribution $P(z)$ and let  $S=\{z_i=(x_i,y_i)\}_{i=1}^n$ be an independent and identically distributed (IID)  training set drawn from $P(z)$, where $x_i \in X$, $y_i \in Y$ and $z_i \in Z$. Thus, empirical  AUC risk on $S$ can be formulated as:
\begin{align} \label{emp}
	R_{emp}(S;f)=\frac{1}{n(n-1)}\sum_{z_i,z_j \in S, z_i \neq z_j} L_f(z_i,z_j).
\end{align}
Here, $f \in \mathcal{F}: \mathbb{R}^d \to \mathbb{R}$ is one real-valued function and the pairwise loss function $L_f(z_i,z_j)$ for AUC is defined as:
\begin{equation*} 
	L_f(z_i,z_j)=
	\left \{\begin{array} {l@{\ \ \textrm{if}  \ \ }l} 0   &y_i=y_j
		\\ \mathbb{I}(f(x_i) \le f(x_j))  &y_i =+1 \& \ y_j=-1\\
		\mathbb{I}(f(x_j) \le f(x_i))  &y_j =+1 \& \ y_i=-1
	\end{array} \right.
\end{equation*}
where $\mathbb{I}(\cdot)$ is the indicator function such that
$\mathbb{I}(\pi)$ equals 1 if $\pi$ is true and 0 otherwise. Further, the expected AUC risk  for the  distribution $P(z)$ can be defined as:
\begin{align}\label{exp}
	R_{exp}(P(z);f):=&\mathbb{E}_{z_1,z_2 \sim P(z)^2 } L_f(z_1,z_2) \nonumber  \\
	=&\mathbb{E}_{S}[R_{emp}(S;f)].
\end{align}

\noindent \textbf{Compound Data:} \ In practice, it is  expensive to collect completely pure dataset because that would involve domain experts to evaluate the quality of collected data. Thus, it is a reasonable assumption that our training set in reality is composed of not only clean target  data but also  a  proportion of noise samples \cite{natarajan2013learning,kang2019robust}. If we denote  the distribution of  clean target data by $P_{target}(z)$,  and  one of noisy data by $P_{noise}(z)$, the distribution for the real training data can be formulated as $P_{train}(z)=\alpha P_{target}(z)+(1-\alpha)P_{noise}(z)$, where $\alpha \in [0,1]$ is a  weight to balance  $P_{target}(z)$ and $P_{noise}(z)$. We also illustrate this compound training data  in Figure \ref{data}. Note that we assume  noise samples normally have a relatively large loss, and thus they are  treated as complex samples in SPL as discussed previously.


\begin{figure}[t] 
	\centering
	\includegraphics[width=0.45\textwidth]{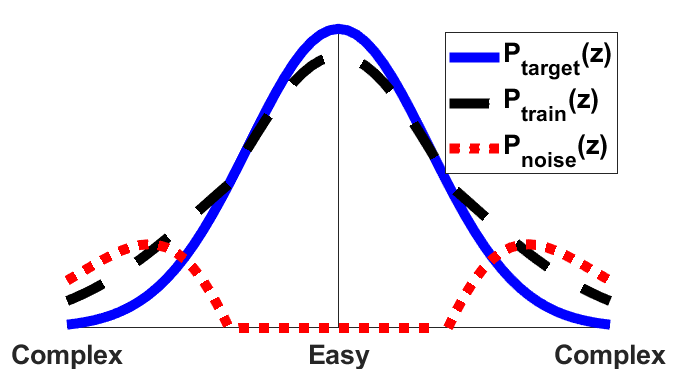}
	\caption{Data distribution on the degree of complexity.} \label{data}
\end{figure}

\noindent \textbf{Upper Bound of Expected AUC for Compound Data:} \
Gong et al. \cite{why} connect the distribution $P_{train}(z)$ of the training set  with the distribution $P_{target}(z)$ of the target set using a  weight function $W_{\lambda}(z)$:
\begin{align}  \label{OriginalF} 
	P_{target}(z)=\frac{1}{\alpha_*}W_{\lambda}(z)P_{train}(z), 
\end{align}
where  $0 \le W_{\lambda}(z) \le 1 $ and $\alpha_*=\int_{Z}W_{\lambda}(z)P_{train}(z)dz$  denotes the normalization factor.  Intuitively, $W_{\lambda}(z)$ gives larger weights to easy samples than to complex samples and with the increase of pace parameter $\lambda$, all samples tend to be assigned larger weights.

Then, Eq. (\ref{OriginalF}) can be  reformulated as:
\begin{align} \label{NewF}
	& P_{train}(z)=\alpha_*P_{target}(z)+(1-\alpha_*)E(z), \\
	& E(z)=\frac{1}{1-\alpha_*}(1-W_{\lambda_*}(z))P_{train}(z). \nonumber
\end{align}
Here, $E(z)$ is related to $P_{noise}(z)$. Based on (\ref{NewF}), we define the pace distribution $Q_{\lambda}(z)$ as:
\begin{align} \label{Q}
	Q_{\lambda}(z)=\alpha_{\lambda}P_{target}(z)+(1-\alpha_{\lambda})E(z),
\end{align}
where $\alpha_{\lambda}$ varies from $1$ to $\alpha_{*}$ with increasing pace parameter $\lambda$.  Correspondingly,  $Q_{\lambda}(z)$ simulates the changing process from $P_{target}(z)$ to $P_{train}(z)$.  Note that $Q_{\lambda}(z)$ can also be regularized into the following formulation:
\begin{align*}   
	Q_{\lambda}(z)  \propto  W_{\lambda}(z)P_{train}(z), 
\end{align*}
where $W_{\lambda}(z)$  through normalizing its maximal value to 1.

We derive the following result on the upper bound of the expected AUC risk. Please refer to Appendix for the proof.
\begin{theorem} \label{theoremUp}
	For any $\delta>0$ and  any $ f \in \mathcal{F}$, with confidence at least $1-\delta$ over a training set $S$, we have:
	\begin{align} \label{analysis}
		&R_{exp}(P_{target};f)  \nonumber \\
	  \leq & \frac{1}{n_{\lambda}(n_{\lambda}-1)} \sum_{z_i,z_j \in S \atop z_i\neq z_j} W_{\lambda}(z_i) W_{\lambda}(z_j) L_f(z_i,z_j)  \nonumber\\
		& +\sqrt{\frac{\ln (1/\delta)}{n_{\lambda}/2}}   +  e_{\lambda}
	\end{align}
	where $n_{\lambda}$ denotes the number of selected samples from the training set and $e_{\lambda}:= R_{exp}(P_{target};f) - R_{exp}(Q_{\lambda};f)$ decreases monotonically from $0$  with the increasing of $\lambda$.
\end{theorem}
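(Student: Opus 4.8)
The plan is to start from the defining identity $R_{exp}(P_{target};f) = R_{exp}(Q_{\lambda};f) + e_{\lambda}$, which is immediate from $e_{\lambda} := R_{exp}(P_{target};f) - R_{exp}(Q_{\lambda};f)$. This splits the theorem into two essentially independent tasks: (i) show that the weighted empirical sum in (\ref{analysis}) upper bounds the expected risk $R_{exp}(Q_{\lambda};f)$ under the pace distribution up to the term $\sqrt{\ln(1/\delta)/(n_{\lambda}/2)}$, and (ii) establish the monotonicity property of $e_{\lambda}$. Adding $e_{\lambda}$ back to the bound from (i) then reproduces (\ref{analysis}) verbatim.

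For task (i), I would first read the weighted sum through the relation $Q_{\lambda}(z) \propto W_{\lambda}(z) P_{train}(z)$ established after (\ref{Q}). Interpreting $W_{\lambda}$ as the (normalized) indicator of the currently selected ``easy'' region, the $n_{\lambda}$ selected samples are, conditionally on their count, i.i.d.\ draws from $Q_{\lambda}$, and the normalized quantity $\frac{1}{n_{\lambda}(n_{\lambda}-1)}\sum_{z_i\neq z_j} W_{\lambda}(z_i)W_{\lambda}(z_j)L_f(z_i,z_j)$ is exactly the empirical AUC risk computed on this subsample. By (\ref{exp}) this pairwise empirical risk is an unbiased estimator of $R_{exp}(Q_{\lambda};f)$. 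The essential structural observation is that it is a second-order U-statistic in the selected samples: its summands share indices and are therefore not independent, so I cannot use the elementary Hoeffding bound for sums of independent variables. Instead I would invoke Hoeffding's inequality for U-statistics: since $L_f$ takes values in $[0,1]$, the order-$2$ kernel is bounded, the effective sample size is $\lfloor n_{\lambda}/2\rfloor$, and a one-sided deviation bound of the form $\exp(-\tfrac{n_{\lambda}}{2}t^2)$ holds. Setting this equal to $\delta$ and solving for $t$ produces the term $\sqrt{\ln(1/\delta)/(n_{\lambda}/2)}$ and yields, with confidence $1-\delta$, that $R_{exp}(Q_{\lambda};f)$ is at most the weighted empirical sum plus this term.

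For task (ii), I would analyze $R_{exp}(Q_{\lambda};f)$ as a function of the mixing weight $\alpha_{\lambda}$ via the interpolation $Q_{\lambda}=\alpha_{\lambda}P_{target}+(1-\alpha_{\lambda})E$ from (\ref{Q}). Because the AUC loss is symmetric and $R_{exp}(\cdot;f)$ is a double integral against the distribution, $R_{exp}(Q_{\lambda};f)$ expands as a quadratic in $\alpha_{\lambda}$ built from the pairwise risks among $P_{target}$ and the residual noise component $E$. At the minimal pace $\alpha_{\lambda}=1$ we have $Q_{\lambda}=P_{target}$, hence $e_{\lambda}=0$; as $\lambda$ increases, $\alpha_{\lambda}$ decreases toward $\alpha_{*}$ and the pointwise-increasing weights $W_{\lambda}$ shift the normalized mass of $Q_{\lambda}$ toward the high-loss noise component, so $R_{exp}(Q_{\lambda};f)$ increases and $e_{\lambda}$ decreases from $0$. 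I would make this rigorous under the modeling assumption already invoked in the paper, namely that noise samples carry relatively larger loss, which forces the cross- and noise-noise pairwise risks to dominate the target-target risk and renders the quadratic monotone on the relevant range $[\alpha_{*},1]$.

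I expect the main obstacle to be task (i): correctly recognizing the pairwise empirical risk as a U-statistic rather than a sum of independent terms, and justifying that the selected samples behave as an i.i.d.\ sample from $Q_{\lambda}$ so that the U-statistic Hoeffding inequality applies with the claimed effective sample size. The dependence of the selection on the data and the randomness of $n_{\lambda}$ must be dealt with by conditioning on the selected subset before applying concentration. By contrast, the monotonicity of $e_{\lambda}$ is comparatively soft: it is a structural consequence of the reweighting interpolation together with the stated noise-dominance assumption, and does not require any concentration argument.
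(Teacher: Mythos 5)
Your proposal is correct and its skeleton---the decomposition $R_{exp}(P_{target};f)=R_{exp}(Q_{\lambda};f)+e_{\lambda}$, a concentration bound for the empirical AUC risk of a sample drawn from $Q_{\lambda}$, and a soft monotonicity argument for $e_{\lambda}$---coincides with the paper's proof (Lemmas \ref{TwoR} and \ref{old} in Appendix C). The one genuine difference is the concentration tool. The paper proves Lemma \ref{TwoR} via McDiarmid's bounded-differences inequality: replacing a single sample changes $R_{emp}$ by at most $2/n$, which yields the tail $\exp(-n\epsilon^2/2)$ and hence exactly the term $\sqrt{\ln(1/\delta)/(n/2)}$. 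You instead invoke Hoeffding's inequality for order-two U-statistics with effective sample size $\lfloor n_{\lambda}/2\rfloor$; this is equally valid (the sharp U-statistic tail $\exp(-2\lfloor n/2\rfloor\epsilon^2)$ is in fact stronger in the exponent than the form $\exp(-\tfrac{n_{\lambda}}{2}\epsilon^2)$ you state, so your bound certainly holds) and recovers the same term. What each buys: McDiarmid never needs to name the U-statistic structure and absorbs the dependence among summands entirely into the $2/n$ sensitivity computation, while the U-statistic route makes that dependence explicit and generalizes more readily to higher-order kernels. On the remaining points you are at the same level of rigor as the paper: the paper also identifies the weighted sum with the empirical risk of a set $S_Q$ ``subject to'' $Q_{\lambda}$ by approximating $P_{train}$ with the empirical Dirac measure, and it does not address the data-dependence of the selection or the randomness of $n_{\lambda}$---issues you correctly flag as the delicate part; and the monotone decrease of $e_{\lambda}$ from $0$ is simply asserted in the paper from the definition of $Q_{\lambda}$ in Eq.~(\ref{Q}), whereas you at least sketch the interpolation-in-$\alpha_{\lambda}$ argument under the noise-dominance assumption. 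Neither of these is a gap relative to the paper's own standard.
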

We will give a detailed  explanation on the three terms of the upper bound (\ref{analysis}) for Theorem \ref{theoremUp} as follows.
\begin{enumerate}[leftmargin=0.2in]
	\item The first term corresponds to the empirical AUC  risk on training samples obeying pace distribution $Q_{\lambda}$. With increasing $\lambda$, the weights $W_{\lambda}(z)$ of complex samples gradually increase and these complex samples are gradually involved in training. 
	\item The second term reflects the expressive capability of training samples on the pace distribution $Q_{\lambda}$. With  increasing $\lambda$, more samples are considered, the  pace distribution $Q_{\lambda}$ can be expressed better.
	\item The last term measures the generalization capability of the learned model. As shown in Eq. (\ref{Q}), with increasing $\lambda$, $\alpha_{\lambda}$ gets smaller and the generalization of the learned model becomes worse. This is due to the gradually more evident deviation $E(z)$ from  $Q_{\lambda}$ to $P_{target}$.
\end{enumerate}

Inspired by the upper bound  (\ref{analysis}) and the  above explanations,  we will propose our self-paced AUC maximization formulation in the next subsection.


\subsection{Optimization Objective}

First of all, we give the definition of some necessary notations.  Let  $\theta$  represent the model parameters,   $n$ and $m$  denote the number of positive and negative samples respectively,  $\mathbf{v} \in [0,1]^n$ and   $\mathbf{u} \in [0,1]^m$ be the weights of positive and negative samples respectively,   $\lambda$ be the pace parameter for controlling the learning pace, and $\mu$ balance  the proportions of  selected  positive and negative samples. The zero-one loss is replaced by the  pairwise hinge loss which is  a common surrogate loss in AUC maximization problems \cite{brefeld2005auc,zhao2011online,gao2015consistency}. Then, inspired by the upper bound  (\ref{analysis}), we have the following optimization objective:
\begin{align}  \label{BSPAUCOF} 
	& \mathcal{L}(\theta,\mathbf{v},\mathbf{u};\lambda)  \nonumber \\ 
	=&   \underbrace{\frac{1}{nm} \sum_{i=1}^{n}\sum_{j=1}^{m}v_i u_j \xi_{ij}}_{\mathbf{1}}  \underbrace{-  \lambda \left(\frac{1}{n}\sum_{i=1}^{n} v_i+\frac{1}{m}\sum_{j=1}^{m} u_j \right)}_{\mathbf{2}} \nonumber  \\
	& \underbrace{ + \tau \Omega(\theta)}_{\mathbf{3}}   \underbrace{+ \mu \left(\frac{1}{n}\sum_{i=1}^{n} v_i-\frac{1}{m}\sum_{j=1}^{m} u_j \right)^2}_{\mathbf{4}}        \\
	& \ s.t.  \ \mathbf{v}\in [0,1]^n,\mathbf{u}\in [0,1]^m    \nonumber
\end{align}where $\xi_{ij}=\max \{1-f(x^+_i)+f(x^-_j), 0 \}$ is the pairwise hinge loss and $\Omega(\theta)$ is the regularization term to  avoid overfitting. Specifically, $\theta$ is a matrix composed of weights and biases of each layer and $\Omega(\theta)$ is formulated as $\frac{1}{2}||\theta||_F^2$ in the deep learning setting and in the kernel-based setting, $\Omega(\theta)$ is formulated as $\frac{1}{2}||\theta||_\mathcal{H}^2$, where $||\cdot||_{\mathcal{H}}$ denotes the norm in a reproducing kernel Hilbert space (RKHS) $\mathcal{H}$.

As the explanation about the upper bound (\ref{analysis}) shows,  the upper bound is composed of  three aspects:  empirical risk,  sample expression ability and  model generalization ability. Inspired by this, we construct our optimization objective (\ref{BSPAUCOF}) which  also considers the above three aspects. Specifically,  the term \textbf{1} in Eq. (\ref{BSPAUCOF}) corresponds to  the (weighted) empirical risk . The term \textbf{2} in Eq. (\ref{BSPAUCOF}) corresponds to the  sample expression ability.  As we explained before,  sample expression ability is related to the number of selected samples and  pace parameter $\lambda$ in term \textbf{2} is used   to control the number of selected samples.  The term \textbf{3} in Eq. (\ref{BSPAUCOF}) corresponds to the  model generalization ability which is  a common model regularization  term  used to avoid model overfitting.

In  addition, the   term \textbf{4} in  Eq. (\ref{BSPAUCOF}) is our new proposed balanced self-paced regularization term, which is used to balance  the proportions of  selected  positive and negative samples as   Figure \ref{BorNot}  shows. Specifically, if   this term is not used, only the degree of sample complexity is considered and this would lead to severe  imbalance between the proportions of selected positive and negative samples in practice. But the proportions of selected positive and negative samples  could be ensured properly if   this term is enforced.

\begin{figure}[t] 
	\centering
	\includegraphics[width=0.4\textwidth]{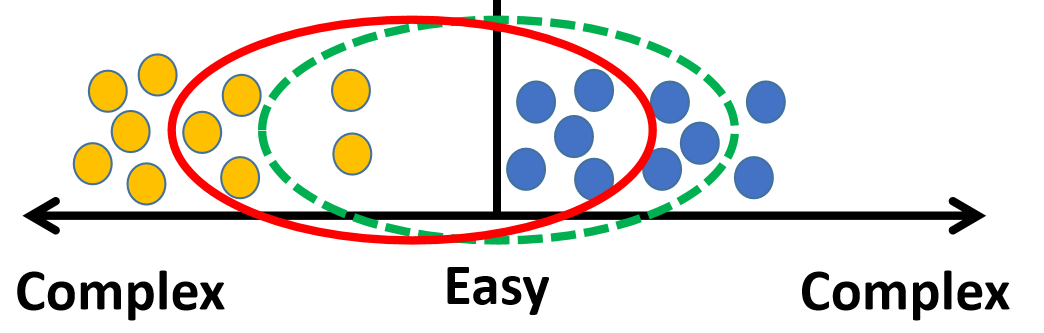}
	\caption{Contrast between whether or not using the balanced self-paced regularization term. (The yellow ball represents the positive sample, and the blue ball represents the negative sample.  The green dotted ellipsoid  represents the selected samples without using the term, and the red solid ellipsoid denotes the selected samples by using the term.)}
	\label{BorNot}
	
\end{figure}

\section{BSPAUC  Algorithm}\label{sec_alg}

In this section, 
we  propose our BSPAUC algorithm (\emph{i.e.}, Algorithm \ref{alg1}) to solve the problem (\ref{BSPAUCOF}). Different from  traditional SPL algorithms \cite{wan2020self,klink2020self,ghasedi2019balanced} which have two blocks of variables, our problem (\ref{BSPAUCOF}) has three blocks of variables which makes the optimization process more challenging. To address this issue, we propose a doubly cyclic block coordinate descent algorithm as shown in Algorithm \ref{alg1}, which consists of two layers of  cyclic block coordinate descent algorithms. The outer layer cyclic block coordinate descent procedure (\emph{i.e.}, lines 2-9 of Algorithm \ref{alg1}) follows the general optimization procedure of SPL to  optimize  all weight variables  and model parameters alternatively. The inner layer cyclic block coordinate descent procedure (\emph{i.e.}, lines 3-6 of Algorithm \ref{alg1}) is aimed to optimize the two blocks of weight variables (\emph{i.e.}, $\mathbf{v}$ and $\mathbf{u}$)  alternatively.

In the following, we revolve around the outer layer cyclic block coordinate descent procedure to discuss how to optimize all weight variables (\emph{i.e.}, $\mathbf{v}$ and $\mathbf{u}$) and model parameters  (\emph{i.e.}, $\theta$) respectively.

%

\begin{algorithm}   [htb]
	\caption{Balanced self-paced learning for AUC maximization}
	\begin{algorithmic}[1]   \label{BSPAUC}
		\REQUIRE The training set, $\theta^0$, $T$, $\lambda^0$, $c$, $\lambda_{\infty}$ and $\mu$.\\ 
		\STATE Initialize $\mathbf{v}^0= \mathbf{1}_n$ and $\mathbf{u}^0= \mathbf{1}_m$.
		\FOR { $t=1, \cdots ,T$}
		\REPEAT
		\STATE Update $\mathbf{v}^{t}$ through  Eq. (\ref{solutionofv}).
		\STATE Update $\mathbf{u}^{t}$ through  Eq. (\ref{solutionofu}).
		\UNTIL{Converge to a  stationary point.}
		\STATE Update $\theta^{t}$ through solving (\ref{f}).
		\STATE $\lambda^{t}=\min \{ c\lambda^{t-1},\lambda_{\infty} \}$.
		\ENDFOR
		\ENSURE The model solution $\theta$.\\ 
	\end{algorithmic}
	\label{alg1}
\end{algorithm}
\begin{algorithm}  [htb]
	\caption{Deep learning implementation of solving Eq.  (\ref{f})}
	\begin{algorithmic}[1]    \label{Deeplearning}
		\REQUIRE $T,\mathbf{\eta},\tau, \hat{X}^+, \hat{X}^-, \pi,\theta^0$.
		\FOR { $i=1, \cdots ,T$}
		\STATE Sample $\hat{x}^+_1,...,\hat{x}^+_\pi$ from $\hat{X}^+$.
		\STATE Sample $\hat{x}^-_1,...,\hat{x}^-_\pi$ from $\hat{X}^-$.
		\STATE  Calculate $f({x})$.
		\STATE Update $\theta$ by the following formula:
		\begin{align*}  
			\theta = &(1-\eta_i\tau)\theta
			-\frac{\eta_i}{\pi}\sum_{j=1}^\pi v_j u_j \frac{ \partial \xi_{jj}} { \partial \theta } . \qquad \qquad
		\end{align*}
		\ENDFOR
		\ENSURE $\theta$.
	\end{algorithmic}
\end{algorithm}

\subsection{Optimizing $\mathbf{v}$ and $\mathbf{u}$}
Firstly, we consider the sub-problem with respect to  all weight variables  which is normally convex in existing self-paced problems. However, if we fix $\theta$ in Eq. (\ref{BSPAUCOF}), the sub-problem   with respect to  $\mathbf{v}$ and $\mathbf{u}$  could be non-convex as shown in Theorem \ref{Non-con}.   Please refer to Appendix for the proof of Theorem \ref{Non-con}.
\begin{theorem} \label{Non-con}
	If we fix $\theta$ in Eq. (\ref{BSPAUCOF}), the sub-problem   with respect to $\mathbf{v}$ and $\mathbf{u}$  maybe  non-convex.
\end{theorem}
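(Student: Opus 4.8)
The plan is to pin down which of the four terms in $\mathcal{L}(\theta,\mathbf{v},\mathbf{u};\lambda)$ can destroy convexity once $\theta$ is frozen, and then to exhibit a concrete instance whose Hessian in $(\mathbf{v},\mathbf{u})$ is indefinite. First I would observe that with $\theta$ fixed the regularizer (third term) is a constant, the pace penalty (second term) is affine in $(\mathbf{v},\mathbf{u})$, and the balanced self-paced term (fourth term) is the square of an affine function and hence convex, with a rank-one positive-semidefinite Hessian. Consequently the only candidate for non-convexity is the weighted empirical-risk term $\frac{1}{nm}\sum_{i,j} v_i u_j\,\xi_{ij}$, in which every $\xi_{ij}=\max\{1-f(x^+_i)+f(x^-_j),0\}$ is now a fixed nonnegative constant. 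This term is bilinear in $(\mathbf{v},\mathbf{u})$, and bilinear forms are the classical source of indefinite Hessians.

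Next I would assemble the full Hessian $H$ of $\mathcal{L}$ with respect to the stacked variable $(\mathbf{v},\mathbf{u})\in\mathbb{R}^{n+m}$. Writing $\Xi=(\xi_{ij})$, the empirical-risk term contributes only the off-diagonal block $\frac{1}{nm}\Xi$ (and its transpose), while the balance term contributes the rank-one PSD matrix $2\mu\,\nabla a\,(\nabla a)^{\top}$ with $a=\frac{1}{n}\mathbf{1}^{\top}\mathbf{v}-\frac{1}{m}\mathbf{1}^{\top}\mathbf{u}$. For any probe direction $\mathbf{d}=(\mathbf{d}_v,\mathbf{d}_u)$ the quadratic form then separates cleanly as $\mathbf{d}^{\top}H\mathbf{d}=2\mu\bigl(\tfrac{1}{n}\mathbf{1}^{\top}\mathbf{d}_v-\tfrac{1}{m}\mathbf{1}^{\top}\mathbf{d}_u\bigr)^{2}+\frac{2}{nm}\mathbf{d}_v^{\top}\Xi\,\mathbf{d}_u$. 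The key idea is to choose $\mathbf{d}$ with $\frac{1}{n}\mathbf{1}^{\top}\mathbf{d}_v=\frac{1}{m}\mathbf{1}^{\top}\mathbf{d}_u$, which annihilates the convex balance contribution and leaves $\mathbf{d}^{\top}H\mathbf{d}=\frac{2}{nm}\mathbf{d}_v^{\top}\Xi\,\mathbf{d}_u$; this makes the argument valid for every value of $\mu$, not merely the degenerate choice $\mu=0$.

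The final step is to force this residual cross term to be negative by a suitable choice of data and $\theta$. Taking the smallest informative case $n=m=2$ with $\mathbf{d}_v=\mathbf{d}_u=(1,-1)$, the form collapses to $\tfrac{1}{2}(\xi_{11}-\xi_{12}-\xi_{21}+\xi_{22})$, so it suffices to produce function values with $\xi_{11}+\xi_{22}<\xi_{12}+\xi_{21}$. Here I would highlight the crucial point: the pre-hinge quantities $g_{ij}=1-f(x^+_i)+f(x^-_j)$ obey the additive identity $g_{11}+g_{22}=g_{12}+g_{21}$, so without the hinge this cross term would vanish identically; it is precisely the nonlinearity $\max\{\cdot,0\}$ that breaks the balance. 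Concretely, choosing $\theta$ so that $f(x^+_1)=f(x^-_1)=2$ and $f(x^+_2)=f(x^-_2)=0$ yields $\xi_{11}=\xi_{22}=1$, $\xi_{12}=0$, $\xi_{21}=3$, whence the cross term equals $-\tfrac12<0$. Thus $H$ has a strictly negative direction at an interior point of $[0,1]^{n+m}$, the objective fails to be convex in $(\mathbf{v},\mathbf{u})$, and the claim follows.

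I expect the main obstacle to be this last construction: one cannot simply posit an arbitrary loss matrix, because the entries $\xi_{ij}$ are tied together through the rank-one additive structure their hinge inputs inherit. The insight that resolves it is that the convexity of $\max\{\cdot,0\}$ lets the anti-diagonal losses dominate the diagonal ones even under that constraint, which is exactly what renders the bilinear block strong enough to overcome the PSD balance term. Everything preceding it is a routine Hessian computation.
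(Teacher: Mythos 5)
Your proof is correct, but it takes a genuinely different route from the paper's. The paper disproves convexity via the midpoint inequality: it evaluates $\mathcal{K}$ at the two boundary points $(\mathbf{v}^1,\mathbf{u}^1)=(\mathbf{1},\mathbf{0})$ and $(\mathbf{v}^2,\mathbf{u}^2)=(\mathbf{0},\mathbf{1})$, where the bilinear term vanishes and the balance term equals $\mu$, and at their midpoint, where the balance term vanishes; the resulting gap is $\frac{1}{4nm}\sum_{i,j}\xi_{ij}-\mu$, which need not be nonpositive, so convexity can fail. That argument is shorter and needs no differentiability, but its witness of non-convexity only fires when the average loss exceeds $4\mu$; for large $\mu$ that particular test is silent. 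You instead work with the (constant) Hessian of the quadratic $\mathcal{K}$, restrict to the hyperplane $\frac{1}{n}\mathbf{1}^{\top}\mathbf{d}_v=\frac{1}{m}\mathbf{1}^{\top}\mathbf{d}_u$ to annihilate the rank-one PSD contribution of the balance term, and then exhibit an explicit $2\times 2$ instance with $\xi_{11}-\xi_{12}-\xi_{21}+\xi_{22}=-1<0$. This buys two things the paper's proof does not give: non-convexity for \emph{every} $\mu\ge 0$ rather than only for $\mu$ small relative to the losses, and a structural diagnosis --- the pre-hinge quantities $1-f(x_i^+)+f(x_j^-)$ satisfy an additive identity that would make your chosen cross term vanish, so it is precisely the hinge nonlinearity that creates the indefinite direction. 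Your Hessian bookkeeping (off-diagonal block $\frac{1}{nm}\Xi$, quadratic form $2\mu(\nabla a^{\top}\mathbf{d})^2+\frac{2}{nm}\mathbf{d}_v^{\top}\Xi\,\mathbf{d}_u$) and the numerical check $\xi_{11}=\xi_{22}=1$, $\xi_{12}=0$, $\xi_{21}=3$ are all correct, and since the Hessian is constant the negative direction at an interior point of $[0,1]^{n+m}$ does establish non-convexity on the feasible set. Both arguments prove the stated ``may be non-convex'' claim; yours is the more informative of the two.
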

In order to  address the  non-convexity of sub-problem, we further divide all weight variables into two disjoint  blocks, \emph{i.e.},  weight variables $\mathbf{v}$ of positive samples and  weight variables $\mathbf{u}$ of negative samples. Note that the sub-problems \emph{w.r.t.} $\mathbf{v}$ and $\mathbf{u}$ respectively are convex. Thus,  we can  solve the following two convex sub-problems to update $\mathbf{v}$ and $\mathbf{u}$  alternatively
\begin{align}  
	\mathbf{v}^{t}= \argmin \limits_{\mathbf{v} \in [0,1]^n}  \  \mathcal{L}(\mathbf{v};\theta,\mathbf{u},\lambda) ,   \label{v} \\
	\mathbf{u}^{t}= \argmin \limits_{\mathbf{u} \in [0,1]^m}  \  \mathcal{L}(\mathbf{u};\theta,\mathbf{v},\lambda).    \label{u}
\end{align}
We derive the closed-form  solutions of the optimization problems  (\ref{v}) and (\ref{u}) respectively in the following theorem. Note that  sorted index represents the index of the sorted loss set $\{l_1, l_2, \dots \}$ which satisfies $l_i \leq l_{i+1}$.  Please refer to Appendix for the detailed proof.


\begin{theorem} \label{solutionofvu}
	The following formula gives one global optimal solution for problem  \eqref{v}:  
	\begin{equation} \label{solutionofv} 
		\left \{\begin{array} {ll} 
			&v_p=1   { \ \ \textrm{if}  \ \ }   l^+_p <  \lambda - 2\mu \left(\frac{p}{n}-\frac{\sum_{j=1}^m u_j}{m} \right)
			\\ &v_p=n  \left( \frac{\sum_{j=1}^m u_j}{m}-\frac{l^+_p - \lambda}{2 \mu}-\frac{p-1}{n} \right)  {\textrm{otherwise} }  
			\\
			&v_p=0   {\ \ \textrm{if}  \ \ }  l^+_p > \lambda - 2\mu \left(\frac{p-1}{n}-\frac{\sum_{j=1}^m u_j}{m} \right) 
		\end{array} \right.
	\end{equation}
	where  $p \in \{1,...,n\}$ is the sorted index based on the loss values $l^+_p=\frac{1}{m}\sum_{j=1}^m u_j\xi_{pj}$. \\
	The following formula gives one global optimal solution for problem \eqref{u}:  
	\begin{equation} \label{solutionofu} 
		\left \{\begin{array} {ll}
			&u_q=1   {\ \ \textrm{if}  \ \ }   l^-_q <  \lambda - 2\mu \left(\frac{q}{m}-\frac{\sum_{i=1}^n v_i}{n}\right) 
			\\
			&u_q=m \left(\frac{\sum_{i=1}^n v_i}{n}-\frac{l^-_q - \lambda}{2 \mu}-\frac{q-1}{m}\right) {  \textrm{otherwise}}  
			\\
			&u_q=0   {\ \ \textrm{if}  \ \ } l^-_q > \lambda - 2\mu \left(\frac{q-1}{m}-\frac{\sum_{i=1}^n v_i}{n}\right) 
		\end{array} \right.
	\end{equation}
	where  $q \in \{1,...,m\}$ is the sorted index based on the loss values $l^-_q=\frac{1}{n}\sum_{i=1}^n v_i\xi_{iq}$.
\end{theorem}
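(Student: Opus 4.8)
The plan is to treat \eqref{v} as a convex program in $\mathbf{v}$ and certify the stated closed form through its KKT conditions, then obtain \eqref{u} by a symmetric argument. First I would substitute the fixed blocks $\theta$ and $\mathbf{u}$ into \eqref{BSPAUCOF} and drop the terms that are constant in $\mathbf{v}$, writing the objective of \eqref{v} as
\[
g(\mathbf{v}) = \frac{1}{n}\sum_{p=1}^n v_p\, l^+_p - \lambda V + \mu (V-U)^2,\quad V:=\tfrac1n\textstyle\sum_p v_p,\; U:=\tfrac1m\textstyle\sum_j u_j,
\]
with $l^+_p=\frac1m\sum_j u_j\xi_{pj}$ as in the statement. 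Since $V$ is linear in $\mathbf{v}$ and $t\mapsto\mu(t-U)^2$ is convex, $g$ is convex on the box $[0,1]^n$, whose constraints are affine, so the KKT system is both necessary and sufficient for global optimality. The per-coordinate gradient is $\partial g/\partial v_p=\frac1n\big[\,l^+_p-\lambda+2\mu(V-U)\,\big]$.

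Next I would record the structural fact that makes the closed form well defined. Assume the indices are sorted so that $l^+_1\le\dots\le l^+_n$ and set $s_p:=n\big(U-\frac{l^+_p-\lambda}{2\mu}-\frac{p-1}{n}\big)$, which is exactly the value appearing in the middle (``otherwise'') branch of \eqref{solutionofv}. A one-line computation gives $s_p-s_{p+1}=\frac{n}{2\mu}\big(l^+_{p+1}-l^+_p\big)+1\ge 1$, so the sequence $s_p$ is strictly decreasing and drops by at least $1$ at each step. Consequently, clipping each $s_p$ into $[0,1]$---which is precisely what the three cases of \eqref{solutionofv} encode, since $s_p>1$, $0\le s_p\le1$, $s_p<0$ correspond respectively to the three listed conditions on $l^+_p$---produces a monotone vector $\mathbf{v}^*$ made of a block of $1$'s, then at most one fractional entry $v^*_{p^*}\in(0,1)$, then a block of $0$'s.

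Then I would verify KKT at $\mathbf{v}^*$. In the generic case with a fractional index $p^*$, the preceding $p^*-1$ ones together with $v^*_{p^*}=s_{p^*}$ give $nV^*=(p^*-1)+s_{p^*}=n\big(U-\frac{l^+_{p^*}-\lambda}{2\mu}\big)$, hence $2\mu(V^*-U)=\lambda-l^+_{p^*}$. Substituting into the gradient yields $\partial g/\partial v_p=\frac1n\big(l^+_p-l^+_{p^*}\big)$, which is $\le0$ for $p<p^*$ (where $v^*_p=1$), $=0$ at $p=p^*$, and $\ge0$ for $p>p^*$ (where $v^*_p=0$) by the sorted order, so all stationarity and complementarity conditions hold and $\mathbf{v}^*$ is a global minimizer. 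The two boundary cases where no fractional index exists (all entries $1$, or all $0$) are handled identically with $V^*$ pinned to $1$ or $0$ and the gradient signs checked directly. Finally, \eqref{u} follows verbatim after exchanging the roles of $(\mathbf{v},n)$ and $(\mathbf{u},m)$ and replacing $l^+_p$ by $l^-_q=\frac1n\sum_i v_i\xi_{iq}$, giving \eqref{solutionofu}.

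The main obstacle is the self-consistency issue just flagged: each coordinate's optimality condition involves the \emph{global} quantity $V$, yet the formula prescribes $v_p$ from $l^+_p$ and the index $p$ alone, with the middle branch implicitly presuming exactly $p-1$ preceding fully selected samples. The monotonicity estimate $s_p-s_{p+1}\ge1$ is what resolves this, guaranteeing that the block structure implied by the three branches is the true pattern of $\mathbf{v}^*$, so that the $V^*$ computed from that pattern feeds back consistently into every KKT inequality. I would therefore establish that estimate first and then lean on convexity (sufficiency of KKT) to avoid any separate global-optimality argument.
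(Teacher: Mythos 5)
Your proof is correct, but it takes a genuinely different route from the paper's. The paper works forward from the optimization problem: it first shows by exchange arguments (swapping $v_i$ and $v_j$ when $v_i<v_j$ for $i<j$, then redistributing mass between adjacent coordinates in the two cases $v_i+v_{i+1}\ge 1$ and $v_i+v_{i+1}<1$) that some minimizer has the form $(1,\dots,1,v_p,0,\dots,0)$, and then completes the square in $F(\mathbf{v})=\frac1n(v_p+p-1+\frac{nb_p}{2}-nQ)^2+\text{const}$ and runs a three-case analysis on $v_p\in\{1\}\cup(0,1)\cup\{0\}$ to extract the thresholds. You instead construct the candidate directly from the stated formula and certify it via KKT, which is legitimate since the subproblem is convex on a box; your key lemma $s_p-s_{p+1}=\frac{n}{2\mu}(l^+_{p+1}-l^+_p)+1\ge 1$ is the genuinely new ingredient, and it is exactly what resolves the self-consistency worry (that the middle branch presumes $p-1$ preceding ones) which the paper handles implicitly through its structural reduction. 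Your route is shorter and makes the sufficiency of the verification explicit, at the cost of not showing where the formula comes from; the paper's route derives the formula but leaves the final ``if and only if'' case analysis somewhat more laborious. One small repair to yours: the ``no fractional index'' situation is not only the all-ones or all-zeros vectors --- since the gap is $\ge 1$ you can also land on $s_{p_0}\ge 1$ and $s_{p_0+1}\le 0$ with $V^*=p_0/n$, a genuine mix of ones and zeros with no interior entry; the same two inequalities $l^+_{p_0}\le \lambda-2\mu(V^*-U)\le l^+_{p_0+1}$ fall out of $s_{p_0}\ge 1$ and $s_{p_0+1}\le 0$ and the KKT signs follow from the sorted order, so the fix is immediate, but the case should be listed.
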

In this case, the solution (\ref{solutionofv}) of problem ({\ref{v}}) implies our advantages. Obviously, a sample with a loss greater/less than the threshold, \emph{i.e.}, $ \lambda - 2\mu (\frac{p-1}{n}-\frac{\sum_{j=1}^m u_j}{m})$ is ignored/involved in current training. In particular, the threshold is also a function of the sorted index, and consequently  decreases as the sorted index  increases. In this case, easy samples with less loss are given more preference. Besides, the proportion, \emph{i.e.}, $\frac{\sum_{j=1}^m u_j}{m}$  of selected negative samples  also affects the threshold. This means that the higher/lower the proportion of  selected negative  samples is, the more/fewer positive samples will be assigned high weights. Because of this, the proportions of selected positive and negative samples can be guaranteed to be balanced.  What's more, we can easily find that the solution (\ref{solutionofu}) of problem ({\ref{u}}) yields similar conclusions. In summary, our algorithm can not only give preference to  easy samples, but also ensure that the selected positive and negative samples have  proper  proportions.

\subsection{Optimizing $\theta$}

In this step, we fix $\mathbf{v},\mathbf{u}$ to update $\theta$:
\begin{equation} 
	\label{f}
	\theta^{t}= \argmin \limits_{ \theta}  \frac{1}{nm} \sum_{i=1}^{n}\sum_{j=1}^{m}v_i u_j \xi_{ij}
	+\tau \Omega(\theta)  + \text{const},
\end{equation}
where $\xi_{ij}=\max \{ 1-f(x^+_i)+f(x^-_j),0 \} $. Obviously, this is a  weighted AUC maximization problem. We provide two instantiations of optimizing the problem on kernel learning and deep learning settings, respectively.

For the deep learning implementation, we  compute the gradient on random pairs of weighted  samples which  are selected  from two subsets  $\hat{X}^+$ and $\hat{X}^-$  respectively.  $\hat{X}^+$ is a set of selected positive samples with weights $\hat{x}^+=(v_i,x_i^+), \forall v_i >0$ and $\hat{X}^-$ is a set of selected negative samples with weights $\hat{x}^-=(u_j, x_j^-), \forall u_j >0$.  In this case, we introduce the weighted batch AUC loss:
\begin{align*}
	\sum_{j=1}^\pi v_j u_j  \xi_{jj} =\sum_{j=1}^\pi v_j u_j  \max \{1-f(x^+_j)+f(x^-_j), 0 \} ,
\end{align*}and obtain Algorithm \ref{Deeplearning} by applying the  doubly stochastic gradient descent  method (DSGD) \cite{gu2019scalable} \emph{w.r.t.} random  pairs of  weighted samples, where $\eta$ means the learning rate.

For the kernel-based implementation,  we apply random Fourier feature method to approximate the kernel function \cite{rahimi2008random,dai2014scalable} for  large-scale problems. The mapping function of $D$ random features is defined as
\begin{align*}
	\phi_{\omega}(x)=\sqrt{1/D}[ &\cos(\omega_1x),\ldots,\cos(\omega_Dx),  \\ &\sin(\omega_1x),\ldots,\sin(\omega_Dx)]^T, 
\end{align*}where  $\omega_i$ is randomly sampled according to the density function $p(\omega)$ associated with $k(x,x')$ \cite{odland2017fourier}.  Then, based on the  weighted batch AUC loss  and the random feature mapping function $\phi_{\omega}(x)$, we obtain Algorithm 3  by applying the  triply stochastic gradient descent  method (TSGD) \cite{TSAM} \emph{w.r.t.}  random  pairs of weighted samples and random features which can be found in Appendix.

\section{Theoretical Analysis}


In this section, we  prove the convergence of our algorithms and all the proof details are available in  Appendix.

For the sake of clarity,  we define $\mathcal{K}(\mathbf{v},\mathbf{u})=\mathcal{L}(\mathbf{v},\mathbf{u};\theta,\lambda)$ as the sub-problem of (\ref{BSPAUCOF}) where $\theta$ and $\lambda$ are fixed, and then  prove that  $\mathcal{K}$  converges to  a stationary point based on the closed-form solutions (\emph{i.e.}, Theorem \ref{solutionofvu}).
\begin{theorem} \label{theormKstation}
	With the inner layer cyclic block coordinate descent procedure (\emph{i.e.}, lines 3-6 of Algorithm \ref{alg1}), the sub-problem $\mathcal{K}$ with respect to all weight variables converges to a   stationary point.
\end{theorem}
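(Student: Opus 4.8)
The plan is to recognize that the inner layer procedure (lines 3--6) is exactly a two-block coordinate descent on $\mathcal{K}(\mathbf{v},\mathbf{u})$, where each block update is a \emph{global} minimizer of the corresponding convex sub-problem. Indeed, although $\mathcal{K}$ may be jointly non-convex in $(\mathbf{v},\mathbf{u})$ by Theorem~\ref{Non-con}, the sub-problems \eqref{v} and \eqref{u} are each convex, and Theorem~\ref{solutionofvu} supplies their exact global minimizers in closed form. So I would set up the argument as a convergence analysis of exact cyclic block coordinate minimization over the two blocks, restricted to the compact convex feasible set $[0,1]^n \times [0,1]^m$.

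First I would establish monotone decrease of the objective. Since $\mathbf{v}^{t}$ is a global minimizer of $\mathcal{K}(\cdot,\mathbf{u}^{t-1})$ and $\mathbf{u}^{t}$ is a global minimizer of $\mathcal{K}(\mathbf{v}^{t},\cdot)$, we get
\begin{align*}
\mathcal{K}(\mathbf{v}^{t},\mathbf{u}^{t}) \le \mathcal{K}(\mathbf{v}^{t},\mathbf{u}^{t-1}) \le \mathcal{K}(\mathbf{v}^{t-1},\mathbf{u}^{t-1}).
\end{align*}
Next I would note that $\mathcal{K}$ is continuous and bounded below on the compact set $[0,1]^n\times[0,1]^m$ (the hinge losses $\xi_{ij}$ and hence the bilinear and squared terms are all bounded there), so the non-increasing sequence $\{\mathcal{K}(\mathbf{v}^t,\mathbf{u}^t)\}$ converges. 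By Bolzano--Weierstrass the iterates $\{(\mathbf{v}^t,\mathbf{u}^t)\}$ admit a convergent subsequence with some limit point $(\mathbf{v}^*,\mathbf{u}^*)$ in the feasible set.

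Then I would show any such limit point is a stationary point, i.e. a block-wise (Nash / coordinatewise) minimizer. The key is that each block update solves a convex problem to global optimality, so at the limit the first-order optimality conditions hold in each block separately: $\mathbf{v}^*$ minimizes $\mathcal{K}(\cdot,\mathbf{u}^*)$ over $[0,1]^n$ and $\mathbf{u}^*$ minimizes $\mathcal{K}(\mathbf{v}^*,\cdot)$ over $[0,1]^m$. Passing to the limit requires continuity of the (sub)gradient / optimality map in each block; I would invoke continuity of $\mathcal{K}$ together with closedness of the global-minimizer correspondence for a continuous convex function on a compact convex set, so the block-optimality that holds along the subsequence carries over to $(\mathbf{v}^*,\mathbf{u}^*)$. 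Concretely, from $\mathcal{K}(\mathbf{v}^{t},\mathbf{u}^{t-1}) \le \mathcal{K}(\mathbf{v},\mathbf{u}^{t-1})$ for all feasible $\mathbf{v}$, taking limits along the subsequence yields $\mathcal{K}(\mathbf{v}^*,\mathbf{u}^*) \le \mathcal{K}(\mathbf{v},\mathbf{u}^*)$, and symmetrically for $\mathbf{u}$; this is precisely the stationarity condition for a two-block problem.

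The main obstacle I anticipate is the limit-passing step: because the problem is non-convex and the closed-form solutions in Theorem~\ref{solutionofvu} involve a sorting of the loss values $l^+_p, l^-_q$, the block-minimizer map is only piecewise-defined and the sorted index can switch across iterations, so I must argue that stationarity (the variational inequalities above) is preserved in the limit despite these combinatorial changes. I would handle this by working directly with the objective-value inequalities rather than with the explicit formulas --- the inequalities $\mathcal{K}(\mathbf{v}^{t},\mathbf{u}^{t-1}) \le \mathcal{K}(\mathbf{v},\mathbf{u}^{t-1})$ hold regardless of which branch of \eqref{solutionofv} is active, so continuity of $\mathcal{K}$ alone suffices to pass to the limit and the sorting discontinuities never enter the convergence argument. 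A secondary subtlety is whether the whole sequence converges or only a subsequence; since the statement asks only for convergence to \emph{a} stationary point, establishing it for limit points of the (guaranteed) convergent subsequence is enough.
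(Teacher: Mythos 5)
Your overall strategy (monotone decrease, boundedness on the compact feasible set, extraction of a convergent subsequence, block-wise optimality of the limit) follows the same skeleton as the paper's proof, but there is one concrete gap in your limit-passing step, and it sits exactly where the paper leans on the closed-form solutions that you explicitly set aside. The two block updates are $\mathbf{v}^{k+1}=\argmin_{\mathbf{v}}\mathcal{K}(\mathbf{v},\mathbf{u}^{k})$ and $\mathbf{u}^{k+1}=\argmin_{\mathbf{u}}\mathcal{K}(\mathbf{v}^{k+1},\mathbf{u})$, so the $\mathbf{u}$-inequality $\mathcal{K}(\mathbf{v}^{k},\mathbf{u}^{k})\le\mathcal{K}(\mathbf{v}^{k},\mathbf{u})$ has matching indices and passes to the limit along a subsequence $(\mathbf{v}^{k_j},\mathbf{u}^{k_j})\to(\mathbf{v}^*,\mathbf{u}^*)$ by continuity alone; but the $\mathbf{v}$-inequality $\mathcal{K}(\mathbf{v}^{k},\mathbf{u}^{k-1})\le\mathcal{K}(\mathbf{v},\mathbf{u}^{k-1})$ involves $\mathbf{u}^{k-1}$, and subsequential convergence of $(\mathbf{v}^{k_j},\mathbf{u}^{k_j})$ says nothing about where $\mathbf{u}^{k_j-1}$ goes. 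Writing ``taking limits yields $\mathcal{K}(\mathbf{v}^*,\mathbf{u}^*)\le\mathcal{K}(\mathbf{v},\mathbf{u}^*)$'' silently assumes $\mathbf{u}^{k_j-1}\to\mathbf{u}^*$. Closedness of the argmin correspondence does not rescue this: it would only give $\mathbf{v}^*\in\argmin_{\mathbf{v}}\mathcal{K}(\mathbf{v},\tilde{\mathbf{u}})$ for whatever $\tilde{\mathbf{u}}$ is a cluster point of $\mathbf{u}^{k_j-1}$, not for $\mathbf{u}^*$. This index mismatch is the classical obstruction for block coordinate descent on jointly non-convex objectives and must be addressed explicitly.

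Two repairs are available. The paper's route is to first prove $\lim_k\|\mathbf{v}^{k+1}-\mathbf{v}^{k}\|=\lim_k\|\mathbf{u}^{k+1}-\mathbf{u}^{k}\|=0$ by combining the vanishing of successive objective gaps with the fact that the update of Theorem \ref{solutionofvu} returns one designated global minimizer --- i.e., precisely the ``explicit formulas'' you wanted to keep out of the argument; with that in hand $\mathbf{u}^{k_j-1}$ and $\mathbf{u}^{k_j}$ share the same limit and your limit passage goes through (the paper then, somewhat loosely, upgrades this to convergence of the whole sequence via a Cauchy-type claim before verifying stationarity). Alternatively, you can stay purely value-based, as you intended, via the standard two-block sandwich: pass to a further subsequence so that $\mathbf{v}^{k_j+1}\to\hat{\mathbf{v}}$; then $\mathcal{K}(\hat{\mathbf{v}},\mathbf{u}^*)\le\mathcal{K}(\mathbf{v},\mathbf{u}^*)$ for all feasible $\mathbf{v}$ by continuity (here the indices do match), while monotonicity squeezes $\mathcal{K}(\mathbf{v}^{k_j+1},\mathbf{u}^{k_j})$ between two sequences of objective values converging to the common limit $\mathcal{K}(\mathbf{v}^*,\mathbf{u}^*)$, so $\mathcal{K}(\mathbf{v}^*,\mathbf{u}^*)=\mathcal{K}(\hat{\mathbf{v}},\mathbf{u}^*)$ and $\mathbf{v}^*$ is also a minimizer of $\mathcal{K}(\cdot,\mathbf{u}^*)$. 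Either way, you should finish by converting block-wise global minimality into the paper's formal notion of stationarity (its variational-inequality definition): block minimality gives $\triangledown_{\mathbf{v}}\mathcal{K}(\mathbf{v}^*,\mathbf{u}^*)'(\mathbf{v}-\mathbf{v}^*)\ge0$ and $\triangledown_{\mathbf{u}}\mathcal{K}(\mathbf{v}^*,\mathbf{u}^*)'(\mathbf{u}-\mathbf{u}^*)\ge0$, and these sum to the joint condition because the gradient splits across the two blocks.
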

Next, we prove that  our BSPAUC converges along with the increase of hyper-parameter $\lambda$ under a mild assumption.
\begin{theorem} \label{Converge}
	If   Algorithm \ref{Deeplearning} or Algorithm 3 (in Appendix) optimizes $\theta$  such that $\mathcal{L}(\theta^{t+1};\mathbf{v}^{t+1},\mathbf{u}^{t+1},\lambda^{t}) \le \mathcal{L}(\theta^{t};\mathbf{v}^{t+1},\mathbf{u}^{t+1},\lambda^{t})$, BSPAUC  converges  along  with the increase of hyper-parameter $\lambda$.
\end{theorem}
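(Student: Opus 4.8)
The plan is to show that the objective values produced by Algorithm~\ref{alg1} form a monotonically non-increasing sequence that is bounded below, and therefore converge. The first observation is structural: the pace update $\lambda^{t}=\min\{c\lambda^{t-1},\lambda_{\infty}\}$ generates a non-decreasing sequence capped at $\lambda_{\infty}$, so $\lambda^{t}$ attains $\lambda_{\infty}$ after finitely many iterations, after which the objective $\mathcal{L}(\,\cdot\,;\lambda_{\infty})$ is \emph{fixed}. Writing $a_{t}:=\mathcal{L}(\theta^{t},\mathbf{v}^{t},\mathbf{u}^{t};\lambda^{t})$ for the running objective value, I would prove that $\{a_{t}\}$ converges and that its limit is a value of the fixed objective; this is exactly what ``convergence along with the increase of $\lambda$'' means here.

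The core step is to decompose one outer iteration into its three sub-updates and bound the effect of each on $\mathcal{L}$ (using the theorem's indexing, in which the inner loop first produces $(\mathbf{v}^{t+1},\mathbf{u}^{t+1})$ from $\theta^{t}$ and $\lambda^{t}$, then $\theta^{t+1}$ is formed, then $\lambda$ is updated). First, the inner weight loop holds $\theta^{t}$ and $\lambda^{t}$ fixed and alternately solves the convex sub-problems \eqref{v} and \eqref{u} by their closed-form global minimizers (Theorem~\ref{solutionofvu}); since each block update is an exact minimization it cannot increase $\mathcal{L}$, and by Theorem~\ref{theormKstation} the loop converges while keeping $\mathcal{L}(\theta^{t},\mathbf{v}^{t+1},\mathbf{u}^{t+1};\lambda^{t})\leq\mathcal{L}(\theta^{t},\mathbf{v}^{t},\mathbf{u}^{t};\lambda^{t})$. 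Second, the $\theta$ step does not increase $\mathcal{L}$ by the standing hypothesis, i.e.\ $\mathcal{L}(\theta^{t+1},\mathbf{v}^{t+1},\mathbf{u}^{t+1};\lambda^{t})\leq\mathcal{L}(\theta^{t},\mathbf{v}^{t+1},\mathbf{u}^{t+1};\lambda^{t})$. Third, for the pace update I would note that as a function of $\lambda$ the objective reads $\mathcal{L}=(\text{terms free of }\lambda)-\lambda\big(\tfrac{1}{n}\sum_{i}v_{i}+\tfrac{1}{m}\sum_{j}u_{j}\big)$, and the bracket is non-negative because $\mathbf{v}\in[0,1]^{n}$, $\mathbf{u}\in[0,1]^{m}$; since $\lambda^{t+1}\geq\lambda^{t}$, increasing the pace can only decrease $\mathcal{L}$. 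Chaining these three gives
\[
a_{t+1}\;\leq\;\mathcal{L}(\theta^{t+1},\mathbf{v}^{t+1},\mathbf{u}^{t+1};\lambda^{t})\;\leq\;\mathcal{L}(\theta^{t},\mathbf{v}^{t+1},\mathbf{u}^{t+1};\lambda^{t})\;\leq\;a_{t},
\]
so $\{a_{t}\}$ is monotonically non-increasing.

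Next I would establish a uniform lower bound on $\mathcal{L}$ by inspecting the four terms of \eqref{BSPAUCOF}: the weighted hinge term $\mathbf{1}$ is non-negative, the regularizer term $\mathbf{3}$ is non-negative (with $\tau\geq0$), and the balanced term $\mathbf{4}$ is a square, hence non-negative (with $\mu\geq0$); only term $\mathbf{2}$ is negative, and since $\tfrac{1}{n}\sum_{i}v_{i}\leq1$ and $\tfrac{1}{m}\sum_{j}u_{j}\leq1$ it is bounded below by $-2\lambda^{t}\geq-2\lambda_{\infty}$. Thus $a_{t}\geq-2\lambda_{\infty}>-\infty$. A monotone non-increasing sequence that is bounded below converges, and because $\lambda^{t}=\lambda_{\infty}$ for all large $t$, the limit is a value of the fixed objective $\mathcal{L}(\,\cdot\,;\lambda_{\infty})$, establishing the stated convergence.

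The main obstacle I anticipate is upgrading convergence of the objective \emph{values} to convergence of the iterates to a \emph{stationary point} of the fixed objective, as phrased in the abstract. Monotonicity together with the lower bound only forces the decrements $a_{t}-a_{t+1}$ to be summable and hence to vanish; turning this into stationarity requires that each block's per-iteration progress controls the corresponding optimality residual. For the weight block this is clean, since the closed-form updates of Theorem~\ref{solutionofvu} already make $(\mathbf{v}^{t+1},\mathbf{u}^{t+1})$ an exact stationary point of $\mathcal{K}$ at $\theta^{t}$ via Theorem~\ref{theormKstation}. The delicate point is the $\theta$ block, for which the hypothesis supplies only a descent (non-increase) guarantee rather than exact minimization: here I would either strengthen it to a sufficient-decrease condition of the form $\mathcal{L}(\theta^{t})-\mathcal{L}(\theta^{t+1})\geq\rho\|\nabla_{\theta}\mathcal{L}\|^{2}$ delivered by the (stochastic) gradient steps of Algorithm~\ref{Deeplearning} or its kernel counterpart (Algorithm~3 in the Appendix), or restrict the stationarity claim to limit points of the bounded iterate sequence. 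Managing the non-convex coupling between $\mathbf{v}$ and $\mathbf{u}$ that motivated the doubly cyclic splitting in the first place, rather than the monotonicity bookkeeping, is where the argument needs the most care.
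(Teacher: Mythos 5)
Your proposal is correct and follows essentially the same route as the paper's own proof: the same three-step decomposition of one outer iteration (inner block minimizations cannot increase $\mathcal{L}$, the $\theta$-step is non-increasing by hypothesis, and increasing $\lambda$ only decreases $\mathcal{L}$ because its coefficient $-\bigl(\tfrac{1}{n}\sum_i v_i+\tfrac{1}{m}\sum_j u_j\bigr)$ is non-positive), followed by the same lower bound $\mathcal{L}\geq -2\lambda_{\infty}$ and the monotone convergence argument. Your closing discussion about upgrading to stationarity of the iterates concerns Theorem~\ref{ConvergeToStan} rather than this statement, which the paper (like you) reads as convergence of the objective values only.
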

\begin{remark}
	No matter the sub-problem (\ref{f}) is convex or not, it is a basic requirement for a solver (e.g., Algorithm   \ref{Deeplearning} and Algorithm 3 in Appendix) such that the solution $\theta^{t+1}$ satisfies:
	\begin{align*}
		\mathcal{L}(\theta^{t+1};\mathbf{v}^{t+1},\mathbf{u}^{t+1},\lambda^{t}) &\le \mathcal{L}(\theta^{t};\mathbf{v}^{t+1},\mathbf{u}^{t+1},\lambda^{t}).
	\end{align*}
	Thus, we can have that  our BSPAUC  converges  along with  the increase of hyper-parameter $\lambda$.
\end{remark}

Considering  that the hyper-parameter $\lambda$ reaches its maximum  $\lambda_{\infty}$, we will have that our BSPAUC    converges to a stationary point of $\mathcal{L}(\theta,\mathbf{v},\mathbf{u};\lambda_{\infty})$ if the iteration number $T$ is large enough.
\begin{theorem} \label{ConvergeToStan}
	If   Algorithm \ref{Deeplearning} or Algorithm 3 (in Appendix) optimizes $\theta$  such that $\mathcal{L}(\theta^{t+1};\mathbf{v}^{t+1},\mathbf{u}^{t+1},\lambda^{t}) \le \mathcal{L}(\theta^{t};\mathbf{v}^{t+1},\mathbf{u}^{t+1},\lambda^{t})$, and $\lambda$ reaches its maximum  $\lambda_{\infty}$, we will have that our  BSPAUC  converges to a stationary point of $\mathcal{L}(\theta,\mathbf{v},\mathbf{u};\lambda_{\infty})$ if the iteration number $T$ is large enough.
\end{theorem}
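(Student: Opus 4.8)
The plan is to exploit the fact that once the pace parameter saturates, the outer loop of Algorithm~\ref{alg1} reduces to a block coordinate descent on a single \emph{fixed} function, and then to combine the block-wise optimality guaranteed by Theorem~\ref{theormKstation} with a standard descent-plus-compactness argument. First I would observe that the update $\lambda^{t}=\min\{c\lambda^{t-1},\lambda_{\infty}\}$ (line 8) reaches $\lambda_{\infty}$ after finitely many steps, say at some $t_{0}$; for all $t\ge t_{0}$ the algorithm optimizes the fixed objective $\mathcal{L}(\theta,\mathbf{v},\mathbf{u};\lambda_{\infty})$. On this fixed objective each outer iteration is monotonically non-increasing: the inner loop (lines 3--6) drives $(\mathbf{v},\mathbf{u})$ to a stationary point of the sub-problem $\mathcal{K}$ without ever increasing its value (Theorem~\ref{theormKstation}), and the $\theta$-step is non-increasing by the hypothesis $\mathcal{L}(\theta^{t+1};\mathbf{v}^{t+1},\mathbf{u}^{t+1},\lambda^{t})\le\mathcal{L}(\theta^{t};\mathbf{v}^{t+1},\mathbf{u}^{t+1},\lambda^{t})$. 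This is precisely the mechanism already used in Theorem~\ref{Converge}, which I would invoke to assert convergence of the objective value for the now-stationary $\lambda$.

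Next I would make the convergence quantitative. The function $\mathcal{L}(\cdot;\lambda_{\infty})$ is bounded below: terms \textbf{1}, \textbf{3} and \textbf{4} of \eqref{BSPAUCOF} are nonnegative, while term \textbf{2} is at least $-2\lambda_{\infty}$ because $\mathbf{v}\in[0,1]^{n}$ and $\mathbf{u}\in[0,1]^{m}$. A monotone bounded-below sequence converges, so $\mathcal{L}^{t}\downarrow\mathcal{L}^{*}$ and the telescoping sum of per-iteration decreases is finite; since the inner-loop decrease and the $\theta$-step decrease are individually nonnegative and sum to this finite total, both tend to $0$. As $\mathbf{v},\mathbf{u}$ lie in compact boxes and the coercive regularizer $\tau\Omega(\theta)$ confines $\{\theta^{t}\}_{t\ge t_{0}}$ to the bounded sublevel set $\{\tau\Omega(\theta)\le\mathcal{L}^{t_{0}}+2\lambda_{\infty}\}$, I can extract a convergent subsequence $(\theta^{t_{k}},\mathbf{v}^{t_{k}},\mathbf{u}^{t_{k}})\to(\theta^{*},\mathbf{v}^{*},\mathbf{u}^{*})$.

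It then remains to verify joint stationarity of the limit point. For the $\mathbf{v}$ and $\mathbf{u}$ blocks this follows from Theorem~\ref{theormKstation} together with the vanishing inner-loop increment: the (closed-form) first-order optimality conditions of the convex $\mathbf{v}$- and $\mathbf{u}$-sub-problems hold along the subsequence and, being closed conditions continuous in the remaining variables, persist in the limit, so $\mathbf{v}^{*}$ is optimal given $(\theta^{*},\mathbf{u}^{*})$ and $\mathbf{u}^{*}$ is optimal given $(\theta^{*},\mathbf{v}^{*})$. Combining this with stationarity of $\theta^{*}$ for $\mathcal{L}(\theta;\mathbf{v}^{*},\mathbf{u}^{*},\lambda_{\infty})$ yields a coordinate-wise (block) stationary point of the full objective, which is the conclusion; the role of ``$T$ large enough'' is exactly to guarantee that such a limit point of the outer iterates is reached.

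I expect the $\theta$-block to be the main obstacle. Monotone descent alone does not force the limiting $\theta^{*}$ to be stationary, nor does it guarantee $\|\theta^{t+1}-\theta^{t}\|\to0$ (which is what lets the per-block optimality conditions be read off at one common limit point). To close this gap I would strengthen the working hypothesis into a \emph{sufficient-decrease} condition on the solver, of the form $\mathcal{L}(\theta^{t})-\mathcal{L}(\theta^{t+1})\ge c_{0}\|G(\theta^{t})\|^{2}$ for the projected/generalized gradient $G$ (a property DSGD and TSGD satisfy), so that the vanishing $\theta$-step decrease forces $\|G(\theta^{t_{k}})\|\to0$ and hence $\theta^{*}$ stationary. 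Equivalently, I could run the whole argument through Zangwill's global convergence theorem, checking that the algorithmic map is closed, that $\mathcal{L}(\cdot;\lambda_{\infty})$ strictly decreases off the stationary set, and that the iterates stay in a compact set; the three-block structure and the nonsmoothness of the hinge loss $\xi_{ij}$ are exactly where this verification demands care.
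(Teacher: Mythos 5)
Your overall architecture coincides with the paper's: saturate $\lambda$ at $\lambda_{\infty}$, use monotone descent plus the lower bound $-2\lambda_{\infty}$ to get convergence of the objective values, pass to a limit point, and read off block-wise first-order conditions for $(\mathbf{v},\mathbf{u})$ (via Theorem \ref{theormKstation}) and for $\theta$ separately, summing them to obtain joint stationarity in the sense of Definition \ref{definitionConstrainedF}. Two differences are worth recording. First, for the existence of the limit point the paper does not use compactness: it argues $\|\mathbf{v}^{t+1}-\mathbf{v}^{t}\|\to 0$ by a uniqueness argument (if the objective decrease vanishes while $\mathbf{v}^{t}\neq\mathbf{v}^{t+1}$, both would be global optima of the convex $\mathbf{v}$-sub-problem, contradicting that the update \eqref{solutionofv} outputs a single designated optimum), treats $\mathbf{u}$ and $\theta$ analogously, and then invokes Cauchy's criterion. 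Your compact-sublevel-set extraction of a convergent subsequence is the more standard and more defensible device, since vanishing successive differences alone do not yield a Cauchy sequence; the price is that you obtain only subsequential convergence, which is all the stated conclusion really requires. Second --- and this is the substantive point --- you correctly identify that the stated hypothesis $\mathcal{L}(\theta^{t+1};\mathbf{v}^{t+1},\mathbf{u}^{t+1},\lambda^{t})\le\mathcal{L}(\theta^{t};\mathbf{v}^{t+1},\mathbf{u}^{t+1},\lambda^{t})$ is not by itself enough to make the limiting $\theta^{*}$ stationary. The paper closes this gap only by silently strengthening the assumption: its proof asserts that $\theta^{t+1}$ is a stationary point of $\mathcal{L}(\theta;\mathbf{v}^{t+1},\mathbf{u}^{t+1},\lambda_{\infty})$ returned by the gradient-based solver and deduces $\|\theta^{t+1}-\theta^{t}\|\to 0$ from that, neither of which follows from monotone descent alone. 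Your proposed sufficient-decrease condition $\mathcal{L}(\theta^{t})-\mathcal{L}(\theta^{t+1})\ge c_{0}\|G(\theta^{t})\|^{2}$ (or the Zangwill closed-map route) is a clean way to make this step rigorous, and in that respect your sketch is more honest about the one genuinely delicate block than the argument the paper actually writes down.
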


\begin{table}[H] 
	\centering
	\begin{tabular}{c|c|c|c}
		\hline
		$\mathbf{Dataset}$ & $\mathbf{Size}$ & $\mathbf{Dimensions}$ & $\mathbf{N_- \backslash N_+}$ \\ \hline
		sector             & 9,619           & 55,197                 & 95.18                         \\
		rcv1               & 20,242          & 47,236                 & 1.07                          \\
		a9a                & 32,561          & 123                   & 3.15                          \\
		shuttle            & 43,500          & 9                     & 328.54                        \\
		aloi               & 108,000         & 128                   & 999.00                        \\
		skin$\_$nonskin       & 245,057         & 3                     & 3.81                          \\
		cod-rna            & 331,152         & 8                     & 2.00                          \\
		poker              & 1,000,000       & 10                    & 701.24                        \\ \hline
	\end{tabular}
	\caption{Datasets. ($N_{-}$ means the number of negative samples and $N_{+}$ means the number of positive samples.)}  \label{Datasets}
\end{table}

\section{Experiments} 

In this section, we  first describe the experimental setup, and then provide our experimental results and discussion.


\begin{table*}[htb] 
	\centering
	\setlength{\tabcolsep}{4.5pt} 
	\begin{tabular}{c|c|c|c|c|c|c|c}
		\hline
		\multirow{2}{*}{Datasets} & \multicolumn{4}{c|}{Non Deep Learning Methods}                                                   & \multicolumn{3}{c}{Deep Learning Methods}                                \\ \cline{2-8} 
		& \textbf{BSPAUC}          & \textbf{TSAM}            & \textbf{KOIL}$_{FIFO++}$ & \textbf{OPAUC}  & \textbf{BSPAUC}                   & \textbf{DSAM}                     & \textbf{PPD}$_{SG}$ \\ \hline
		sector                    & \textbf{0.991$\pm$0.005} & 0.986$\pm$0.005          & 0.953$\pm$0.014          & 0.971$\pm$0.018 & \textbf{0.991$\pm$0.002} & 0.978$\pm$0.007          & 0.935$\pm$0.008     \\
		rcv1                      & \textbf{0.979$\pm$0.001} & 0.970$\pm$0.001          & 0.913$\pm$0.018          & 0.966$\pm$0.012 & \textbf{0.993$\pm$0.001} & 0.990$\pm$0.001          & 0.988$\pm$0.001     \\
		a9a                       & \textbf{0.926$\pm$0.008} & 0.904$\pm$0.015          & 0.858$\pm$0.020          & 0.869$\pm$0.013 & \textbf{0.927$\pm$0.005} & 0.908$\pm$0.003          & 0.906$\pm$0.001     \\
		shuttle                   & \textbf{0.978$\pm$0.002} & 0.970$\pm$0.004          & 0.948$\pm$0.010          & 0.684$\pm$0.036 & \textbf{0.994$\pm$0.003} & 0.989$\pm$0.004          & ---                 \\
		aloi                      & \textbf{0.999$\pm$0.001} & \textbf{0.999$\pm$0.001} & \textbf{0.999$\pm$0.001} & 0.998$\pm$0.001 & \textbf{0.999$\pm$0.001} & \textbf{0.999$\pm$0.001} & ---                 \\
		skin$\_$nonskin           & \textbf{0.958$\pm$0.004} & 0.946$\pm$0.004          & ---                      & 0.943$\pm$0.007 & \textbf{0.999$\pm$0.001} & \textbf{0.999$\pm$0.001} & 0.949$\pm$0.001     \\
		cod-rna                   & \textbf{0.973$\pm$0.006} & 0.966$\pm$0.010          & ---                      & 0.924$\pm$0.024 & \textbf{0.994$\pm$0.001} & 0.992$\pm$0.001          & 0.988$\pm$0.001     \\
		poker                     & \textbf{0.934$\pm$0.013} & 0.901$\pm$0.021          & ---                      & 0.662$\pm$0.025 & \textbf{0.990$\pm$0.004} & 0.976$\pm$0.015          & ---                 \\ \hline
	\end{tabular}
	\caption{Mean AUC results with the corresponding standard deviation on original benchmark datasets. ('–' means out of memory or  unable to handle severely imbalanced datasets.)}  \label{AUCR}
\end{table*}
\begin{table*}[htb] 
	\centering
	\begin{tabular}{c|ccc|ccc|ccc|ccc}
		\hline
		Datasets        & \multicolumn{3}{c|}{rcv1}                              & \multicolumn{3}{c|}{a9a}                               & \multicolumn{3}{c|}{skin\_nonskin}                     & \multicolumn{3}{c}{cod-rna}                           \\ \hline
		FP              & $10\%$           & $20\%$           & $30\%$           & $10\%$           & $20\%$           & $30\%$           & $10\%$           & $20\%$           & $30\%$           & $10\%$           & $20\%$           & $30\%$           \\ \hline
		\textbf{OPAUC}           & 0.958            & 0.933            & 0.845            & 0.824            & 0.804            & 0.778            & 0.925            & 0.884            & 0.815            & 0.902            & 0.864            & 0.783            \\
		\textbf{KOIL}$_{FIFO++}$ & 0.901            & 0.889            & 0.804            & 0.836            & 0.806            & 0.726            & ---              & ---              & ---              & ---              & ---              & ---              \\
		\textbf{TSAM}            & 0.961            & 0.946            & 0.838            & 0.877            & 0.846            & 0.752            & 0.937            & 0.913            & 0.842            & 0.933            & 0.880            & 0.808            \\
		\textbf{PDD}$_{SG}$      & 0.964            & 0.936            & 0.855            & 0.881            & 0.849            & 0.739            & 0.940            & 0.912            & 0.852            & 0.937            & 0.873            & 0.788            \\
		\textbf{DSAM}            & 0.983            & 0.962            & 0.862            & 0.886            & 0.837            & 0.811            & 0.961            & 0.917            & 0.819            & 0.975            & 0.922            & 0.781            \\
		\textbf{BSPAUC}          & $\textbf{0.991}$ & $\textbf{0.985}$ & $\textbf{0.945}$ & $\textbf{0.914}$ & $\textbf{0.894}$ & $\textbf{0.883}$ & $\textbf{0.979}$ & $\textbf{0.944}$ & $\textbf{0.912}$ & $\textbf{0.990}$ & $\textbf{0.956}$ & $\textbf{0.874}$ \\ \hline
		PP              & $10\%$           & $20\%$           & $30\%$           & $10\%$           & $20\%$           & $30\%$           & $10\%$           & $20\%$           & $30\%$           & $10\%$           & $20\%$           & $30\%$           \\ \hline
		\textbf{OPAUC}           & 0.923            & 0.863            & 0.803            & 0.833            & 0.816            & 0.797            & 0.927            & 0.880            & 0.856            & 0.914            & 0.893            & 0.858            \\
		\textbf{KOIL}$_{FIFO++}$ & 0.891            & 0.838            & 0.793            & 0.831            & 0.823            & 0.806            & ---              & ---              & ---              & ---              & ---              & ---              \\
		\textbf{TSAM}            & 0.930            & 0.859            & 0.809            & 0.872            & 0.849            & 0.838            & 0.933            & 0.911            & 0.877            & 0.953            & 0.903            & 0.886            \\
		\textbf{PDD}$_{SG}$      & 0.934            & 0.918            & 0.828            & 0.885            & 0.873            & 0.839            & 0.935            & 0.927            & 0.898            & 0.972            & 0.941            & 0.881            \\
		\textbf{DSAM}            & 0.902            & 0.845            & 0.757            & 0.881            & 0.852            & 0.843            & 0.980            & 0.954            & 0.902            & 0.973            & 0.938            & 0.913            \\
		\textbf{BSPAUC}          & $\textbf{0.955}$ & $\textbf{0.937}$ & $\textbf{0.876}$ & $\textbf{0.911}$ & $\textbf{0.907}$ & $\textbf{0.896}$ & $\textbf{0.995}$ & $\textbf{0.982}$ & $\textbf{0.965}$ & $\textbf{0.991}$ & $\textbf{0.973}$ & $\textbf{0.953}$ \\ \hline
	\end{tabular}
	\caption{Mean AUC results on noisy datasets. The corresponding standard deviations can be found in Appendix. (FP means the proportion of noise samples constructed by flipping labels, PP denotes the proportion of injected poison samples and  '–' means out of memory.)} \label{AUCnoise}
\end{table*}

\begin{figure*}[htb]
	\centering
	\captionsetup[subfigure]{aboveskip=1pt,belowskip=-2.3pt}
	\subfigure[rcv1]{
		\centering
		\includegraphics[width=1.6in]{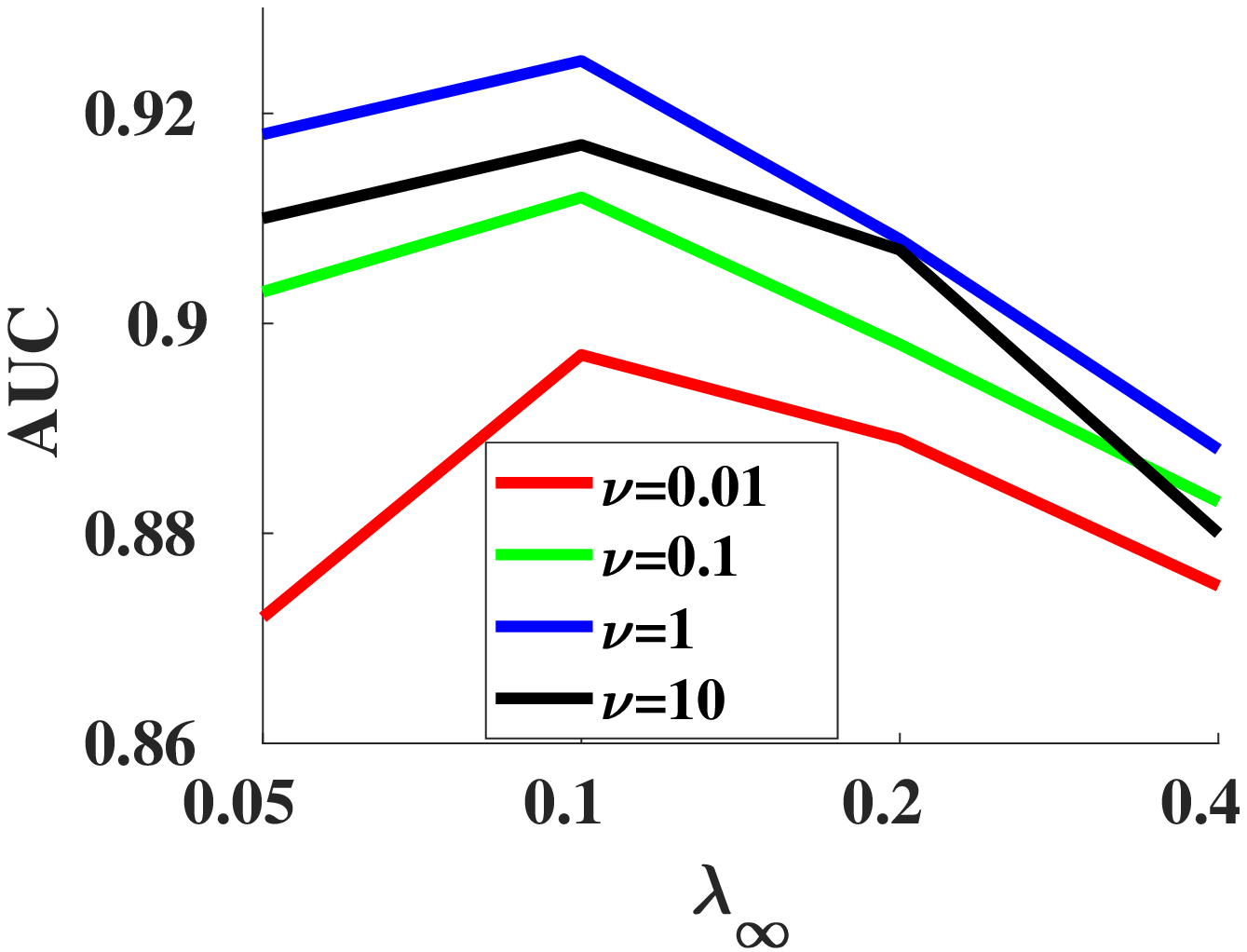}
	}
	\subfigure[a9a]{
		\centering
		\includegraphics[width=1.6in]{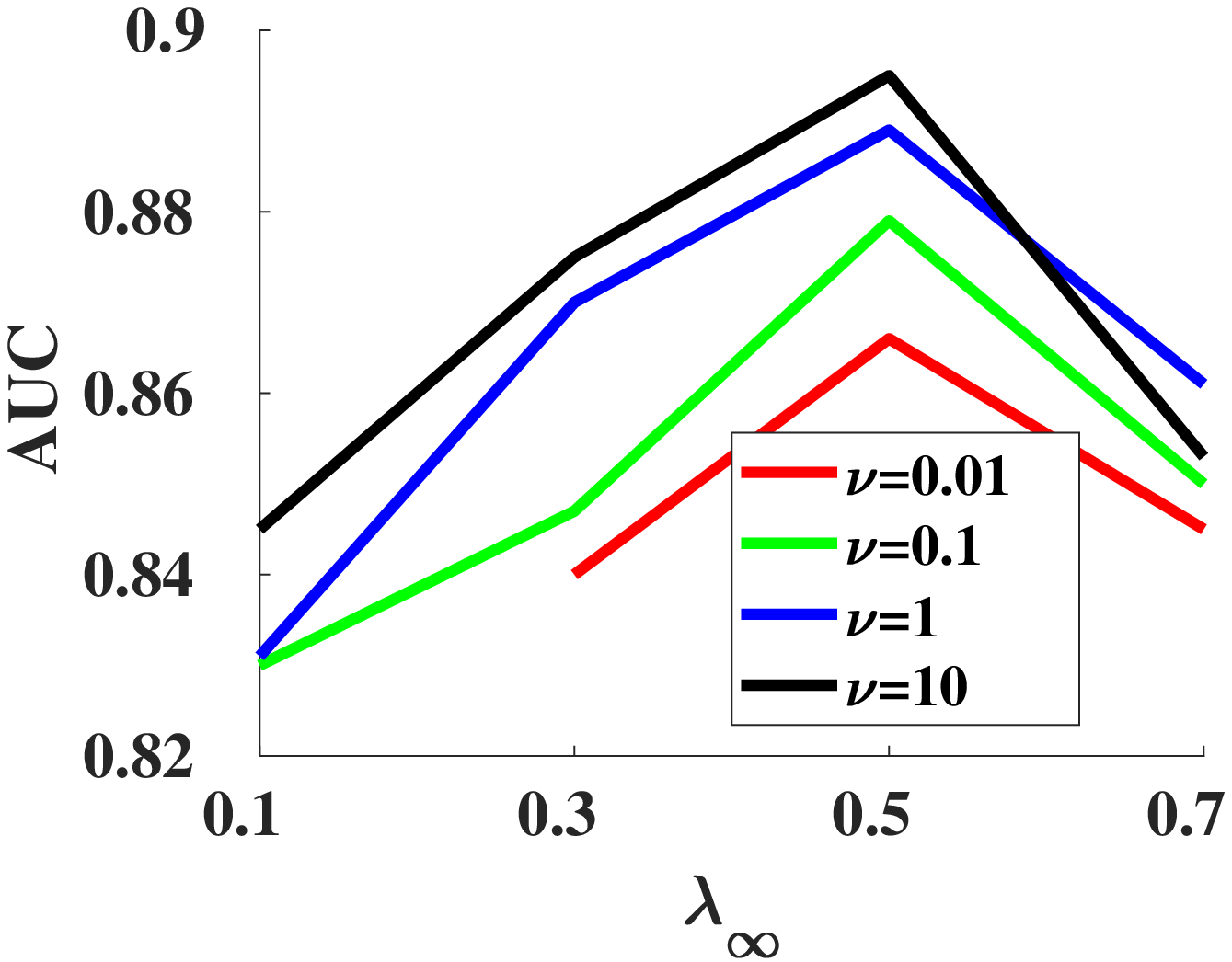}
	}
	\subfigure[skin\_nonskin]{
		\centering
		\includegraphics[width=1.6in]{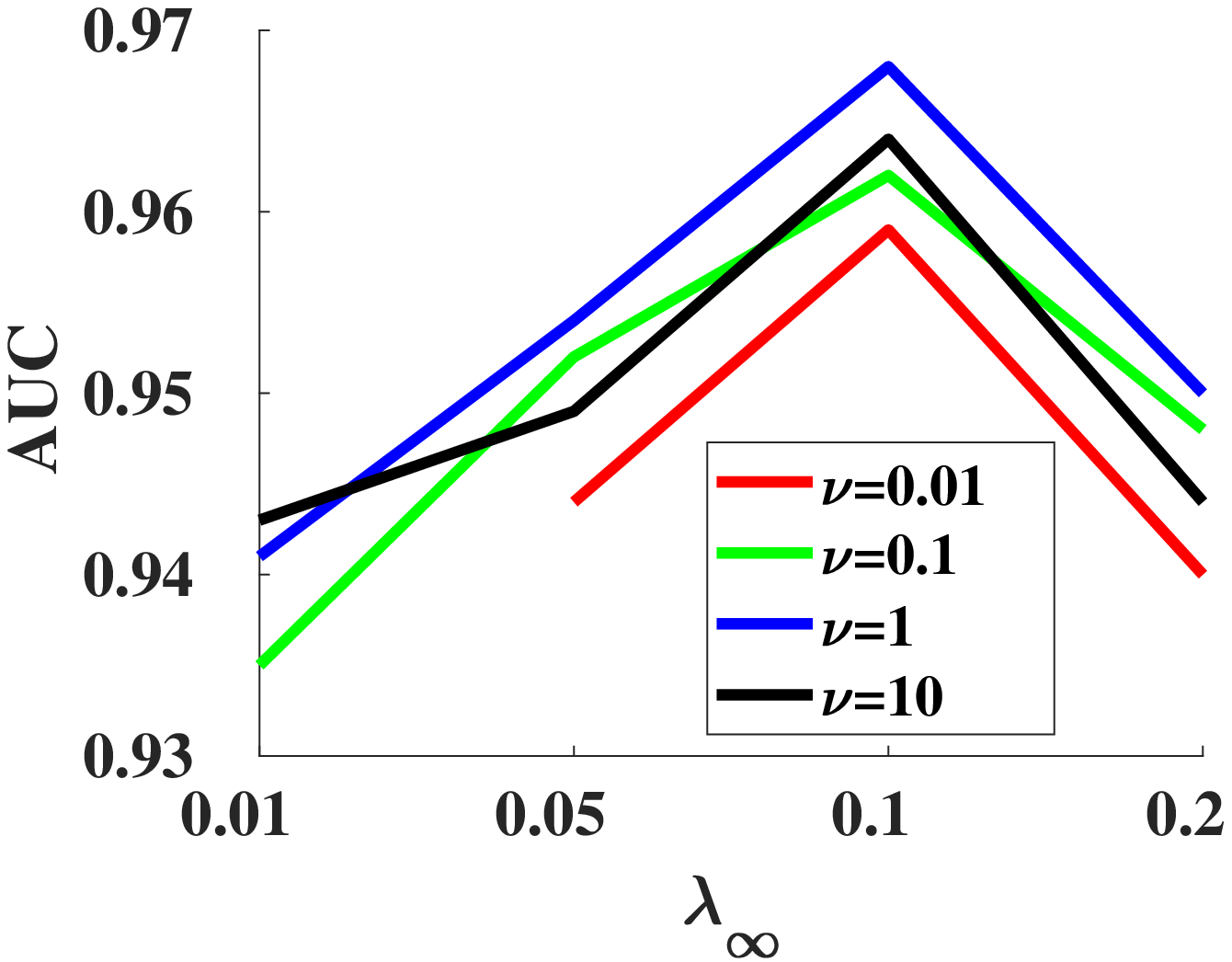}
	}
	\subfigure[cod-rna]{
		\centering
		\includegraphics[width=1.6in]{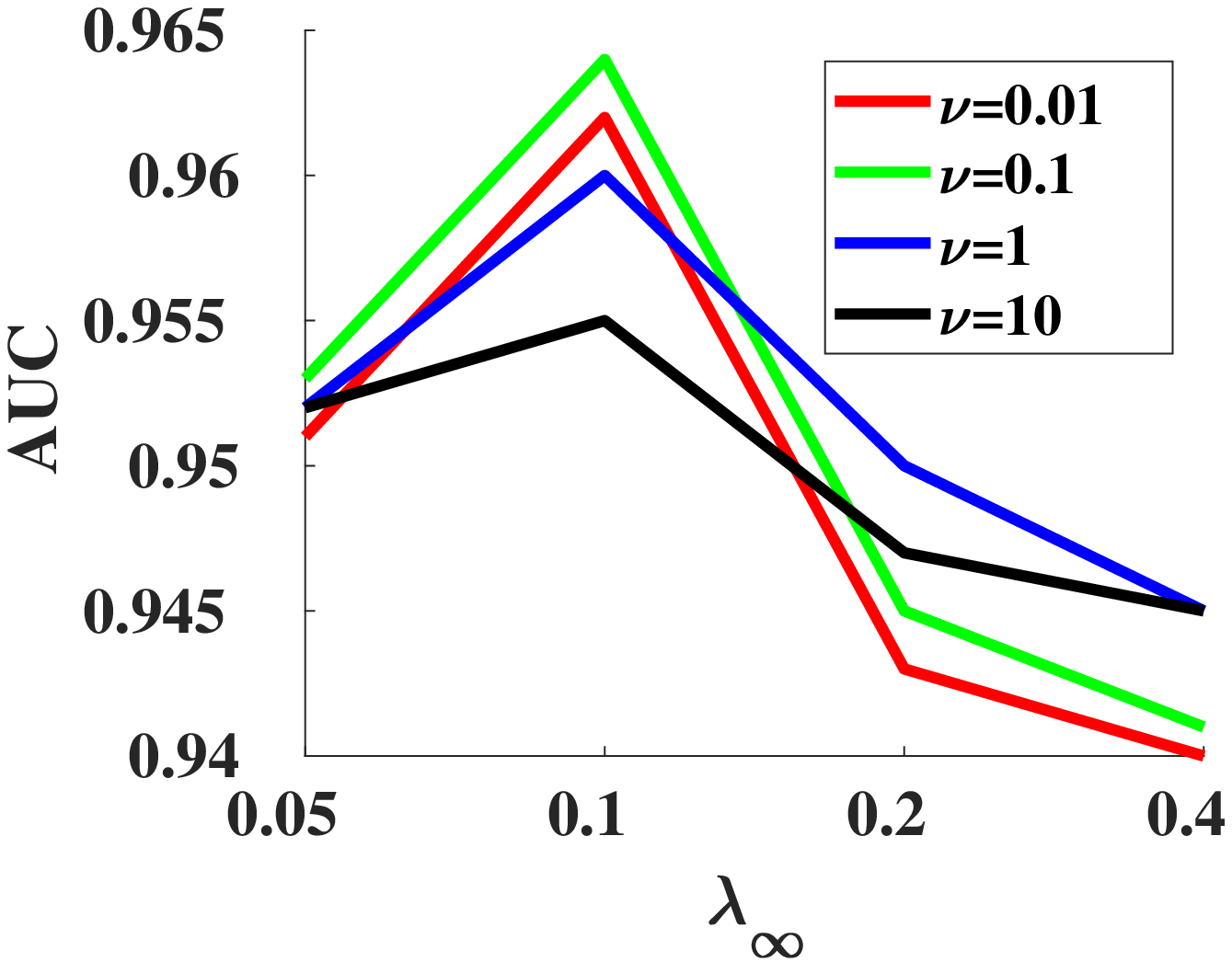}
	}
	\centering
	\caption{AUC results  with different values of $\nu$ and $\lambda_{\infty}$  on datasets with 20\% injected poison samples. (Missing results are due to only positive or negative samples are selected.)} \label{AUCParameter}
	
\end{figure*}

\begin{figure*}[htb]
	\centering
	\captionsetup[subfigure]{aboveskip=1pt,belowskip=-2.3pt}
	\subfigure[rcv1]{
		\centering
		\includegraphics[width=1.6in]{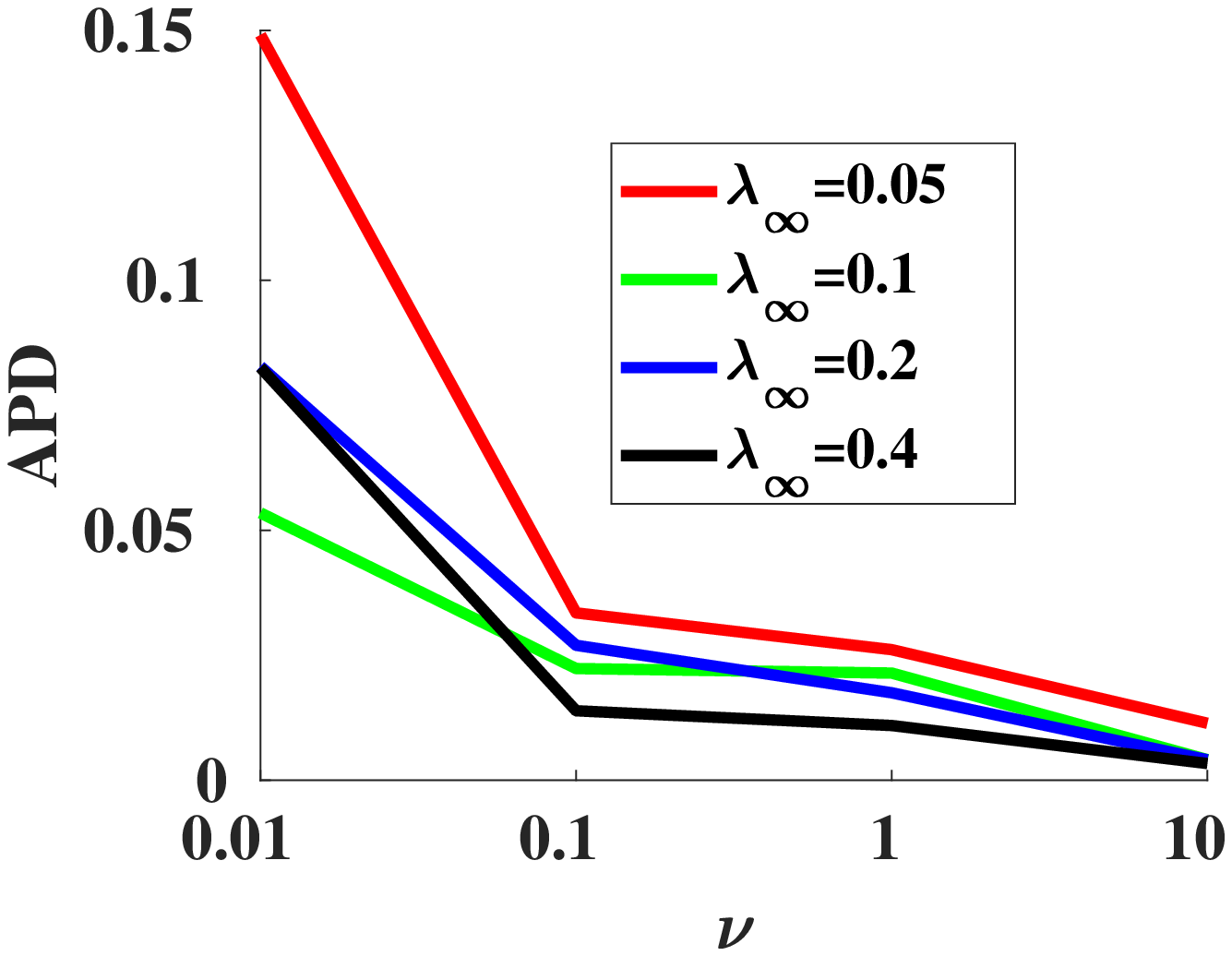}
	}
	\subfigure[a9a]{
		\centering
		\includegraphics[width=1.6in]{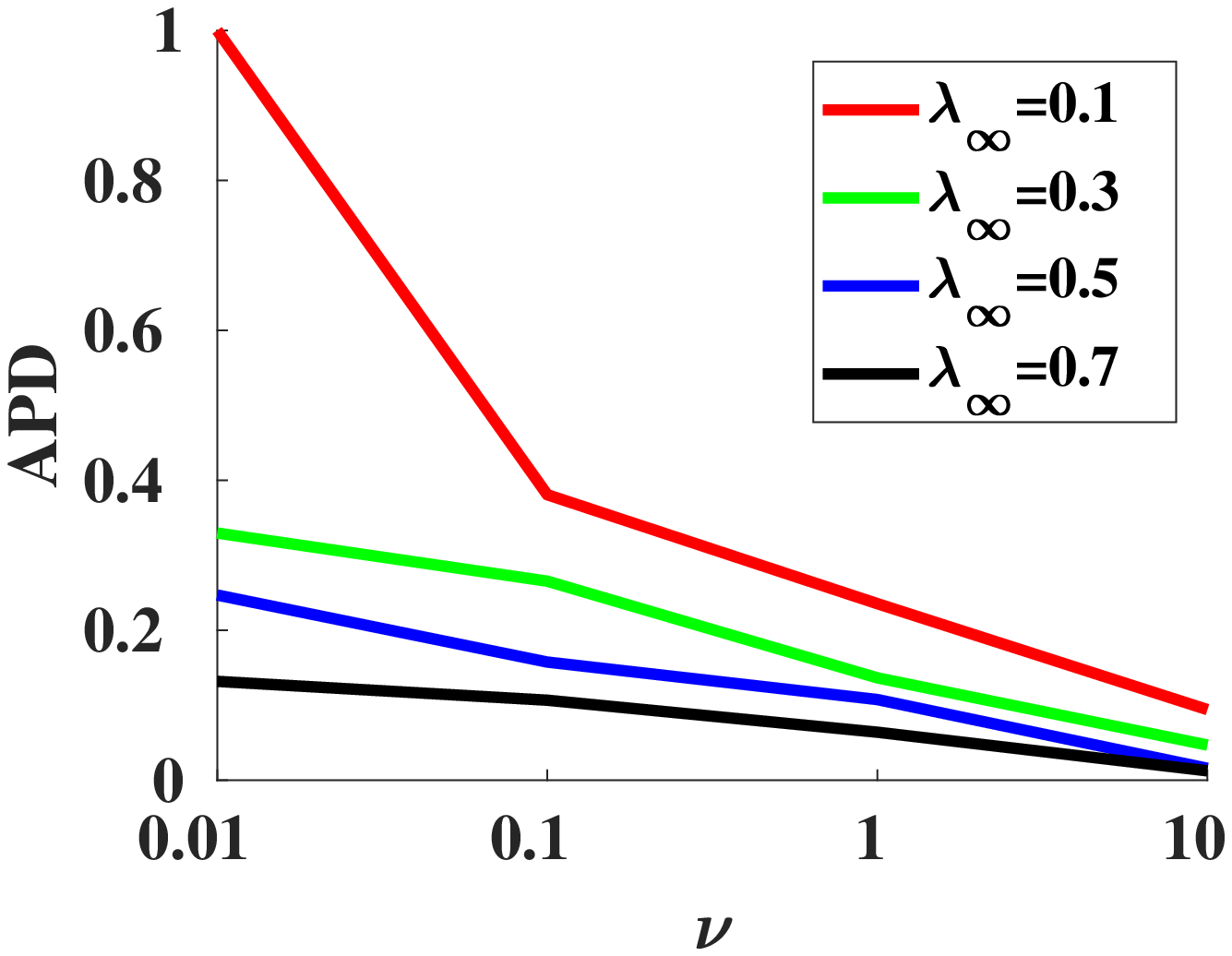}
	}
	\subfigure[skin\_nonskin]{
		\centering
		\includegraphics[width=1.6in]{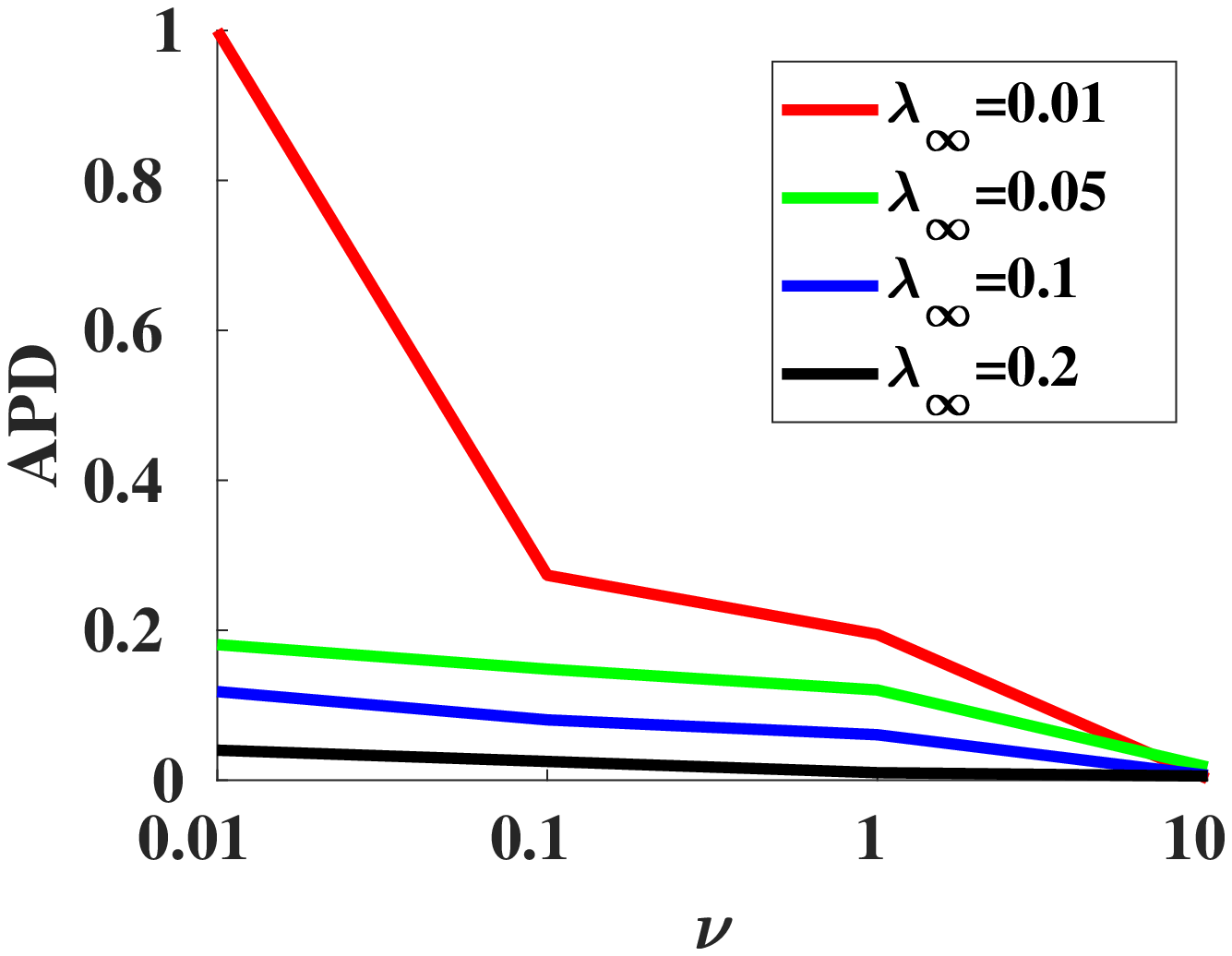}
	}
	\subfigure[cod-rna]{
		\centering
		\includegraphics[width=1.6in]{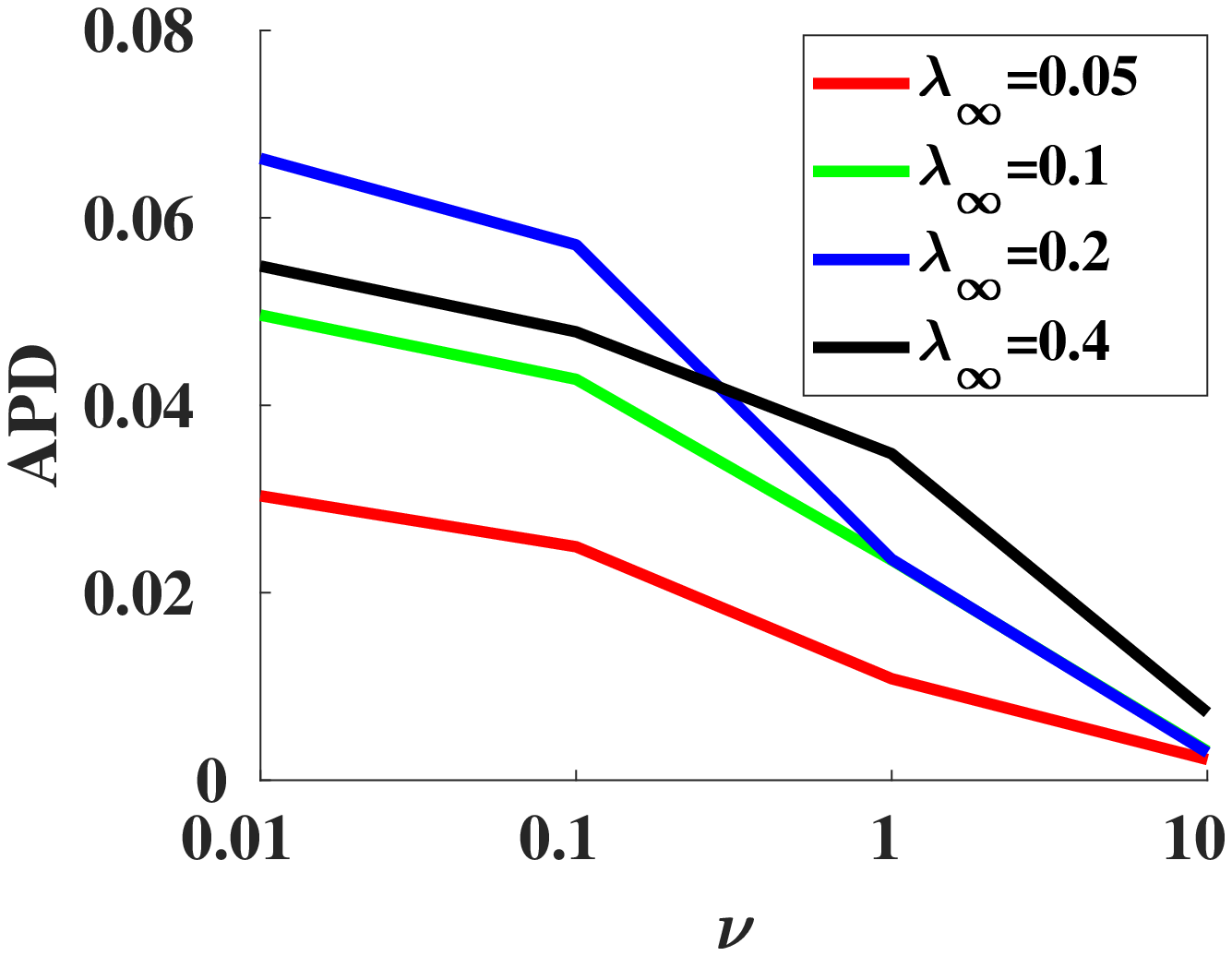}
	}
	\centering
	\caption{The results of absolute proportion  difference (APD)  with different values of $\nu$ and $\lambda_{\infty}$ on datasets with 20\% injected poison samples.} \label{APDParameter}
\end{figure*}

\subsection{Experimental Setup} \label{subsec_setup}
\noindent \textbf{Design of Experiments:} \
To demonstrate the advantage of  our  BSPAUC for handling noisy data, we compare our BSPAUC with some  state-of-the-art AUC maximization methods  on a variety of  benchmark datasets with/without artificial noisy data.  
Specifically, the compared algorithms are summarized as follows.
\\
$\mathbf{TSAM }$: A kernel-based algorithm which  updates the solution  based on  triply stochastic gradient descents \emph{w.r.t.}  random pairwise loss and random features   \cite{TSAM}. \\
$\mathbf{DSAM }$: A  modified deep learning algorithm which  updates the solution based on the doubly stochastic gradient descents \emph{w.r.t.} random pairwise loss    \cite{gu2019scalable}. \\
\textbf{KOIL$_{FIFO++}$} \footnote{KOIL$_{FIFO++}$  is available at \url{https://github.com/JunjieHu}.}: A kernelized online imbalanced learning algorithm which directly
maximizes the AUC objective with fixed budgets  for  positive   and  negative  class \cite{KOIL}.\\
$\mathbf{PPD}_{SG}$ \footnote{PPD$_{SG}$  is available at \url{https://github.com/yzhuoning}.}: A deep learning algorithm that builds on the saddle point reformulation and explores Polyak-\L ojasiewicz condition  \cite{PPD}.\\
$\mathbf{OPAUC}$ \footnote{OPAUC  is available at \url{http://lamda.nju.edu.cn/files}.} : A  linear method based on a regression algorithm, which only needs to maintain the first and second order statistics of data in memory  \cite{OPAUC}.


In addition, considering hyper-parameters $\mu$ and $\lambda_{\infty}$, we also design experiments to analyze their roles. Note that we introduce a new variable $\nu$ to illustrate the value of $\mu$, \emph{i.e.}, $\mu=\nu \lambda_{\infty}$, and a new indicator called absolute proportion  difference (APD):
$\text{APD}= \left |\frac{1}{n}\sum_{i=1}^n v_i - \frac{1}{m}\sum_{j=1}^m u_j \right |.$

\noindent \textbf{Datasets:} \ 
The benchmark datasets  are obtained from the LIBSVM repository\footnote{Datasets are available at  \url{https://www.csie.ntu.edu.tw/~cjlin/ libsvmtools/datasets}.} which take into account different dimensions and  imbalance ratios, as summarized in Table \ref{Datasets}.   The features  have been scaled to $[-1,1]$ for all datasets, and the multiclass classification datasets have been transformed to class imbalanced binary classification datasets. Specifically, we denote one class as the positive class, and the remaining classes  as the negative class. We randomly partition each dataset into $75\%$ for training and $25\%$ for testing.

In order to test the robustness of all methods, we construct two types of artificial noisy datasets.  The first method is to turn normal samples into noise samples by  flipping their labels \cite{frenay2013classification,ghosh2017robust}.  Specifically, we first utilize training set to obtain a discriminant hyperplane, and then stochastically select   samples far away from the discriminant hyperplane to flip their labels. 
Another method is to inject poison samples. 
 Specifically, we generate poison samples   for each dataset according to 
the poisoning attack method  \footnote{The poisoning attack method  is available at \url{https://github.com/ Trusted-AI/adversarial-robustness-toolbox}.} \cite{poison}, and inject these  poison samples into training set to form a noisy dataset. 
We  conduct experiments with different noise proportions (from 10\% to 30\%).

\noindent \textbf{Implementation:} \ 
All the experiments are conducted  on a PC with 48 2.2GHz cores, 80GB RAM and 4 Nvidia 1080ti GPUs and  all the results are the average of 10 trials. We complete the deep learning and kernel-based implementations  of our BSPAUC  in python, we also implement the TSAM and the modified DSAM  in python. We use the open codes as the implementations of KOIL$_{FIFO++}$, PPD$_{SG}$ and OPAUC, which are provided by their authors. 

For the OPAUC algorithm on high dimensional datasets (the feature size is larger than $1000$), we use the low-rank version, and set the rank parameter at $50$.  For all kernel-based methods, we use Gaussian kernel $k(x,x')=\exp{( - \frac{ ||x-x'||^2}{2\sigma^2}) }$ and tune its hyper-parameter $\sigma \in 2^{[-5,5]}$ by a 5-fold cross-validation. For TSAM method and our Algorithm 3 (in Appendix), the number of random Fourier features is selected from $[500 : 500 : 4000]$. For KOIL$_{FIFO++}$ method, the buffer  size  is set at  $100$  for  each  class.   For all deep learning methods, we utilize the same network structure which consists of  eight full connection layers and uses the ReLu activation function. 
For PPD$_{SG}$ method, the initial stage is tuned from $200$ to $2000$. For our BSPAUC, the hyper-parameters are chosen according to the proportion of selected samples. Specifically,  we start training  with  about $50\%$ samples, and then linearly increase $\lambda$ to include more samples. Note that all algorithms have the same  pre-training that we  select a small number of samples for training and take the trained model as the initial state of experiments.

\subsection{Results and Discussion}

First of all, we explain the missing part of the experimental results on Tables \ref{AUCR} and \ref{AUCnoise}. The KOIL$_{FIFO++}$  is a kernel-based method and needs to calculate and save the kernel matrix. For large datasets, such operation could cause out of memory. The PPD$_{SG}$ does not consider the severe imbalance of datasets and it can only produce negligible updates when the stochastic  batch samples are severely  imbalanced. If facing these issues, these two algorithms could not work as usual. Thus,  we cannot provide the corresponding results.

Table \ref{AUCR} presents the mean AUC results with the corresponding standard deviation of all  algorithms on the original benchmark datasets. The results show that, due to the  SPL used in our BSPAUC, the deep learning and kernel-based implementations of our  BSPAUC outperforms the DSAM and TSAM, which are the non self-paced versions of our implementations.  At the same time, our  BSPAUC also obtains better AUC results compared to other existing state-of-the-art  AUC maximization methods (\emph{i.e.}, OPAUC and PPD$_{SG}$).

Table \ref{AUCnoise}  shows the performance of the  deep learning implementation of our BSPAUC and other compared methods  on the two types of noisy datasets. The results clearly show that our BSPAUC achieves the best performance on all noisy datasets. Specifically, the larger the proportion of noisy data is, the more obvious the advantage is.   Our BSPAUC excludes the noise samples from training   by giving a zero weight  and  thus has a better robustness.

Figures \ref{AUCParameter} and \ref{APDParameter}  show the results of the  deep learning implementation of BSPAUC with different hyper-parameter values. Figure \ref{AUCParameter} clearly reveals that with the increase of $\lambda_{\infty}$, AUC  increases first and then decreases gradually. This phenomenon is expected. When $\lambda_{\infty}$ is small, increasing $\lambda_{\infty}$ 
will cause more easy samples to join the  training  and thus the generalization of  model is improved. However, when $\lambda_{\infty}$ is large enough,  complex (noise) samples start being included and then AUC  decreases. What's more, Figure \ref{APDParameter}  directly reflects that with the increase of $\mu$, which can be calculated by $\mu=\nu \lambda_{\infty}$, the proportions of selected positive and negative samples gradually approach. 
Further more, combining with the above two figures, we observe that too large APD often tends to low AUC. Large APD often implies that  some easy samples in the class with low  proportion of selected samples don't join the training while some complex samples in the other class with high proportion are selected. The above case leads to the reduction of generalization ability and thus causes low AUC. Importantly, our balanced self-paced regularization term is proposed for this issue.

\section{Conclusion}
In this paper,  we first provide a statistical explanation to self-paced AUC. Inspired by this, we propose our self-paced  AUC maximization formulation with a novel balanced self-paced regularization term. Then we propose a doubly cyclic block coordinate descent algorithm (\emph{i.e.}, BSPAUC) to optimize our  objective function. 
Importantly, we prove that the sub-problem with respect to all weight variables converges to a stationary point on the basis of closed-form solutions, and  our BSPAUC converges to a stationary point of our fixed optimization   objective under a mild assumption.  The experimental results demonstrate that our BSPAUC outperforms existing state-of-the-art  AUC maximization methods and  has better  robustness.
\section{Acknowledgments}
Bin Gu was partially supported by the National Natural Science Foundation of China (No:61573191).

\section{Appendix A. Table \ref{AUCR} with standard deviation}
Mean AUC results with the corresponding standard deviation on noisy datasets are shown in Table \ref{AUCR}. The results clearly show that our BSPAUC achieves the best performance on all noisy datasets. Specifically, the larger the proportion of noisy data is, the more obvious the advantage is.   Our BSPAUC excludes the noise samples from training   by giving a zero weight  and  thus has a better robustness.
\setcounter{table}{2}
\begin{table*} [htb]   \small
	\centering
	\caption{\small{Mean AUC results with the corresponding standard deviation on noisy datasets. (FP means the proportion of noise samples constructed by flipping labels, PP denotes the proportion of injected poison samples and  '–' means out of memory.)}}  \label{AUCR}
	\begin{tabular}{c|ccc|ccc|}
		\hline
		Datasets                 & \multicolumn{3}{c|}{rcv1}                                                            & \multicolumn{3}{c|}{a9a}                                                             \\ \hline
		FP                       & \multicolumn{1}{c|}{$10\%$} & \multicolumn{1}{c|}{$20\%$} & $30\%$                   & \multicolumn{1}{c|}{$10\%$} & \multicolumn{1}{c|}{$20\%$} & $30\%$                   \\ \hline
		\textbf{OPAUC}           & 0.958$\pm$0.015             & 0.933$\pm$0.025             & 0.845$\pm$0.023          & 0.824$\pm$0.009             & 0.804$\pm$0.013             & 0.778$\pm$0.016          \\
		\textbf{KOIL}$_{FIFO++}$ & 0.901$\pm$0.026             & 0.889$\pm$0.031             & 0.804$\pm$0.042          & 0.836$\pm$0.015             & 0.806$\pm$0.024             & 0.726$\pm$0.033          \\
		\textbf{TSAM}            & 0.961$\pm$0.002             & 0.946$\pm$0.015             & 0.838$\pm$0.024          & 0.877$\pm$0.012             & 0.846$\pm$0.015             & 0.752$\pm$0.018          \\
		\textbf{PDD}$_{SG}$      & 0.964$\pm$0.002             & 0.936$\pm$0.007             & 0.855$\pm$0.017          & 0.881$\pm$0.002             & 0.849$\pm$0.002             & 0.739$\pm$0.006          \\
		\textbf{DSAM}            & 0.983$\pm$0.005             & 0.962$\pm$0.011             & 0.862$\pm$0.034          & 0.886$\pm$0.003             & 0.837$\pm$0.007             & 0.811$\pm$0.006          \\
		\textbf{BSPAUC}          & \textbf{0.991$\pm$0.002}    & \textbf{0.985$\pm$0.009}    & \textbf{0.945$\pm$0.009} & \textbf{0.914$\pm$0.006}    & \textbf{0.894$\pm$0.012}    & \textbf{0.883$\pm$0.010} \\ \hline
		Datasets                 & \multicolumn{3}{c|}{skin\_nonskin}                                                   & \multicolumn{3}{c|}{cod-rna}                                                         \\ \hline
		FP                       & \multicolumn{1}{c|}{$10\%$} & \multicolumn{1}{c|}{$20\%$} & $30\%$                   & \multicolumn{1}{c|}{$10\%$} & \multicolumn{1}{c|}{$20\%$} & $30\%$                   \\ \hline
		\textbf{OPAUC}           & 0.925$\pm$0.009             & 0.884$\pm$0.009             & 0.815$\pm$0.008          & 0.902$\pm$0.021             & 0.864$\pm$0.036             & 0.783$\pm$0.034          \\
		\textbf{KOIL}$_{FIFO++}$ & ---                         & ---                         & ---                      & ---                         & ---                         & ---                      \\
		\textbf{TSAM}            & 0.937$\pm$0.001             & 0.913$\pm$0.004             & 0.842$\pm$0.007          & 0.933$\pm$0.006             & 0.880$\pm$0.004             & 0.808$\pm$0.015          \\
		\textbf{PDD}$_{SG}$      & 0.940$\pm$0.001             & 0.912$\pm$0.004             & 0.852$\pm$0.012          & 0.937$\pm$0.002             & 0.873$\pm$0.006             & 0.788$\pm$0.021          \\
		\textbf{DSAM}            & 0.961$\pm$0.003             & 0.917$\pm$0.002             & 0.819$\pm$0.014          & 0.975$\pm$0.002             & 0.922$\pm$0.002             & 0.781$\pm$0.016          \\
		\textbf{BSPAUC}          & \textbf{0.979$\pm$0.001}    & \textbf{0.944$\pm$0.002}    & \textbf{0.912$\pm$0.010} & \textbf{0.990$\pm$0.001}    & \textbf{0.956$\pm$0.001}    & \textbf{0.874$\pm$0.007} \\ \hline
		Datasets                 & \multicolumn{3}{c|}{rcv1}                                                            & \multicolumn{3}{c|}{a9a}                                                             \\ \hline
		PP                       & \multicolumn{1}{c|}{$10\%$} & \multicolumn{1}{c|}{$20\%$} & $30\%$                   & \multicolumn{1}{c|}{$10\%$} & \multicolumn{1}{c|}{$20\%$} & $30\%$                   \\ \hline
		\textbf{OPAUC}           & 0.923$\pm$0.016             & 0.863$\pm$0.020             & 0.803$\pm$0.026          & 0.833$\pm$0.015             & 0.816$\pm$0.018             & 0.797$\pm$0.016          \\
		\textbf{KOIL}$_{FIFO++}$ & 0.891$\pm$0.023             & 0.838$\pm$0.027             & 0.793$\pm$0.044          & 0.831$\pm$0.022             & 0.823$\pm$0.034             & 0.806$\pm$0.035          \\
		\textbf{TSAM}            & 0.930$\pm$0.002             & 0.859$\pm$0.012             & 0.809$\pm$0.021          & 0.872$\pm$0.002             & 0.849$\pm$0.002             & 0.838$\pm$0.005          \\
		\textbf{PDD}$_{SG}$      & 0.934$\pm$0.003             & 0.918$\pm$0.005             & 0.828$\pm$0.017          & 0.885$\pm$0.002             & 0.873$\pm$0.001             & 0.839$\pm$0.004          \\
		\textbf{DSAM}            & 0.902$\pm$0.007             & 0.845$\pm$0.017             & 0.757$\pm$0.051          & 0.881$\pm$0.004             & 0.852$\pm$0.004             & 0.843$\pm$0.009          \\
		\textbf{BSPAUC}          & \textbf{0.955$\pm$0.002}    & \textbf{0.937$\pm$0.003}    & \textbf{0.876$\pm$0.010} & \textbf{0.911$\pm$0.002}    & \textbf{0.907$\pm$0.002}    & \textbf{0.896$\pm$0.008} \\ \hline
		Datasets                 & \multicolumn{3}{c|}{skin\_nonskin}                                                   & \multicolumn{3}{c|}{cod-rna}                                                         \\ \hline
		PP                       & \multicolumn{1}{c|}{$10\%$} & \multicolumn{1}{c|}{$20\%$} & $30\%$                   & \multicolumn{1}{c|}{$10\%$} & \multicolumn{1}{c|}{$20\%$} & $30\%$                   \\ \hline
		\textbf{OPAUC}           & 0.927$\pm$0.010             & 0.880$\pm$0.020             & 0.856$\pm$0.021          & 0.914$\pm$0.015             & 0.893$\pm$0.023             & 0.858$\pm$0.034          \\
		\textbf{KOIL}$_{FIFO++}$ & ---                         & ---                         & ---                      & ---                         & ---                         & ---                      \\
		\textbf{TSAM}            & 0.933$\pm$0.002             & 0.911$\pm$0.004             & 0.877$\pm$0.005          & 0.953$\pm$0.002             & 0.903$\pm$0.006             & 0.886$\pm$0.005          \\
		\textbf{PDD}$_{SG}$      & 0.935$\pm$0.001             & 0.927$\pm$0.002             & 0.898$\pm$0.009          & 0.972$\pm$0.001             & 0.941$\pm$0.001             & 0.881$\pm$0.014          \\
		\textbf{DSAM}            & 0.980$\pm$0.001             & 0.954$\pm$0.001             & 0.902$\pm$0.005          & 0.973$\pm$0.004             & 0.938$\pm$0.005             & 0.913$\pm$0.004          \\
		\textbf{BSPAUC}          & \textbf{0.995$\pm$0.001}             &\textbf{0.982$\pm$0.001}            & \textbf{0.965$\pm$0.006} & \textbf{0.991$\pm$0.003}    & \textbf{0.973$\pm$0.003}    & \textbf{0.953$\pm$0.008} \\ \hline
	\end{tabular}
\end{table*}
\section{Appendix B. Implementation of Algorithm \ref{Kernel}}
When we fix $\mathbf{v},\mathbf{u}$ to update $\theta$:
\begin{equation} 
	\label{f}
	\theta^{t}= \argmin \limits_{ \theta}  \frac{1}{nm} \sum_{i=1}^{n}\sum_{j=1}^{m}v_i u_j \xi_{ij}
	+\tau \Omega(\theta)  + \text{const}, 
\end{equation}
where $\xi_{ij}=\max \{ 1-f(x^+_i)+f(x^-_j),0 \} $. 

For the kernel-based implementation of solving Eq. (\ref{f}),  we  also compute the gradient on random pairs of weighted  samples which  are selected  from two subsets  $\hat{X}^+$ and $\hat{X}^-$  respectively.  In this case, the weighted batch AUC loss is defined as:
\begin{small}
	\begin{align} \label{DF}
		\sum_{j=1}^\pi v_j u_j  \xi_{jj} =\sum_{j=1}^\pi v_j u_j  \max \{1-f(x^+_j)+f(x^-_j), 0 \}.
\end{align}\end{small}
Then, we apply random Fourier feature method to approximate the kernel function \cite{rahimi2008random,dai2014scalable} for  large-scale problems. The mapping function of $D$ random features is defined as
\begin{small}
	\begin{align*}
		\phi_{\omega}(x)=\sqrt{1/D}[ &\cos(\omega_1x),\ldots,\cos(\omega_Dx),  \\ &\sin(\omega_1x),\ldots,\sin(\omega_Dx)]^T, 
\end{align*}\end{small}where  $\omega_i$ is randomly sampled according to the density function $p(\omega)$ associated with $k(x,x')$ \cite{odland2017fourier}.  Then, based on the  weighted batch AUC loss (\ref{DF}) and the random feature mapping function $\phi_{\omega}(x)$, we obtain Algorithm \ref{Kernel}  by applying the  triply stochastic gradient descent  method (TSGD) \cite{TSAM} \emph{w.r.t.}  random  pairs of weighted samples and random features.
\setcounter{algorithm}{2}
\begin{algorithm} [H]
	\caption{Kernel-based implementation of solving Eq.   (\ref{f})}  \label{Kernel}
	\begin{algorithmic}[1]   
		\REQUIRE $p(\omega),\phi_{\omega}(x),\tau, T, \mathbf{\eta}, \hat{X}^+, \hat{X}^-, \pi,\theta^0$.
		\FOR{$i=1, \cdots ,T$}
		\STATE Sample $\hat{x}^+_1,...,\hat{x}^+_\pi$ from $\hat{X}^+$.
		\STATE Sample $\hat{x}^-_1,...,\hat{x}^-_\pi$ from $\hat{X}^-$.
		\STATE Sample $\omega_i$ $\sim$ $p(\omega)$ with seed $i$.
		\STATE Calculate $f({x})$ by \textbf{Predict}$(x,\{\alpha_i\}_{j=1}^{i-1})$.
		\STATE Get $\alpha_i$ according to the following formula:
		\begin{align*}  \small
			\alpha_i=&-\frac{\eta_i}{\pi}\sum_{j=1}^\pi v_j u_j \frac{\partial \xi_{jj} }{ \partial \theta } .\qquad \qquad \qquad \qquad
		\end{align*}
		\STATE Update $\alpha_j=(1-\eta_j\tau)\alpha_j,j\in [1,i-1]$.
		\ENDFOR
		\ENSURE $\theta=\{\alpha_i\}_{i=1}^T$.
	\end{algorithmic}
\end{algorithm}
\begin{algorithm} [H]
	\caption{$f(x)=$ \textbf{Predict}$(x,\{\alpha_i\}_{i=1}^{t})$}
	\begin{algorithmic}[1]
		\REQUIRE $p(\omega),\phi_{\omega}(x),x,\{\alpha_i\}_{i=1}^{t}$.
		\STATE Set $f(x)=0$.
		\FOR { $i=1, \cdots ,t$}
		\STATE Sample $\omega_i$ $\sim$ $p(\omega)$ with seed $i$.
		\STATE $f(x)=f(x)+\alpha_i \phi_{\omega_i}(x)$.
		\ENDFOR
		\ENSURE $f(x)$.
	\end{algorithmic}
\end{algorithm}
\section{Appendix C. Proof of Theorem \ref{theoremUp}}
Firstly, we introduce some notions again.
Let $X$ be a compact subset of $\mathbb{R}^d$, $Y = \{-1, +1\}$ be the label set and $Z = X \times Y$. Given a distribution $P(z)$ and let  $S=\{z_i=(x_i,y_i)\}_{i=1}^n$ be an independent and identically distributed (IID)  training set drawn from $P(z)$, where $x_i \in X$, $y_i \in Y$ and $z_i \in Z$. Thus, empirical  AUC risk on $S$ can be formulated as:
\begin{align} \label{emp}
	R_{emp}(S;f)=\frac{1}{n(n-1)}\sum_{z_i,z_j \in S, z_i \neq z_j} L_f(z_i,z_j).
\end{align}
Here, $f \in \mathcal{F}: \mathbb{R}^d \to \mathbb{R}$ is one real-valued function and the pairwise loss function $L_f(z_i,z_j)$ for AUC is defined as:
\begin{equation*} \small
	L_f(z_i,z_j)=
	\left \{\begin{array} {l@{\ \ \textrm{if}  \ \ }l} 0   &  \ y_i=y_j
		\\ \mathbb{I}(f(x_i) \le f(x_j))  &y_i =+1 \& \ y_j=-1\\
		\mathbb{I}(f(x_j) \le f(x_i))  &y_j =+1 \& \ y_i=-1
	\end{array} \right.
\end{equation*}
where $\mathbb{I}(\cdot)$ is the indicator function such that
$\mathbb{I}(\pi)$ equals 1 if $\pi$ is true and 0 otherwise.

Further, the expected risk of AUC for the  distribution $P(z)$ can be defined as:
\begin{align}\label{exp}
	&R_{exp}(P(z);f):=\mathbb{E}_{z_1,z_2 \sim P(z)^2 } L_f(z_1,z_2) \nonumber  \\
	=&\mathbb{E}_{S \sim P(z)^n } \left[ \frac{1}{n(n-1)}  \sum_{z_i,z_j \in S, z_i \neq z_j}  L_f(z_i,z_j) \right]   \nonumber\\
	=&\mathbb{E}_{S}[R_{emp}(S;f)].
\end{align}

Under the assumption that  our training set in reality is composed by not only target clean data but also  a  proportion of noise samples, Gong et al. \cite{why} connect the data distribution $P_{train}(z)$ of the training set  with the distribution $P_{target}(z)$ of the target set using a  weight function $W_{\lambda}(z)$:
\begin{align} \small \label{OriginalF} 
	& P_{target}(z)=\frac{1}{\alpha_*}W_{\lambda}(z)P_{train}(z),  \\
	&\alpha_*=\int_{Z}W_{\lambda}(z)P_{train}(z)dz , \nonumber
\end{align}
where  $0 \le W_{\lambda}(z) \le 1 $ and $\alpha_*$  denotes the normalization factor.  Intuitively, $W_{\lambda}(z)$ gives larger weights to easy samples than to complex samples and with the increase of pace parameter $\lambda$, all samples tend to be assigned larger weights.  

Then, Eq. (\ref{OriginalF}) can be   reformulated as:
\begin{align} \label{NewF}
	& P_{train}(z)=\alpha_*P_{target}(z)+(1-\alpha_*)E(z), \\
	& E(z)=\frac{1}{1-\alpha_*}(1-W_{\lambda_*}(z))P_{train}(z). \nonumber
\end{align}
Based on (\ref{NewF}), we define the pace distribution $Q_{\lambda}(z)$ as:
\begin{align} \label{Q}
	Q_{\lambda}(z)=\alpha_{\lambda}P_{target}(z)+(1-\alpha_{\lambda})E(z),
\end{align}
where $\alpha_{\lambda}$ varies from $1$ to $\alpha_{*}$ with increasing pace parameter $\lambda$.  Correspondingly,  $Q_{\lambda}(z)$ simulates the changing process from $P_{target}(z)$ to $P_{train}(z)$.  Note that $Q_{\lambda}(z)$ can also be regularized into the following formulation:
\begin{align}   \label{Q2}
	Q_{\lambda}(z)  \propto  W_{\lambda}(z)P_{train}(z).
\end{align}
Here,
\begin{align*}   
	W_{\lambda}(z) \propto  \frac{\alpha_{\lambda}P_{target}(z)+(1-\alpha_{\lambda})E(z)}{\alpha_*P_{target}(z)+(1-\alpha_*)E(z)},
\end{align*}
and
\begin{align*}   
 &\frac{\alpha_{\lambda}P_{target}(z)+(1-\alpha_{\lambda})E(z)}{\alpha_*P_{target}(z)+(1-\alpha_*)E(z)}\\
 =&\frac{\alpha_{\lambda}P_{target}(z)+(1-\alpha_{\lambda})E(z)}{P_{train}(z)},
\end{align*}
where $0 \le  W_{\lambda}(z) \le 1 $ through normalizing its maximal value to 1.

Then, after  introducing the definition of McDiarmid inequality, we obtain the relationship between the empirical AUC risk  $R_{emp}$ (\ref{emp}) and the expected AUC risk $R_{exp}$ (\ref{exp}) in Lemma \ref{TwoR}.
\begin{definition} \label{Ineq}
	(McDiarmid Inequality) Let $z_1,...z_n \in S$  be a set of independent random variables and assume that there exists $c_1,...,c_n >0$ such that $f:S^n \rightarrow \mathbb{R}$ satisfies the following inequalities:
	\begin{align*}
		|f(z_1,...,z_i,...,z_n)-f(z_1,...,z_i',...z_n)| \le c_i  ,
	\end{align*}
	for all $i \in \{1,...,n\}$ and points $z_1',...z_n' \in S$. Then, $\forall \epsilon >0$ we have
	\begin{small}
	\begin{align*}
		&P(f(z_1,...,z_n)-\mathbb{E}(f(z_1,...,z_n)) \ge  \epsilon) \le \exp \left( \frac{-2 \epsilon^2}{\sum_{i=1}^n c_i^2} \right ) \\
		&P(f(z_1,...,z_n)-\mathbb{E}(f(z_1,...,z_n)) \le  -\epsilon) \le \exp \left (\frac{-2 \epsilon^2}{\sum_{i=1}^n c_i^2} \right) 
	\end{align*}
	\end{small}
\end{definition}
\begin{lemma} \label{TwoR}
	An independent and identically distributed sample set $S=\{z_1,...,z_n\}, z_i \in Z$ is obtained according to the distribution $P(z)$. Then for any $\delta >  0$  and  any $ f \in \mathcal{F}$, the following holds for $R_{emp}$ (\ref{emp}) and $R_{exp}$ (\ref{exp}) with confidence at least   $1-\delta$:
	\begin{align}  
		R_{exp}(P(z);f) \le R_{emp}(S;f) +\sqrt{\frac{\ln (1/\delta)}{n/2}}.
	\end{align}
\end{lemma}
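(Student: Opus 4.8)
The plan is to apply the McDiarmid inequality (Definition \ref{Ineq}) directly to the empirical risk, regarded as a function of the $n$ independent sample points. Concretely, I would set $g(z_1,\dots,z_n) := R_{emp}(S;f)$ as in (\ref{emp}). The key structural fact, already recorded in (\ref{exp}), is that $\mathbb{E}_S[g] = R_{exp}(P(z);f)$. Hence the one-sided deviation I want to control is exactly
\begin{align*}
	R_{exp}(P(z);f) - R_{emp}(S;f) = \mathbb{E}_S[g] - g,
\end{align*}
which is precisely the quantity the second McDiarmid bound governs. So once the bounded-differences condition is checked, the result follows by a one-line choice of $\epsilon$.

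The main step is to verify the bounded-differences property and compute the constants $c_k$. I would fix an index $k$ and replace $z_k$ by an arbitrary $z_k'$, leaving the other points unchanged. Since $R_{emp}$ is a normalized double sum over ordered pairs $(i,j)$ with $i\neq j$, the only terms that change are those indexed by a pair containing $k$: the pairs $(k,j)$ for $j\neq k$ and the pairs $(i,k)$ for $i\neq k$, giving $2(n-1)$ affected terms in total. Because $L_f(\cdot,\cdot)\in[0,1]$ (it is either $0$ or an indicator), each affected term changes by at most $1$, so the unnormalized sum changes by at most $2(n-1)$. After dividing by the normalizing factor $n(n-1)$, this yields
\begin{align*}
	\bigl| g(z_1,\dots,z_k,\dots,z_n) - g(z_1,\dots,z_k',\dots,z_n) \bigr| \le \frac{2(n-1)}{n(n-1)} = \frac{2}{n} =: c_k,
\end{align*}
uniformly in $k$. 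I expect this counting of affected pairs to be the only real subtlety: the empirical AUC risk is a U-statistic rather than a simple average of independent terms, so one must track that changing a single point perturbs a whole row and column of the pairwise matrix, yet the normalization absorbs this back down to the same $O(1/n)$ scale one would get for an ordinary average.

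Finally I would substitute $\sum_{k=1}^n c_k^2 = n\cdot (2/n)^2 = 4/n$ into the second inequality of Definition \ref{Ineq}, obtaining
\begin{align*}
	P\bigl( \mathbb{E}_S[g] - g \ge \epsilon \bigr) \le \exp\!\left( \frac{-2\epsilon^2}{4/n} \right) = \exp\!\left( \frac{-n\epsilon^2}{2} \right).
\end{align*}
Setting the right-hand side equal to $\delta$ and solving gives $\epsilon = \sqrt{2\ln(1/\delta)/n} = \sqrt{\ln(1/\delta)/(n/2)}$. Therefore, with confidence at least $1-\delta$ we have $\mathbb{E}_S[g] - g \le \epsilon$, i.e. $R_{exp}(P(z);f) \le R_{emp}(S;f) + \sqrt{\ln(1/\delta)/(n/2)}$, which is the claimed bound. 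The argument is uniform over the fixed $f\in\mathcal{F}$ appearing in the statement, so no union bound or covering argument is needed here.
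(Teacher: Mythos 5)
Your proposal is correct and follows essentially the same route as the paper's proof: both establish the bounded-differences constant $c_k = 2/n$ for the empirical AUC risk (you by directly counting the $2(n-1)$ affected ordered pairs, the paper by invoking the symmetry $L_f(z_i,z_j)=L_f(z_j,z_i)$ to collapse to one row — the same bookkeeping), and both then apply the one-sided McDiarmid inequality with $\sum_k c_k^2 = 4/n$ and solve $\delta = \exp(-n\epsilon^2/2)$ for $\epsilon$. No substantive differences.
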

\begin{proof}
	Let $S'$ be a set of samples same as $S$ and then we change the $z \in S'$ to $z'$. In this case,  $S'$ is different from $S$ with only one sample and $z \in S,z'\in S'$ are the two different samples. Then, we have
	\begin{align*}  
		&R_{emp}(S;f)-R_{emp}(S';f)\\
		\overset{a}{=}&\frac{2}{n(n-1)}\left(\sum_{z_j \in S, z_j \neq z} L_f(z,z_j)-\sum_{z_j \in S', z_j \neq z' } L_f(z',z_j)\right) \\
		\overset{b}{\le} &\frac{2(n-1)}{n(n-1)} = \frac{2}{n}
	\end{align*}
	The equality (a) is due to the property that  $\forall z_i,z_j \in Z, L_f(z_i,z_j)=L_f(z_j,z_i)$,  and the inequality (b) is obtained by the fact that the function $L_f$ is bounded by $[0,1]$. Similarly, we can get 
	\begin{align*}  
		|R_{emp}(S;f)-R_{emp}(S';f)| \le \frac{2}{n}.
	\end{align*}
	
	According to Definition \ref{Ineq}, $\forall \epsilon >0$ we have 
	\begin{equation*} \small
		\begin{aligned}  
			&P( R_{emp}(S;f) - E_{S}[R_{emp}(S;f)] \le - \epsilon ) \le \exp (\frac{-\epsilon^2}{2/n}) \\
			\iff & P( R_{emp}(S;f) - R_{exp}(P(z);f) \ge - \epsilon ) \ge 1- \exp (\frac{-\epsilon^2}{2/n}) 
		\end{aligned}
	\end{equation*}
	Then, we define $\delta$ as $\exp (\frac{-\epsilon^2}{2/n})$ and calculate $\epsilon$ as $\sqrt{\frac{\ln (1/\delta)}{n/2}}$.\\
	In this case, we proof that with confidence at least   $1 - \delta$, the following holds
	\begin{align}  
		R_{exp}(P(z);f) \le R_{emp}(S;f) +\sqrt{\frac{\ln (1/\delta)}{n/2}}.
	\end{align}
	The proof is then completed.
\end{proof}
Combining with the pace distribution $Q_{\lambda}$ (\ref{Q}), we get the following Lemma.
\begin{lemma} \label{old}
	An independent and identically distributed sample set $S=\{z_1,...,z_n\}$ is obtained according to the pace distribution $Q_{\lambda}$ (\ref{Q}). Then for any $\delta>0$  and  any $ f \in \mathcal{F}$, with confidence at least $1-\delta$ we have:
	\begin{align} 
		R_{exp}(P_{target};f)  & \le  R_{emp}(S;f)+e_{\lambda} +\sqrt{\frac{\ln (1/\delta)}{n/2}}
	\end{align}
	where $e_{\lambda}$ is defined as $R_{exp}(P_{target};f) - R_{exp}(Q_{\lambda};f)$ and  with the increasing of $\lambda$, $e_{\lambda}$ decreases monotonically from $0$.
\end{lemma}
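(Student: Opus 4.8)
The plan is to obtain Lemma~\ref{old} as a direct consequence of Lemma~\ref{TwoR} together with a separate monotonicity analysis of the deviation term $e_\lambda$. First I would instantiate Lemma~\ref{TwoR} with the generating distribution taken to be the pace distribution, i.e.\ set $P(z)=Q_\lambda(z)$. Since $S$ is assumed IID from $Q_\lambda$, Lemma~\ref{TwoR} immediately yields, with confidence at least $1-\delta$,
\begin{align*}
R_{exp}(Q_\lambda;f) \le R_{emp}(S;f) + \sqrt{\frac{\ln(1/\delta)}{n/2}}.
\end{align*}
Adding the defining identity $e_\lambda = R_{exp}(P_{target};f) - R_{exp}(Q_\lambda;f)$ to both sides turns the left-hand side into $R_{exp}(P_{target};f)$ and produces exactly the claimed bound. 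This half is essentially bookkeeping once the concentration result is in hand.

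The substantive part is the behaviour of $e_\lambda$. I would view $R_{exp}(\cdot;f)$ as the symmetric bilinear form $\langle P,Q\rangle := \mathbb{E}_{z_1\sim P, z_2\sim Q}[L_f(z_1,z_2)]$, which is legitimate because $L_f$ is symmetric in its arguments. Writing $Q_\lambda = (1-\beta)P_{target} + \beta E$ with $\beta := 1-\alpha_\lambda$, Eq.~(\ref{Q}) shows that $\beta$ increases from $0$ to $1-\alpha_*$ as $\lambda$ grows. Expanding the quadratic form gives
\begin{align*}
R_{exp}(Q_\lambda;f) = (1-\beta)^2\langle P_{target},P_{target}\rangle + 2\beta(1-\beta)\langle P_{target},E\rangle + \beta^2\langle E,E\rangle,
\end{align*}
so at $\beta=0$ (the smallest $\lambda$) we have $Q_\lambda=P_{target}$ and hence $e_\lambda=0$, establishing the ``from $0$'' part of the claim.

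To obtain monotonicity I would differentiate with respect to $\beta$ and recognise the derivative of the quadratic form as $\frac{d}{d\beta}R_{exp}(Q_\lambda;f) = 2\langle E - P_{target},\, Q_\lambda\rangle = 2\bigl(\langle E,Q_\lambda\rangle - \langle P_{target},Q_\lambda\rangle\bigr)$. Here $\langle E,Q_\lambda\rangle$ is the expected pairwise loss when one member of the pair is drawn from the noise-related component $E$ and the other from $Q_\lambda$, while $\langle P_{target},Q_\lambda\rangle$ is the analogous quantity with a clean target sample. The key step is to invoke the paper's standing modelling assumption that noise samples carry relatively large loss, formalised as $\langle E,Q_\lambda\rangle \ge \langle P_{target},Q_\lambda\rangle$ along the whole mixing path; this makes the derivative nonnegative, so $R_{exp}(Q_\lambda;f)$ is nondecreasing in $\beta$ and therefore nondecreasing in $\lambda$ (as $\beta$ is monotone in $\lambda$). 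Consequently $e_\lambda = R_{exp}(P_{target};f)-R_{exp}(Q_\lambda;f)$ is nonincreasing in $\lambda$, and since it starts at $0$ it decreases monotonically from $0$, completing the argument.

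I expect the main obstacle to be justifying the sign of $\langle E,Q_\lambda\rangle - \langle P_{target},Q_\lambda\rangle$ rigorously: the concentration half is a one-line substitution, whereas the monotonicity rests entirely on encoding ``noise has larger loss'' as an inequality between these two cross expectations and verifying that it holds uniformly over the whole interpolation path rather than only at the endpoints.
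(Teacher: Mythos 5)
Your treatment of the concentration half coincides with the paper's: the paper likewise decomposes $R_{exp}(P_{target};f)-R_{emp}(S;f)$ into $e_{\lambda}$ plus $R_{exp}(Q_{\lambda};f)-R_{emp}(S;f)$, controls the latter by applying Lemma~\ref{TwoR} with $P=Q_{\lambda}$ (legitimate since $S$ is IID from $Q_{\lambda}$), and adds the defining identity for $e_{\lambda}$; that part is exactly the bookkeeping you describe. Where you genuinely diverge is the monotonicity of $e_{\lambda}$: the paper does not prove it at all, asserting only that it follows from ``the definition of $Q_{\lambda}$ which simulates the changing process from $P_{target}$ to $P_{train}$,'' whereas you expand $R_{exp}(Q_{\lambda};f)$ as a quadratic form in the mixing weight $\beta=1-\alpha_{\lambda}$ and reduce monotonicity to the sign of $\langle E-P_{target},\,Q_{\lambda}\rangle$ along the interpolation path. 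Your derivative computation is correct, and your version is more honest about what is actually required — an explicit inequality encoding that noise-involving pairs incur at least as much expected loss as clean pairs, uniformly in $\beta$ — and correctly flags that this is an unproven modelling assumption rather than a consequence of the definitions; the paper's one-line justification glosses over precisely this gap. Note that neither argument affects the validity of the displayed inequality itself: the decomposition is exact, so the bound holds regardless of the sign or monotonicity of $e_{\lambda}$, and the monotonicity statement is purely interpretive.
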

\begin{proof}
	The empirical risk in training set tends not to approximate the expected risk due to the inconsistence of $P_{train}$ and $P_{target}$. However, by introducing pace empirical risk with pace distribution $Q_{\lambda}$ in the error analysis, we can formulate the following error decomposition:
	\begin{small}
	\begin{align} \label{S12} 
		&R_{exp}(P_{target};f) - R_{emp}(S;f)  \\
		=&R_{exp}(P_{target};f)-R_{exp}(Q_{\lambda};f)+R_{exp}(Q_{\lambda};f)-R_{emp}(S;f), \nonumber
	\end{align}
	\end{small}
	and we define 
	\begin{align}  \label{S1}
		e_{\lambda}=R_{exp}(P_{target};f) - R_{exp}(Q_{\lambda};f).
	\end{align}
	Considering the definition of $Q_{\lambda}$ (\ref{Q}) which simulates the changing process from $P_{target}$ to $P_{train}$   and the relationship  (\ref{OriginalF}) between $P_{train}$  and $P_{target}$, we conclude that with the increasing of $\lambda$, $e_{\lambda}$ decreases monotonically from $0$.\\
	Because $S$ is subject to  the pace distribution $Q_{\lambda}$ (\ref{Q}), according to Lemma \ref{TwoR}, the following holds  with confidence  $ 1 - \delta$
	\begin{align} \label{S2}
		R_{exp}(Q_{\lambda};f)-R_{emp}(S;f) \le \sqrt{\frac{\ln (1/\delta)}{n/2}}  .
	\end{align}
	Combining (\ref{S12}),(\ref{S1}) with (\ref{S2}), we have
	\begin{align*}
		R_{exp}(P_{target};f)   \le R_{emp}(S;f)+ e_{\lambda}  + \sqrt{\frac{\ln (1/\delta)}{n/2}}   .
	\end{align*}
	The proof is then completed.
\end{proof}
Finally, the proof of  Theorem \ref{theoremUp} is as follows.
\begin{proof}
	When we use training set $S=\{z_1,...,z_m\}$ to approximate $P_{train}$, we have
	\begin{align*}
		P_{train}=\sum_{i=1}^mp_i D_{z_i}(z), \ \forall i \in \{1,...,m\},
	\end{align*}
	where $p_i=\frac{1}{m}$ and $D_{z_i}(z)$ denotes  the Dirac delta function centered at $z_i$:
	\begin{align*} 
		\forall z!=z_i, \ D_{z_i}(z)=0 \ \text{and} \ \int_{Z} D_{z_i}(z) dz =1.
	\end{align*}
	It is easy to see that $P_{train}$ supposes a uniform density on each sample $z_i$.  Next, according to the special formulation (\ref{Q2}) of $Q_{\lambda}$,  we have:
	\begin{align*}
		Q_{\lambda}(z) \propto \sum_{i=1}^m  W_{\lambda}(z_{i})p_i D_{z_i}(z).
	\end{align*} 
	In this case, $S_{Q}=\{W_{\lambda}(z_1)z_1,...,W_{\lambda}(z_m)z_m  \}$  is  subject to the  pace distribution $Q_{\lambda}$. And the empirical risk on $Q_{\lambda}$ can be rewritten as 
	\begin{align*} 
		&R_{emp}(S_{Q};f)\\
		 =&\frac{1}{n_{\lambda}(n_{\lambda}-1)}\sum_{z_i,z_j \in S, z_i\neq z_j} W_{\lambda}(z_i) W_{\lambda}(z_j) L_f(z_i,z_j)
	\end{align*}
	where $n_{\lambda}=\sum_{i=1}^m \mathbb{I}(W_{\lambda}(z_i)!=0)$ denotes the number of selected samples in SPL setting. 
	
	And combining with  Lemma \ref{old}, we conclude that for any $\delta>0$ and any $f \in \mathcal{F}$, with confidence at least $1-\delta$ over a sample set $S$, we have: 
	\begin{small}
		\begin{align}
			R_{exp}(P_{target};f)  \le& \frac{1}{n_{\lambda}(n_{\lambda}-1)} \sum_{z_i,z_j \in S \atop z_i\neq z_j} W_{\lambda}(z_i) W_{\lambda}(z_j) L_f(z_i,z_j)   \nonumber \\
			+ & e_{\lambda}+\sqrt{\frac{\ln (1/\delta)}{n_{\lambda}/2}}, 
		\end{align}
	\end{small}
	where $n_{\lambda}$ denotes the number of selected samples,  $e_{\lambda}$ is defined as $R_{exp}(P_{target};f) - R_{exp}(Q_{\lambda};f)$ and  with the increasing of $\lambda$, $e_{\lambda}$ decreases monotonically from $0$.
\end{proof}

\section{Appendix D. Proof of Theorem \ref{Non-con}}
The objective function of our BSPAUC is defined as follows:
\begin{align}  \label{BSPAUCOF}
	&\min_{\theta,\mathbf{v},\mathbf{u}} \  \mathcal{L}(\theta,\mathbf{v},\mathbf{u};\lambda)   \\ 	=&\min_{\theta,\mathbf{v},\mathbf{u}} \   \frac{1}{nm} \sum_{i=1}^{n}\sum_{j=1}^{m}v_i u_j \xi_{ij} +\tau \Omega(\theta)  \\
	 -&  \lambda \left(\frac{1}{n}\sum_{i=1}^{n} v_i+\frac{1}{m}\sum_{j=1}^{m} u_j \right)+ \mu \left(\frac{1}{n}\sum_{i=1}^{n} v_i-\frac{1}{m}\sum_{j=1}^{m} u_j \right)^2   \nonumber   \\   
	& \ s.t.  \ \mathbf{v}\in [0,1]^n, \mathbf{u}\in [0,1]^m     \nonumber
\end{align}
where $\xi_{ij}=\max \{1-f(x^+_i)+f(x^-_j), 0 \}$ is the pairwise hinge loss.
\begin{proof}
	For the sake of clarity,  we define $\mathcal{K}(\mathbf{v},\mathbf{u})=\mathcal{L}(\mathbf{v},\mathbf{u};\theta,\lambda)$ as the sub-problem of (\ref{BSPAUCOF}) where $\theta$ and $\lambda$ are fixed:
	\begin{align}  \label{K}
		&\min_{\mathbf{v},\mathbf{u}} \ \mathcal{K}(\mathbf{v},\mathbf{u}) =\min_{\mathbf{v},\mathbf{u}} \  \mathcal{L}(\mathbf{v},\mathbf{u};\theta,\lambda)   \\ 	
		=&\min_{\mathbf{v},\mathbf{u}} \   \frac{1}{nm} \sum_{i=1}^{n}\sum_{j=1}^{m}v_i u_j \xi_{ij} 
		 -  \lambda \left(\frac{1}{n}\sum_{i=1}^{n} v_i+\frac{1}{m}\sum_{j=1}^{m} u_j \right) \nonumber\\
		 +& \mu \left(\frac{1}{n}\sum_{i=1}^{n} v_i-\frac{1}{m}\sum_{j=1}^{m} u_j \right)^2 + \text{const}  \nonumber   \\   
		& \ s.t.  \ \mathbf{v}\in [0,1]^n, \mathbf{u}\in [0,1]^m     \nonumber
	\end{align}
	
	First of all, we have to be clear that the necessary condition for $\mathcal{K}(\mathbf{v},\mathbf{u})$ to be a convex function is
	\begin{equation} \label{necessary}  \small
		\begin{aligned} 
			&\mathcal{K}(\frac{1}{2}(\mathbf{v}^1+\mathbf{v}^2),\frac{1}{2}(\mathbf{u}^1+\mathbf{u}^2))-\frac{1}{2}(\mathcal{K}(\mathbf{v}^1,\mathbf{u}^1)+\mathcal{K}(\mathbf{v}^2,\mathbf{u}^2)) \le 0,\\
			&\mathbf{v}^1,\mathbf{v}^2 \in [0,1]^n,  \mathbf{u}^1, \mathbf{u}^2 \in [0,1]^m. 
		\end{aligned}
	\end{equation} 
	Without loss of generality,  let 
	\begin{align*}
		\mathbf{v}^1=(1,1,...,1),\mathbf{u}^1=(0,0,...,0),\\
		\mathbf{v}^2=(0,0,...,0),\mathbf{u}^2=(1,1,...,1).
	\end{align*}
	Then $\mathcal{K}(\mathbf{v},\mathbf{u})$ satisfies
	\begin{align*}
		&\mathcal{K}(\frac{1}{2}(\mathbf{v}^1+\mathbf{v}^2),\frac{1}{2}(\mathbf{u}^1+\mathbf{u}^2))-\frac{1}{2}(\mathcal{K}(\mathbf{v}^1,\mathbf{u}^1)+\mathcal{K}(\mathbf{v}^2,\mathbf{u}^2)) \\
		=&\frac{1}{4nm} \sum_{i=1}^{n}\sum_{j=1}^{m} \xi_{ij}-\mu.
	\end{align*}
	Due to the condition that $\mu$ is a hyperparameter greater than 0 and  $\xi_{ij}$ is not less than 0, the formula $\frac{1}{4nm} \sum_{i=1}^{n}\sum_{j=1}^{m} \xi_{ij}-\mu$ is not guaranteed to be less than or equal to 0. And then according to the necessary condition (\ref{necessary}), we can conclude that if we fix $\theta$ in Eq. (\ref{BSPAUCOF}), the sub-problem   with respect to $\mathbf{v}$ and $\mathbf{u}$  maybe  non-convex.
\end{proof}

\section{Appendix E. Proof of Theorem \ref{solutionofvu}}
When we fix $\mathbf{u}$ and $\theta$, the sub-problem with respect to $\mathbf{v}$ is as follows:
\begin{small}
\begin{align}  \label{v}
	\mathbf{v}^{t}= & \argmin \limits_{\mathbf{v} \in [0,1]^n}   \ \mathcal{L}(\mathbf{v};\theta,\mathbf{u},\lambda)    \\
	=&\argmin \limits_{\mathbf{v} \in [0,1]^n} \ \frac{1}{n} \sum_{i=1}^{n}v_i l^+_i - \lambda \frac{1}{n}\sum_{i=1}^{n} v_i  + \mu\left(\frac{1}{n}\sum_{i=1}^{n} v_i-Q\right)^2 + \text{const},  \nonumber
\end{align}
\end{small}
where $l^+_i=\frac{1}{m}\sum_{j=1}^{m} u_j \xi_{ij}$ and  $Q=\frac{1}{m}\sum_{j=1}^{m} u_j$.

And when we fix $\mathbf{v}$ and $\theta$, the sub-problem with respect to $\mathbf{u}$ is as follows:
\begin{small}
\begin{align}  \label{u} 
	\mathbf{u}^{t}= &\argmin \limits_{\mathbf{u} \in [0,1]^m}  \  \mathcal{L}(\mathbf{u};\theta,\mathbf{v},\lambda)   \\
	=& \argmin \limits_{\mathbf{u} \in [0,1]^m}  \ \frac{1}{m} \sum_{j=1}^{m}u_j l^-_j - \lambda \frac{1}{m}\sum_{j=1}^{m} u_j  + \mu\left(\frac{1}{m}\sum_{j=1}^{m} u_j-P\right)^2 + \text{const},   \nonumber
\end{align}
\end{small}
where $l^-_j=\frac{1}{n}\sum_{i=1}^{n} v_i \xi_{ij}$ and $P=\frac{1}{n}\sum_{i=1}^{n} v_i$.

\begin{proof}
	Eq. (\ref{v}) can be expressed as
	\begin{align}  \label{simple}
		&\argmin \limits_{\mathbf{v} \in [0,1]^n}  \quad  \frac{1}{n} \sum_{i=1}^{n}v_i l^+_i - \lambda \frac{1}{n}\sum_{i=1}^{n} v_i  + \mu(\frac{1}{n}\sum_{i=1}^{n} v_i-Q)^2   \nonumber\\
		\iff &\argmin \limits_{\mathbf{v} \in [0,1]^n}  \quad   \sum_{i=1}^n v_ib_i+\frac{1}{n}(\sum_{i=1}^nv_i-nQ)^2  \\
		:= &\argmin \limits_{\mathbf{v} \in [0,1]^n}  \quad F(\mathbf{v})
	\end{align} 
	where $l^+_i=\frac{1}{m}\sum_{j=1}^{m} u_j \xi_{ij}$, $Q=\frac{1}{m}\sum_{j=1}^{m} u_j$ and $b_i=\frac{l_i^+ -\lambda}{\mu}$.\\
	Without loss of generality, we suppose that $b_1 \le b_2 \le \cdots \le b_n$. In this case, we firstly proof that the following $\mathbf{v} = (1,1,\ldots,1, v_p, 0,0,\ldots,0)$, which means that the $p$-th element $v_p$ is the only element in $\mathbf{v}$ that can not equal to 0 or 1,  minimizes $F(\mathbf{v})$.\\
	For each $\mathbf{v}\in[0,1]^n$, if $v_i<v_j$ for some $i<j$, then we can exchange $v_i$ and $v_j$, thus $\mathbf{v}$ becomes $\mathbf{v'}$. We have
	\begin{align*}
		&F(\mathbf{v'}) - F(\mathbf{v}) \\
		=& (b_iv_j +b_jv_i) - (b_iv_i +b_jv_j) = (b_i-b_j) (v_j-v_i) \leq 0. 
	\end{align*}
	Therefore, we can always assume that $1\geq v_1\geq v_2\geq \cdots \geq v_n \geq 0$  when $\mathbf{v}$ minimizes $F(\mathbf{v})$. \\
	Assuming that there exist $1>v_i>v_{i+1}>0$, we consider the following two cases.\\
	Case 1: $v_i+v_{i+1}\geq 1$. In this case, we replace $(v_i,v_{i+1})$ by $(1,v_i+v_{i+1}-1)$, then $\mathbf{v}$ becomes $\mathbf{v'}$. And we have
	\begin{align*}
		&F(\mathbf{v'}) - F(\mathbf{v}) \\
		= &(b_i +b_{i+1}(v_i+v_{i+1}-1)) - (b_iv_i +b_{i+1} v_{i+1}) \\
		= &(b_i-b_{i+1}) (1-v_i) \leq 0. 
	\end{align*}
	Case 2: $v_i+v_{i+1}< 1$. In this case, we replace $(v_i,v_{i+1})$ by $(v_i+v_{i+1},0)$, then $\mathbf{v}$ becomes $\mathbf{v'}$. And we have
	\begin{align*}
		F(\mathbf{v'}) - F(\mathbf{v})& = b_i (v_i+v_{i+1}) - (b_iv_i +b_{i+1} v_{i+1})\\
		& = (b_i-b_{i+1}) v_{i+1} \leq 0. 
	\end{align*}
	Therefore, we can always assume that at most one element in $\mathbf{v}$ is not equal to 0 or 1 when $\mathbf{v}$ minimizes $F(\mathbf{v})$. \\
	According to the fact that $1\geq v_1\geq v_2\geq \cdots \geq v_n \geq 0$ and at most one element in $\mathbf{v}$ is not equal to 0 or 1 when $\mathbf{v}$ minimizes $F(\mathbf{v})$, we conclude that  the following $\mathbf{v} = (1,1,\ldots,1, v_p, 0,0,\ldots,0)$,  which means that the $p$-th element $v_p$ is the only element in $\mathbf{v}$ that can not equal to 0 or 1,  minimizes $F(\mathbf{v})$.\\
	Then, we rewrite $F(\mathbf{v})$  as 
	\begin{align} \label{F}
		F(\mathbf{v}) = \frac{1}{n} ( v_p +p-1+\frac{nb_p}{2} -nQ   )^2 + \text{const}
	\end{align}
	and  consider the following three cases when $\mathbf{v} = (1,1,\ldots,1, v_p, 0,0,\ldots,0)$ minimizes $F(\mathbf{v})$.\\
	Case 1: $v_p=1$. According to Eq. (\ref{F}) we have that 
	$$
	p-1+\frac{nb_p}{2} -nQ \leq -1
	$$
	and thus 
	$$
	-\frac{b_p}{2} \geq \frac{p}{n} - Q.
	$$
	Furthermore, for each $i<p$ we have $v_i =1$ and 
	$$
	-\frac{b_i}{2} \geq -\frac{b_p}{2} \geq \frac{p}{n} - Q \geq \frac{i}{n} - Q  .
	$$
	And  for each $i>p$ we have $v_i =0$,  we rewrite $F(\mathbf{v})$  as
	$$
	F(\mathbf{v}) = \frac{1}{n} ( v_i +p+\frac{nb_i}{2} -nQ   )^2 + \text{const}.
	$$
	Since $\mathbf{v}$ minimizes $F(\mathbf{v})$, we have
	$$
	p+\frac{nb_i}{2} -nQ \geq 0
	$$
	and thus 
	$$
	-\frac{b_i}{2} \leq \frac{p}{n} - Q  \leq  \frac{i-1}{n} - Q  .
	$$
	Therefore, in this case we can get $\forall i \in\{1,2,...,n\}$,
	\begin{align}
		v_i=1 &\iff  -\frac{b_i}{2} \geq  \frac{i}{n} - Q  ,\\
		v_i=0 &\iff -\frac{b_i}{2} \leq   \frac{i-1}{n} - Q  .
	\end{align}
	Case 2: $0<v_p<1$. According to Eq. (\ref{F}) we have that 
	$$
	0< v_p= -(p-1+\frac{nb_p}{2} -nQ )<1
	$$
	and 
	thus 
	$$
	\frac{p-1}{n} - Q <-\frac{b_p}{2} < \frac{p}{n} - Q.
	$$
	Furthermore, for each $i<p$ we have $v_i =1$, we rewrite $F(\mathbf{v})$  as 
	$$
	F(\mathbf{v}) = \frac{1}{n} ( v_i +p-2+v_p+\frac{nb_i}{2} -nQ   )^2 + \text{const}.
	$$
	Since $\mathbf{v}$ minimizes $F(\mathbf{v})$, we have
	$$
	p-2+v_p+\frac{nb_i}{2} -nQ \leq -1 
	$$
	and thus 
	$$
	-\frac{b_i}{2} \geq  \frac{p-1+v_p}{n} - Q  \geq \frac{i}{n} - Q  .
	$$
	And  for each $i>p$ we have $v_i =0$,  we rewrite $F(\mathbf{v})$  as  
	$$
	F(\mathbf{v}) = \frac{1}{n} ( v_i +p-1+v_p+\frac{nb_i}{2} -nQ   )^2 + \text{const}.
	$$
	Since $\mathbf{v}$ minimizes $F(\mathbf{v})$, we have
	$$
	p-1+v_p+\frac{nb_i}{2} -nQ \geq 0
	$$
	and thus 
	$$
	-\frac{b_i}{2} \leq \frac{p-1+v_p}{n} - Q  \leq  \frac{i-1}{n} - Q  .
	$$
	Therefore, in this case we can get $\forall i \in\{1,2,...,n\} $,
	\begin{align}
		&v_i=1 \iff  -\frac{b_i}{2} \geq  \frac{i}{n} - Q  , \\
		&v_i=0 \iff -\frac{b_i}{2} \leq   \frac{i-1}{n} - Q,
	\end{align}
	and
	\begin{align}
		& 0< v_i =  -(i-1+\frac{nb_i}{2} -nQ ) < 1  \nonumber\\
		\iff& \frac{i-1}{n} - Q <-\frac{b_i}{2} < \frac{i}{n} - Q.
	\end{align}
	Case 3: $v_p=0$. According to Eq. (\ref{F}) we have that 
	$$
	p-1+\frac{nb_p}{2} -nQ \geq 0
	$$
	and thus 
	$$
	-\frac{b_p}{2} \leq \frac{p-1}{n} - Q.
	$$
	Similarly as Case 1, in this case, we can get $\forall i \in\{1,2,...,n\}$,
	\begin{align}
		v_i=1 &\iff  -\frac{b_i}{2} \geq  \frac{i}{n} - Q  , \\
		v_i=0 &\iff -\frac{b_i}{2} \leq   \frac{i-1}{n} - Q  .
	\end{align}
	By Cases 1, 2 and 3, it is easy to see that if $\mathbf{v}$ minimizes $F(\mathbf{v})$, then for each $1\leq i \leq n$, we always have
	\begin{align*}
		&v_i =0  \iff -\frac{b_i}{2} \leq  \frac{i-1}{n} - Q   , \\
		& v_i =1  \iff  -\frac{b_i}{2} \geq  \frac{i}{n} - Q ,
	\end{align*}
	and
	\begin{align*}
		&0< v_i =  -(i-1+\frac{nb_i}{2} -nQ ) < 1  \\
		\iff &\frac{i-1}{n} - Q <-\frac{b_i}{2} < \frac{i}{n} - Q  .\\
	\end{align*}
	which can be rewritten as:
\begin{equation}   \small
	\left \{\begin{array} {ll} 
		&v_p=1 \qquad   {\ \ \textrm{if}  \ \ }   l^+_p <  \lambda - 2\mu \left(\frac{p}{n}-\frac{\sum_{j=1}^m u_j}{m} \right)
		\\ &v_p=n  \left( \frac{\sum_{j=1}^m u_j}{m}-\frac{l^+_p - \lambda}{2 \mu}-\frac{p-1}{n} \right)  {\ \ \textrm{otherwise}  \ \ }  
		\\
		&v_p=0 \qquad   {\ \ \textrm{if}  \ \ }  l^+_p > \lambda - 2\mu \left(\frac{p-1}{n}-\frac{\sum_{j=1}^m u_j}{m} \right) 
	\end{array} \right.
\end{equation}
	where  $p \in \{1,...,n\}$ is the sorted index based on the loss values $l^+_p=\frac{1}{m}\sum_{j=1}^m u_j\xi_{pj}$.\\
	Similarly, we can get one global optimal solution of $\mathbf{u}$
		\begin{equation} \small
		\left \{\begin{array} {ll}
			&u_q=1  \qquad {\ \ \textrm{if}  \ \ }   l^-_q <  \lambda - 2\mu \left(\frac{q}{m}-\frac{\sum_{i=1}^n v_i}{n}\right) 
			\\
			&u_q=m \left(\frac{\sum_{i=1}^n v_i}{n}-\frac{l^-_q - \lambda}{2 \mu}-\frac{q-1}{m}\right) {\ \ \textrm{otherwise}  \ \ }  
			\\
			&u_q=0  \qquad {\ \ \textrm{if}  \ \ } l^-_q > \lambda - 2\mu \left(\frac{q-1}{m}-\frac{\sum_{i=1}^n v_i}{n}\right) 
		\end{array} \right.
	\end{equation}
	where  $q \in \{1,...,m\}$ is the sorted index based on the loss values $l^-_q=\frac{1}{n}\sum_{i=1}^n v_i\xi_{iq}$.
\end{proof}

\section{Appendix F. Proof of Theorem \ref{theormKstation}}

Firstly, we prove that $\mathcal{K}$ converges with the inner layer cyclic block coordinate descent procedure (\emph{i.e.}, lines 3-6 of Algorithm 1).
\begin{lemma} \label{theormKconverge}
	With the inner layer cyclic block coordinate descent procedure (\emph{i.e.}, lines 3-6 of Algorithm 1), the sub-problem $\mathcal{K}$ with respect to all weight variables converges.
\end{lemma}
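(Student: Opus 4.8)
The plan is to show that the inner loop produces a sequence of objective values that is monotone non-increasing and bounded below, so that convergence follows from the monotone convergence theorem for real sequences. Denote by $\{(\mathbf{v}^t,\mathbf{u}^t)\}$ the iterates generated by lines 3--6 of Algorithm \ref{alg1}, with $\theta$ and $\lambda$ held fixed throughout. The two ingredients I would combine are: (i) each block update is an \emph{exact} global minimization of a convex sub-problem, supplied by Theorem \ref{solutionofvu}; and (ii) the feasible region is compact while $\mathcal{K}$ is continuous on it.

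First I would establish monotonicity. By Theorem \ref{solutionofvu}, the update (\ref{solutionofv}) returns a global minimizer of $\mathcal{K}(\cdot,\mathbf{u}^{t})$ over $\mathbf{v}\in[0,1]^n$, and the update (\ref{solutionofu}) returns a global minimizer of $\mathcal{K}(\mathbf{v}^{t+1},\cdot)$ over $\mathbf{u}\in[0,1]^m$; since each of these per-block sub-problems is convex (as noted just before Theorem \ref{solutionofvu}), the stated closed forms are genuine minimizers rather than merely critical points. Consequently, each half-step cannot increase the objective, and chaining the two half-steps gives
\begin{align*}
\mathcal{K}(\mathbf{v}^{t+1},\mathbf{u}^{t+1}) \ \le\ \mathcal{K}(\mathbf{v}^{t+1},\mathbf{u}^{t}) \ \le\ \mathcal{K}(\mathbf{v}^{t},\mathbf{u}^{t}),
\end{align*}
so the sequence $\{\mathcal{K}(\mathbf{v}^t,\mathbf{u}^t)\}$ is non-increasing.

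Next I would supply a lower bound. Because $\theta$ is fixed, every $\xi_{ij}$ is a fixed nonnegative constant, and $\mathcal{K}$ is therefore a continuous (indeed polynomial) function of $(\mathbf{v},\mathbf{u})$. The feasible set $[0,1]^n\times[0,1]^m$ is compact, so $\mathcal{K}$ attains a finite minimum there and is in particular bounded below. A non-increasing real sequence that is bounded below converges; applying this to $\{\mathcal{K}(\mathbf{v}^t,\mathbf{u}^t)\}$ completes the lemma.

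I do not expect a serious obstacle here, since the argument is the standard monotone-plus-bounded template; the one step to treat carefully is the correct invocation of Theorem \ref{solutionofvu}, namely verifying that the per-block problems really are convex so that the closed forms deliver exact descent rather than just stationarity. It is worth flagging that this lemma yields convergence of the objective \emph{values} only, which is exactly the hypothesis needed for the subsequent upgrade to a \emph{stationary point} in Theorem \ref{theormKstation}.
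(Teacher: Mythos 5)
Your argument is correct and follows essentially the same route as the paper: exact block minimization (via Theorem \ref{solutionofvu}) gives the chain $\mathcal{K}(\mathbf{v}^{t+1},\mathbf{u}^{t+1}) \le \mathcal{K}(\mathbf{v}^{t+1},\mathbf{u}^{t}) \le \mathcal{K}(\mathbf{v}^{t},\mathbf{u}^{t})$, and a lower bound then yields convergence of the objective values by monotonicity. The only cosmetic difference is that you obtain the lower bound from continuity of $\mathcal{K}$ on the compact set $[0,1]^n\times[0,1]^m$, whereas the paper computes the explicit bound $-2\lambda_{\infty}$ term by term; both are valid.
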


\begin{proof}
	Firstly, we prove that $\mathcal{K}$  has a lower bound:
	\begin{equation}  \label{Kbound} \small
	\begin{aligned}
		\mathcal{K}&\overset{(a)}{\ge}  \frac{1}{nm} \sum_{i=1}^{n}\sum_{j=1}^{m}v_i u_j \xi_{ij} - \lambda\left(\frac{1}{n}\sum_{i=1}^{n} v_i+\frac{1}{m}\sum_{j=1}^{m} u_j\right) \\
		&\overset{(b)}{\ge} - \lambda\left(\frac{1}{n}\sum_{i=1}^{n} v_i+\frac{1}{m}\sum_{j=1}^{m} u_j\right) \overset{(c)}{\ge} -2\lambda_{\infty} > -\infty.
		\end{aligned}
	\end{equation}
	Inequality (a) follows from $ \text{const}=\tau \Omega(\theta) \ge 0$ and $ \mu(\frac{1}{n}\sum_{i=1}^{n} v_i-\frac{1}{m}\sum_{j=1}^{m} u_j)^2   \ge 0 $, and inequality (b) follows from $\xi_{ij} \ge 0$, $\mathbf{v}\in [0,1]^{n}$ and $\mathbf{u}\in [0,1]^{m}$. Considering that $\lambda_{\infty}$ is the maximum threshold of hyperparameter $\lambda$, when all $v_i$ and $u_j$ are equal to $1$, $ - \lambda(\frac{1}{n}\sum_{i=1}^{n} v_i+\frac{1}{m}\sum_{j=1}^{m} u_j) $ reaches its minimum value $-2\lambda_{\infty}$. Therefore, we can obtain the inequality (c). We prove that $\mathcal{K}$  has a lower bound. 
	
	Then, according to the update model of  inner layer cyclic block coordinate descent procedure of Algorithm 1, we need to solve the following two convex  sub-problems iteratively:
	\begin{align*}
		&\mathbf{v}^{k+1}= \argmin \limits_{\mathbf{v} \in [0,1]^n}  \mathcal{K}(\mathbf{v};\mathbf{u}^k),  \\
		&\mathbf{u}^{k+1}= \argmin \limits_{\mathbf{u} \in [0,1]^m}  \mathcal{K}(\mathbf{u};\mathbf{v}^{k+1}).
	\end{align*}
	Obviously, we have:
	\begin{align} \label{Keachstep}
		\mathcal{K}(\mathbf{v}^k,\mathbf{u}^k) \geq \mathcal{K}(\mathbf{v}^{k+1},\mathbf{u}^k) \geq \mathcal{K}(\mathbf{v}^{k+1},\mathbf{u}^{k+1}),
	\end{align}
	and we prove that  $\mathcal{K}$ does not increase at each update.
	
	Since $\mathcal{K}$ does not increase at each update  and it has a lower bound, we prove that with the inner layer cyclic block coordinate descent procedure (\emph{i.e.}, lines 3-6 of Algorithm 1), the sub-problem $\mathcal{K}$ with respect to all weight variables converges:
	\begin{align}  \label{Kdecreasing}
		&\lim\limits_{k \rightarrow \infty}\mathcal{K}(\mathbf{v}^{k},\mathbf{u}^{k})-\mathcal{K}(\mathbf{v}^{k+1},\mathbf{u}^{k})=0,  \\
		&\lim\limits_{k \rightarrow \infty}\mathcal{K}(\mathbf{v}^{k+1},\mathbf{u}^{k})-\mathcal{K}(\mathbf{v}^{k+1},\mathbf{u}^{k+1})=0.  \nonumber\\
		\Longrightarrow &\lim\limits_{k \rightarrow \infty}\mathcal{K}(\mathbf{v}^{k},\mathbf{u}^{k})-\mathcal{K}(\mathbf{v}^{k+1},\mathbf{u}^{k+1})=0. \nonumber 
	\end{align}
	The proof is then completed.
\end{proof}

Next, we introduce  the necessary definition and lemma.
\begin{definition} \label{definitionConstrainedF}
	\cite{bertsekas1997nonlinear,nouiehed2018convergence} For the  constrained optimization problem: $ \min_{\mathbf{x} \in \mathcal{F}} f(\mathbf{x}) $ where $\mathcal{F} \subseteq \mathcal{R}^n$ is a closed convex set.  A  point $\mathbf{x}^* \in \mathcal{F}$ is a first order stationary point when
	\begin{align*}
		\triangledown f(\mathbf{x}^*)' (\mathbf{x}-\mathbf{x}^*) \geq 0, \forall \mathbf{x}\in \mathcal{F}.
	\end{align*}
\end{definition}

\begin{lemma} \label{theoremLocalMinimum}
	\cite{bertsekas1997nonlinear}
	If $\mathbf{x}^*$ is a local minimun of $f$ over $\mathcal{F}$, then
	\begin{align*}
		\triangledown f(\mathbf{x}^*)' (\mathbf{x}-\mathbf{x}^*) \geq 0, \forall \mathbf{x}\in \mathcal{F}.
	\end{align*}
\end{lemma}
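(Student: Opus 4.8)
The plan is to establish this standard first-order necessary optimality condition by contradiction, exploiting the convexity of $\mathcal{F}$ to build an admissible descent path out of $\mathbf{x}^*$ and the differentiability of $f$ at $\mathbf{x}^*$ to control the first-order behavior of $f$ along that path. Assume, toward a contradiction, that $\mathbf{x}^*$ is a local minimum of $f$ over $\mathcal{F}$ and yet there exists some $\mathbf{x} \in \mathcal{F}$ with $\triangledown f(\mathbf{x}^*)'(\mathbf{x}-\mathbf{x}^*) < 0$. I would fix this $\mathbf{x}$ and set the feasible direction $\mathbf{d} = \mathbf{x} - \mathbf{x}^*$.

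First I would invoke convexity of $\mathcal{F}$: since $\mathbf{x}^*, \mathbf{x} \in \mathcal{F}$ and $\mathcal{F}$ is a convex set, the point $\mathbf{x}^* + t\mathbf{d} = (1-t)\mathbf{x}^* + t\mathbf{x}$ lies in $\mathcal{F}$ for every $t \in [0,1]$, so the entire segment is admissible. Next I would use differentiability of $f$ at $\mathbf{x}^*$ to write the first-order expansion $f(\mathbf{x}^* + t\mathbf{d}) = f(\mathbf{x}^*) + t\,\triangledown f(\mathbf{x}^*)'\mathbf{d} + o(t)$ as $t \to 0^+$. Dividing the increment by $t>0$ gives $\frac{f(\mathbf{x}^* + t\mathbf{d}) - f(\mathbf{x}^*)}{t} = \triangledown f(\mathbf{x}^*)'\mathbf{d} + \frac{o(t)}{t}$, whose right-hand side converges to $\triangledown f(\mathbf{x}^*)'\mathbf{d} < 0$. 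Consequently there is a threshold $\bar{t} \in (0,1]$ such that $f(\mathbf{x}^* + t\mathbf{d}) - f(\mathbf{x}^*) < 0$ for all $t \in (0,\bar{t})$.

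To close the argument I would observe that these strictly-improving points $\mathbf{x}^* + t\mathbf{d}$ are simultaneously feasible, by the convexity step, and contained in any prescribed neighborhood of $\mathbf{x}^*$ once $t$ is small enough, since $\|\mathbf{x}^* + t\mathbf{d} - \mathbf{x}^*\| = t\|\mathbf{d}\| \to 0$. Hence every neighborhood of $\mathbf{x}^*$ contains feasible points with strictly smaller objective value, contradicting the assumption that $\mathbf{x}^*$ is a local minimum of $f$ over $\mathcal{F}$. This forces $\triangledown f(\mathbf{x}^*)'(\mathbf{x}-\mathbf{x}^*) \ge 0$ for the chosen $\mathbf{x}$, and since $\mathbf{x} \in \mathcal{F}$ was arbitrary, the inequality holds for all $\mathbf{x} \in \mathcal{F}$, completing the proof.

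The only genuinely delicate point, and thus the main obstacle, is handling the remainder $o(t)$ rigorously: I must ensure that $\frac{o(t)}{t}$ is eventually dominated by the strictly negative quantity $\triangledown f(\mathbf{x}^*)'\mathbf{d}$ for all sufficiently small $t$, which is precisely where differentiability of $f$ at $\mathbf{x}^*$ (rather than mere existence of partial derivatives) is indispensable. The complementary subtlety is keeping each perturbed iterate simultaneously inside $\mathcal{F}$ and inside the local-minimum neighborhood, which is exactly where the convexity of $\mathcal{F}$ does the essential work.
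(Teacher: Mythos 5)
Your proof is correct: the paper itself does not prove this lemma but cites it from Bertsekas's \emph{Nonlinear Programming}, and your contradiction argument --- using convexity of $\mathcal{F}$ (guaranteed by the paper's Definition 2) to keep the segment $\mathbf{x}^* + t(\mathbf{x}-\mathbf{x}^*)$ feasible, then letting the first-order expansion force $f(\mathbf{x}^* + t(\mathbf{x}-\mathbf{x}^*)) < f(\mathbf{x}^*)$ for small $t>0$ --- is precisely the standard proof given in that reference. Your handling of the $o(t)$ remainder and of the simultaneous feasibility/neighborhood requirements is careful and complete, so there is nothing to add.
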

\begin{lemma} \label{TheoremCauchy}
	(Cauchy's convergence criterion) \cite{waner2001introduction}  The necessary and sufficient condition for sequence $\{X_n\}$ convergence is : For every $\varepsilon>0$,  there is a number $N$, such that for all $n, m > N$ holds
	$$|X_n -X_m| \leq \varepsilon.$$
\end{lemma}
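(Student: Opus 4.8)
The plan is to prove this classical criterion as an ``if and only if'' statement, handling the two implications separately, since they draw on rather different tools. The forward implication (convergence $\Rightarrow$ Cauchy) is elementary and uses only the triangle inequality, whereas the converse (Cauchy $\Rightarrow$ convergence) is the substantive half and is precisely where the completeness of $\mathbb{R}$ must enter. I note in passing that replacing the stated ``$\le \varepsilon$'' by a strict ``$<\varepsilon$'' yields an equivalent condition, so I may work with whichever is more convenient.

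For necessity, suppose $X_n \to L$ for some $L \in \mathbb{R}$. Given $\varepsilon > 0$, I would pick $N$ so that $|X_n - L| < \varepsilon/2$ for all $n > N$, and then for any $n,m > N$ bound $|X_n - X_m| \le |X_n - L| + |L - X_m| < \varepsilon$ by the triangle inequality. This shows $\{X_n\}$ is Cauchy.

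For sufficiency, assume $\{X_n\}$ satisfies the Cauchy condition, and proceed in three steps. First, show the sequence is bounded: applying the condition with $\varepsilon = 1$ gives an index $N_0$ beyond which every term lies within distance $1$ of $X_{N_0+1}$, so the whole sequence is bounded by $\max\{|X_1|,\dots,|X_{N_0}|,\,|X_{N_0+1}|+1\}$. Second, invoke the Bolzano--Weierstrass theorem to extract a subsequence $X_{n_k}$ converging to some limit $L$. Third, upgrade this to convergence of the full sequence: given $\varepsilon > 0$, use the Cauchy condition to find $N$ with $|X_n - X_m| < \varepsilon/2$ for $n,m > N$, and use subsequence convergence to fix a single index $n_k > N$ with $|X_{n_k} - L| < \varepsilon/2$; combining these via the triangle inequality gives $|X_n - L| < \varepsilon$ for all $n > N$, hence $X_n \to L$.

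The main obstacle is the sufficiency direction, and specifically its reliance on the completeness of $\mathbb{R}$: without it a Cauchy sequence need not converge (for instance in $\mathbb{Q}$), so some form of the completeness axiom---Bolzano--Weierstrass, the nested interval property, or the monotone convergence theorem applied to $\limsup$ and $\liminf$---is unavoidable and carries the genuine content of the result. Everything else reduces to careful $\varepsilon/2$ bookkeeping with the triangle inequality.
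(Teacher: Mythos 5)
Your proof is correct: the necessity direction via the triangle inequality and the sufficiency direction via boundedness, Bolzano--Weierstrass, and the subsequence-to-full-sequence upgrade is the standard, complete argument, and your remark on the equivalence of ``$\le\varepsilon$'' and ``$<\varepsilon$'' properly handles the statement's phrasing. The paper itself gives no proof of this lemma---it is quoted as a classical fact with a textbook citation---so there is nothing to contrast; your argument is exactly the canonical one such a citation points to.
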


Finally,  we prove that combining with the closed-form solutions  provided in Theorem \ref{solutionofvu},  $\mathcal{K}$  converges to a stationary point in Theorem \ref{theormKstation}.

\begin{proof}
	When we talk about $\mathbf{v}$, we suppose that
	\begin{align} \label{vnotconverge}
		\lim\limits_{k \rightarrow \infty}||\mathbf{v}^{k+1}-\mathbf{v}^{k}||^2 \geq \varepsilon>0,
	\end{align}
	where $k$ is the  iteration number of inner layer cyclic block coordinate descent procedure (\emph{i.e.}, lines 3-6 of Algorithm 1).  Same as Eq. (\ref{Kdecreasing}), we have:
	\begin{align} \label{Kconverge}
		\lim\limits_{k \rightarrow \infty}\mathcal{K}(\mathbf{v}^{k};\mathbf{u}^{k})-\mathcal{K}(\mathbf{v}^{k+1};\mathbf{u}^{k})=0.
	\end{align}
	According to the fact that $\mathbf{v}^{k+1}$ is one  global optimal solution of the sub-problem $\mathcal{K}(\mathbf{v};\mathbf{u}^{k})$, we have that  $\mathbf{v}^{k}$ in Eq. (\ref{Kconverge}) is also one global optimal solution of the sub-problem $\mathcal{K}(\mathbf{v};\mathbf{u}^{k})$. And according to the supposition (\ref{vnotconverge}), we have that $\mathbf{v}^k$ and  $\mathbf{v}^{k+1}$ are two different points.  However, when we update $\mathbf{v}$ with the Eq. (\ref{solutionofv}) in Theorem \ref{solutionofvu}, we can only own one global optimal solution of the sub-problem $\mathcal{K}(\mathbf{v};\mathbf{u}^{k})$ and thus the supposition does not hold. We prove that $\lim\limits_{k \rightarrow \infty}||\mathbf{v}^{k+1}-\mathbf{v}^{k}||^2=0$. Note that the proof of $\lim\limits_{k \rightarrow \infty}||\mathbf{u}^{k+1}-\mathbf{u}^{k}||^2=0$  is similar and then we have   
	\begin{align} \label{VUToPoint}
		\lim\limits_{k \rightarrow \infty}||(\mathbf{v}^{k+1},\mathbf{u}^{k+1})-(\mathbf{v}^{k},\mathbf{u}^{k})||^2=0.
	\end{align}
	Then, according to Lemma \ref{TheoremCauchy}, we have that there exists a  limit point $(\mathbf{v}^*,\mathbf{u}^*)$ of the sequence $\{(\mathbf{v}^k,\mathbf{u}^k)\}$, which satisfies:
	\begin{align*}
		\lim \limits_{k \rightarrow \infty}(\mathbf{v}^k,\mathbf{u}^k)=(\mathbf{v}^*,\mathbf{u}^*).
	\end{align*}
	And then, according to the update model of  inner layer cyclic block coordinate descent procedure of Algorithm 1 :
	\begin{align*}
		&\mathbf{v}^{k+1}= \argmin \limits_{\mathbf{v} \in [0,1]^n}  \mathcal{K}(\mathbf{v};\mathbf{u}^k)  \\
		&\mathbf{u}^{k+1}= \argmin \limits_{\mathbf{u} \in [0,1]^m}  \mathcal{K}(\mathbf{u};\mathbf{v}^{k+1})
	\end{align*}
	we have
	\begin{align}
		&\mathcal{K}(\mathbf{v}^*;\mathbf{u}^*) \le \mathcal{K}(\mathbf{v};\mathbf{u}^*) \ \forall \mathbf{v} \in [0,1]^n, \label{globalV}\\
		&\mathcal{K}(\mathbf{u}^*;\mathbf{v}^*) \le \mathcal{K}(\mathbf{u};\mathbf{v}^*) \ \forall \mathbf{u} \in [0,1]^m. \label{globalu}
	\end{align}
	According to Eq. (\ref{globalV}), we have that $\mathbf{v}^*$ is one global minimum solution of $\mathcal{K}(\mathbf{v};\mathbf{u}^*)$. Then, according to Lemma \ref{theoremLocalMinimum}, we have that:
	\begin{align*}
		\triangledown_{\mathbf{v}}\mathcal{K}(\mathbf{v}^*)'(\mathbf{v}-\mathbf{v}^*) \geq 0, \forall \mathbf{v} \in [0,1]^n
	\end{align*}
	where $\triangledown_{\mathbf{v}}\mathcal{K}$ denotes the gradient of $\mathcal{K}$ with respect to the block $\mathbf{v}$. Similarly, we have that
	\begin{align*}
		\triangledown_{\mathbf{u}}\mathcal{K}(\mathbf{u}^*)'(\mathbf{u}-\mathbf{u}^*) \geq 0, \forall \mathbf{u} \in [0,1]^m.
	\end{align*}
	Then, combining with the above  two inequalities, we have that
	\begin{align*}
		&\triangledown\mathcal{K}(\mathbf{v}^*,\mathbf{u}^*)'\left((\mathbf{v},\mathbf{u})-(\mathbf{v}^*,\mathbf{u}^*)\right)  \\
		=&(\triangledown_{\mathbf{v}}\mathcal{K}(\mathbf{v}^*),\triangledown_{\mathbf{u}}\mathcal{K}(\mathbf{u}^*))'(\mathbf{v}-\mathbf{v}^*,\mathbf{u}-\mathbf{u}^*) \\
		=&\triangledown_{\mathbf{v}}\mathcal{K}(\mathbf{v}^*)'(\mathbf{v}-\mathbf{v}^*) + \triangledown_{\mathbf{u}}\mathcal{K}(\mathbf{u}^*)'(\mathbf{u}-\mathbf{u}^*) \\
		\geq & 0, \ \forall(\mathbf{v},\mathbf{u}) \in [0,1]^{n+m}.
	\end{align*}
	Finally, according to the Definition \ref{definitionConstrainedF}, we have the limit point $(\mathbf{v}^*,\mathbf{u}^*)$ is a   stationary  point and then  we  prove that 	with the inner layer cyclic block coordinate descent procedure (\emph{i.e.}, lines 3-6 of Algorithm 1), the sub-problem $\mathcal{K}$ with respect to all weight variables converges to a stationary point.
	
\end{proof}

\section{Appendix G. Proof of Theorem \ref{Converge}}
\begin{proof}
	Before we prove the convergence of our BSPAUC (Algorithm 1), we first show that the value of the objective function $\mathcal{L}(\theta,\mathbf{v},\mathbf{u};\lambda)$ does not increase in each iteration of our BSPAUC. Let $\mathbf{v}^{t},\mathbf{u}^{t},\theta^{t}$ and $\lambda^t$ indicate the values of $\mathbf{v},\mathbf{u},\theta$ and $\lambda$ in the $t$-th iteration of outer layer cyclic block coordinate descent procedure (\emph{i.e.}, lines 2-9 of Algorithm 1).
	
	Same as Ep. (\ref{Keachstep}), we have:
	\begin{align*}
		\mathcal{K}(\mathbf{v}^k,\mathbf{u}^k) \geq \mathcal{K}(\mathbf{v}^{k+1},\mathbf{u}^k) \geq \mathcal{K}(\mathbf{v}^{k+1},\mathbf{u}^{k+1}),
	\end{align*}
where $k$ is the  iteration number of inner layer cyclic block coordinate descent procedure (\emph{i.e.}, lines 3-6 of Algorithm 1).  As such, we can obtain the following inequality:
	\begin{align} \label{KDecrease}
		\mathcal{L}(\theta^{t},\mathbf{v}^{t+1},\mathbf{u}^{t+1 };\lambda^{t}) &\le \mathcal{L}(\theta^{t},\mathbf{v}^{t},\mathbf{u}^{t};\lambda^{t}).
	\end{align}
	And then, we  follow the  assumption on Algorithm 2 and Algorithm 3 about $\theta$:
	\begin{align} \label{ThetaDecrease}
		\mathcal{L}(\theta^{t+1},\mathbf{v}^{t+1},\mathbf{u}^{t+1 };\lambda^{t}) &\le \mathcal{L}(\theta^{t},\mathbf{v}^{t+1},\mathbf{u}^{t+1};\lambda^{t}) .
	\end{align}
	Since $\lambda_{t+1} \ge \lambda_{t} >0$ and $\mathbf{v}\in [0,1]^n,\mathbf{u} \in [0,1]^m $, we obtain
	\begin{align*}
		\mathcal{L}(\theta^{t+1},\mathbf{v}^{t+1},\mathbf{u}^{t+1 };\lambda^{t+1}) \le \mathcal{L}(\theta^{t+1},\mathbf{v}^{t+1},\mathbf{u}^{t+1};\lambda^{t}) .
	\end{align*}
	Combining with the above inequalities, we have
	\begin{align*}
		\mathcal{L}(\theta^{t+1},\mathbf{v}^{t+1},\mathbf{u}^{t+1 };\lambda^{t+1}) \le \mathcal{L}(\theta^{t},\mathbf{v}^{t},\mathbf{u}^{t};\lambda^{t}) .
	\end{align*}
	We prove that  $\mathcal{L}$ does not increase in each iteration of our BSPAUC. Similar to Eq. (\ref{Kbound}),  we can easily prove that    $\mathcal{L}$ has a lower bound:
\begin{equation}   \small
		\begin{aligned}
			\mathcal{L}&\ge \frac{1}{nm} \sum_{i=1}^{n}\sum_{j=1}^{m}v_i u_j \xi_{ij} - \lambda\left(\frac{1}{n}\sum_{i=1}^{n} v_i+\frac{1}{m}\sum_{j=1}^{m} u_j\right) \\
			&\ge - \lambda\left(\frac{1}{n}\sum_{i=1}^{n} v_i+\frac{1}{m}\sum_{j=1}^{m} u_j\right) \ge -2\lambda_{\infty} > -\infty.
		\end{aligned}
\end{equation}
 Then, we prove that BSPAUC  converges  along  with the increase of hyper-parameter $\lambda$.
\end{proof}

\section{Appendix H. Proof of Theorem \ref{ConvergeToStan}}
\begin{proof}
	When $\lambda$ reaches its maximum $\lambda_{\infty}$,  we obtain the fixed objective function: $\mathcal{L}(\theta,\mathbf{v},\mathbf{u};\lambda_{\infty}).$  Let $\mathbf{v}^{t},\mathbf{u}^{t}$ and $\theta^{t}$  indicate the values of $\mathbf{v},\mathbf{u}$ and $\theta$  in the $t$-th iteration of outer layer cyclic block coordinate descent procedure (\emph{i.e.}, lines 2-9 of Algorithm 1).

	 We first talk about the weight parameters $(\mathbf{v},\mathbf{u})$. According to  Theorem \ref{Converge} and the optimizing procedure (\ref{KDecrease}) of  Algorithm 1, we have  
	\begin{small}
	\begin{align} \label{VUconvergeT}
		\lim\limits_{t \rightarrow \infty}\mathcal{L}(\mathbf{v}^{t},\mathbf{u}^{t};\theta^t,\lambda_{\infty})-\mathcal{L}(\mathbf{v}^{t+1},\mathbf{u}^{t+1};\theta^t,\lambda_{\infty})=0,
	\end{align}
	\end{small} 
which implies that for $t \to \infty$ and $ \forall k$:
\begin{align} \label{Kconverge2}
\mathcal{K}(\mathbf{v}^{k};\mathbf{u}^{k})-\mathcal{K}(\mathbf{v}^{k+1};\mathbf{u}^{k})=0,
\end{align}
where $k$ is the  iteration number of inner layer cyclic block coordinate descent procedure (\emph{i.e.}, lines 3-6 of Algorithm 1). 

We suppose that
\begin{align} \label{vnotconvergeT}
	\lim\limits_{t \rightarrow \infty}||\mathbf{v}^{t+1}-\mathbf{v}^{t}||^2 \geq \varepsilon>0,
\end{align}
which implies that for $t \to \infty$, $\exists k,$
\begin{align} \label{vnotconvergeK}
	||\mathbf{v}^{k+1}-\mathbf{v}^{k}||^2 >0.
\end{align}
According to the fact that $\mathbf{v}^{k+1}$ is one  global optimal solution of the sub-problem $\mathcal{K}(\mathbf{v};\mathbf{u}^{k})$, we have that  $\mathbf{v}^{k}$ in Eq. (\ref{Kconverge2}) is also one global optimal solution of the sub-problem $\mathcal{K}(\mathbf{v};\mathbf{u}^{k})$. And according to the supposition (\ref{vnotconvergeK}), we have that $\mathbf{v}^k$ and  $\mathbf{v}^{k+1}$ are two different points.  However, when we update $\mathbf{v}$ with the Eq. (\ref{solutionofv}) in Theorem \ref{solutionofvu}, we can only own one global optimal solution of the sub-problem $\mathcal{K}(\mathbf{v};\mathbf{u}^{k})$ and thus the supposition (\ref{vnotconvergeT}) does not hold.  We prove that $\lim\limits_{t \rightarrow \infty}||\mathbf{v}^{t+1}-\mathbf{v}^{t}||^2=0$. Note that the proof of $\lim\limits_{t \rightarrow \infty}||\mathbf{u}^{t+1}-\mathbf{u}^{t}||^2=0$  is similar and then we have   
\begin{align} \label{VUPointConvergeT}
	\lim\limits_{t \rightarrow \infty}||(\mathbf{v}^{t+1},\mathbf{u}^{t+1})-(\mathbf{v}^{t},\mathbf{u}^{t})||^2=0.
\end{align}
	And then, we  consider the model parameter $\theta$. According to Theorem \ref{Converge} and the optimizing procedure (\ref{ThetaDecrease}) of  Algorithm 1, we have:
\begin{align*}
	\lim\limits_{t \rightarrow \infty}\mathcal{L}(\theta^t;\mathbf{v}^{t+1},\mathbf{u}^{t+1},\lambda_{\infty})-\mathcal{L}(\theta^{t+1};\mathbf{v}^{t+1},\mathbf{u}^{t+1},\lambda_{\infty})=0,
\end{align*}
where $t$ is the  iteration number of outer layer cyclic block coordinate descent procedure (\emph{i.e.}, lines 2-9 of Algorithm 1).
Combining with that $\theta^{t+1}$ is initialized with $\theta^{t}$ and is one stationary point  of $\mathcal{L}(\theta;\mathbf{v}^{t+1},\mathbf{u}^{t+1},\lambda_{\infty})$ obtained by gradient based method,  \emph{i.e.}, Algorithm 2 \cite{gu2019scalable} or Algorithm 3 \cite{TSAM}, we get 
\begin{align} \label{thetaConvergeT}
	\lim\limits_{t \rightarrow \infty}||\theta^{t+1}-\theta^{t}||^2=0.
\end{align} 

 Combining with the above two formulas (\ref{thetaConvergeT}) and (\ref{VUPointConvergeT}),  we have   $$\lim\limits_{t \rightarrow \infty}||(\theta^{t+1},\mathbf{v}^{t+1},\mathbf{u}^{t+1})-(\theta^{t},\mathbf{v}^{t},\mathbf{u}^{t})||^2=0.$$ 
Then, according to Lemma \ref{TheoremCauchy}, we have that there exists a limit point $(\theta^*,\mathbf{v}^*,\mathbf{u}^*)$ of the sequence $\{(\theta^t,\mathbf{v}^t,\mathbf{u}^t)\}$ satisfying:
	$$\lim \limits_{t \rightarrow \infty}(\theta^t,\mathbf{v}^t,\mathbf{u}^t)=(\theta^*,\mathbf{v}^*,\mathbf{u}^*).$$
	According to Theorem \ref{theormKstation},  the sub-problem $\mathcal{L}(\mathbf{v},\mathbf{u};\theta^*,\lambda_{\infty})$ with respect to all weight parameters  converges to one stationary point, thus we have 
	$$\triangledown_{(\mathbf{v},\mathbf{u})}\mathcal{L}(\mathbf{v}^*,\mathbf{u}^*;\theta^*,\lambda_{\infty})'((\mathbf{v},\mathbf{u})-(\mathbf{v}^*,\mathbf{u}^*)) \geq 0 $$
	for $\forall(\mathbf{v},\mathbf{u}) \in [0,1]^{n+m}.$ At the same time, with gradient based method to optimize $\theta$,  \emph{i.e.}, Algorithm 2 \cite{gu2019scalable} or Algorithm 3 \cite{TSAM}, the sub-problem $\mathcal{L}(\theta;\mathbf{v}^*,\mathbf{u}^*,\lambda_{\infty})$ with respect to model parameter converges to one stationary point, thus  we have 
	$$\triangledown_{\theta}\mathcal{L}(\theta^*;\mathbf{v}^*,\mathbf{u}^*,\lambda_{\infty})'(\theta - \theta^*) \geq 0 $$
	for any $\theta$. Then, combining with the above  two inequalities, we have that
	\begin{align*}
		 &\triangledown\mathcal{L}(\theta^*,\mathbf{v}^*,\mathbf{u}^*;\lambda_{\infty})'\left((\theta,\mathbf{v},\mathbf{u})-(\theta^*,\mathbf{v}^*,\mathbf{u}^*)\right) \\
		=&(\triangledown_{\theta}\mathcal{L}(\theta^*),\triangledown_{(\mathbf{v},\mathbf{u})}\mathcal{L}(\mathbf{v}^*,\mathbf{u}^*))'(\theta -\theta^*,(\mathbf{v},\mathbf{u})-(\mathbf{v}^*,\mathbf{u}^*))\\
		=&\triangledown_{\theta}\mathcal{L} (\theta^*;\mathbf{v}^*,\mathbf{u}^*,\lambda_{\infty})' (\theta - \theta^*) \\ &+\triangledown_{(\mathbf{v},\mathbf{u})}\mathcal{L} (\mathbf{v}^*,\mathbf{u}^*;\theta^*,\lambda_{\infty})' ((\mathbf{v},\mathbf{u})-(\mathbf{v}^*,\mathbf{u}^*))\\
		\geq&  0.
	\end{align*}

	Finally, according to the Definition \ref{definitionConstrainedF}, we have that our  BSPAUC  converges to a stationary point of $\mathcal{L}(\theta,\mathbf{v},\mathbf{u};\lambda_{\infty})$ if the iteration number $T$ is large enough.
\end{proof}

\bibliography{reference}

\end{document}